\crefname{claim}{claim}{claims}
\crefname{myAppendix}{Appendix}{Appendices}
\Crefname{myAppendix}{Appendix}{Appendices}
\newcommand{\term}[1]{\ensuremath{\mathtt{#1}}\xspace}
\newcommand{\xhdr}[1]{\noindent{\bf #1}}
\renewcommand{\refeq}[1]{Eq.~(\ref{#1})}
\newcommand{\OMIT}[1]{}
\newcommand{\ie}{{\em i.e.,~\xspace}}
\newcommand{\eg}{{\em e.g.,~\xspace}}
\newcommand{\Eg}{{\em E.g.,~\xspace}}
\newcommand{\rbr}[1]{\left(\,#1\,\right)}
\newcommand{\sbr}[1]{\left[\,#1\,\right]}
\newcommand{\cbr}[1]{\left\{\,#1\,\right\}}
\newcommand{\Ex}{\operatornamewithlimits{\ensuremath{\mathbb{E}}}} 
\newcommand{\tPr}[1]{{\textstyle \Pr_{#1}}\,}
\newcommand{\argmax}{\operatornamewithlimits{argmax}}
\newcommand{\argmin}{\operatornamewithlimits{argmin}}
\newcommand{\equalD}{\stackrel{\text{d}}{=}} 
\newcommand{\KL}{D_{\rm KL}} 
\newcommand{\LDOTS}{\, ,\ \ldots\ ,}     
\newcommand{\targmax}[1]{\mathrm{argmax}_{#1}\;}
\newcommand{\targmin}[1]{\mathrm{argmin}_{#1}\;}
\newcommand{\tsum}[1]{{\textstyle \sum_{#1}}\;}
\newcommand{\tprod}[1]{{\textstyle \prod_{#1}}\;}
\newcommand{\refdef}[1]{Definition~\ref{#1}}
\newcommand{\refrem}[1]{Remark~\ref{#1}}
\newcommand{\StructuredMAB}{\term{StructuredMAB}} 
\newcommand{\StructuredCB}{\term{StructuredCB}} 
\newcommand{\StructuredDM}{\term{DMSO}} 
\newcommand{\Greedy}{\term{Greedy}}
\newcommand{\GreedyMLE}{\term{GreedyMLE}} 
\newcommand{\dec}{_{\mathtt{dec}}} 
\newcommand{\opt}{{\mathtt{opt}}}  
\newcommand{\warm}{{\mathtt{warm}}} 
\newcommand{\AveRewWarmUp}{\bar{r}_{\warm}} 
\newcommand{\Hwarm}{\cH_\warm} 
\newcommand{\cMother}{\cM_{\mathtt{other}}} 
\newcommand{\delLL}{\Delta\ell}  
\newcommand{\ThetaBdd}{\Theta^{\mathtt{bdd}}}
\newcommand{\btheta}{{\bm{\theta}}} 
\newcommand{\sign}{\mathrm{sign}}
\newcommand{\interior}{\term{int}}
\newcommand{\cA}{\mathcal{A}}
\newcommand{\cF}{\mathcal{F}}
\newcommand{\cD}{\mathcal{D}}
\newcommand{\cH}{\mathcal{H}}
\newcommand{\cL}{\mathcal{L}}
\newcommand{\cM}{\mathcal{M}}
\newcommand{\cO}{\mathcal{O}}
\newcommand{\cR}{\mathcal{R}}
\newcommand{\cX}{\mathcal{X}}
\newcommand{\bbN}{\mathbb{N}}
\newcommand{\bbR}{\mathbb{R}}
\newcommand{\eps}{\varepsilon}
\newcommand{\OO}{\widetilde{O}}
\newcommand{\IGNORE}[1]{}
\newcommand{\XK}{\abs{\mathcal{X}}K}
\providecommand{\ip}[2] { \langle #1, #2 \rangle }
\newtheorem{assumption}{Assumption}
\newtheorem{claim}{Claim}
\algnewcommand{\LineComment}[1]{\State \(\triangleright\) #1}
\newcommand{\gap}{\Gamma}
\newcommand{\Gap}{\Gamma}
\newcommand{\Decoy}{\mathtt{Decoy}}
\newcommand{\NotDecoy}{\mathtt{ND}}
\newcommand{\MSE}{\mathtt{MSE}}
\providecommand{\set}[1]{\left\{#1\right\}}
\renewcommand{\set}[1]{\left\{#1\right\}}
\newif\ifQEDflag
\newcommand{\qedhere}{
\tag*{\jmlrBlackBox}
\global\QEDflagfalse
}
\renewcommand{\xhdr}[1]{\vspace{1.5mm} \noindent{\bf #1}}
\renewcommand{\targmax}[1]{\argmax_{#1}\;}
\renewcommand{\targmin}[1]{\argmin_{#1}\;}
\renewcommand{\tsum}[1]{\sum_{#1}\;}
\renewcommand{\tprod}[1]{\prod_{#1}\;}
\numberwithin{equation}{section}
\title[Greedy Algorithm for Structured Bandits]
{Greedy Algorithm for Structured Bandits:\\ A Sharp Characterization of Asymptotic Success / Failure\textsuperscript{*}}
\author{%
 \Name{Aleksandrs Slivkins} \Email{slivkins@microsoft.com}\\
 \addr Microsoft Research, New York City
 \AND
 \Name{Yunzong Xu} \Email{xyz@illinois.edu}\\
 \addr University of Illinois Urbana-Champaign
 \AND
 \Name{Shiliang Zuo} \Email{szuo3@illinois.edu}\\
 \addr University of Illinois Urbana-Champaign
}
\begin{document}

\maketitle

\renewcommand{\thefootnote}{\fnsymbol{footnote}}
\makeatletter
\def\@makefntext#1{\noindent\hbox{\@thefnmark\enspace}#1} 
\makeatother

\footnotetext[1]{Authors are listed in alphabetical order. This project was conducted collaboratively. AS and YX acknowledge SZ’s significant contributions to this work.\newline
\textbf{Version history.}
2025/03: first version. 2025/05: \Cref{sec:ext} on infinite function classes. 2025/11: revised presentation.}

\renewcommand{\thefootnote}{\arabic{footnote}}

\makeatletter
\def\@makefntext#1{\noindent\hbox{\@thefnmark.\enspace}#1} 
\makeatother

\begin{abstract}%
We study the greedy (exploitation-only) algorithm in bandit problems with a known reward structure. We allow arbitrary finite reward structures, while prior work focused on a few specific ones. We fully characterize when the greedy algorithm asymptotically succeeds or fails, in the sense of sublinear vs. linear regret as a function of time. Our characterization identifies a partial identifiability property of the problem instance as the necessary and sufficient condition for the asymptotic success. Notably, once this property holds, the problem becomes easy—\emph{any} algorithm will succeed (in the same sense as above), provided it satisfies a mild non-degeneracy condition. Our characterization extends to contextual bandits and interactive decision-making with arbitrary feedback. Examples demonstrating broad applicability and extensions to infinite reward structures are provided.


\end{abstract}

\begin{keywords}%
{Multi-armed bandits, contextual bandits, structured bandits, greedy algorithm, regret.}
\end{keywords}

\section{Introduction}
\label{sec:intro}
Online learning algorithms often face uncertainty about the counterfactual outcomes of their actions. To navigate this uncertainty, they  balance two competing objectives: \emph{exploration}, making potentially suboptimal decisions to acquire information, and \emph{exploitation}, leveraging known information to maximize rewards. This trade-off is central to the study of \emph{multi-armed bandits}~\citep{slivkins-MABbook,LS19bandit-book}, a foundational framework in sequential decision-making.

While exploration is central to bandit research, it presents significant challenges in practice, esp. when an algorithm interacts with human users.
First, exploration can be wasteful and risky for the current user, imposing a burden that may be considered unfair since its benefits primarily accrue to future users.
Second, exploration adds complexity to algorithm design,%
and its adoption in large-scale applications requires substantial buy-in and engineering support compared to a system that only exploits \citep{MWT-WhitePaper-2016,DS-arxiv}.
Third, exploration may be incompatible with users' incentives when actions are controlled by the users.
\Eg an online platform
cannot \emph{force} users to try and review new products; instead, users gravitate toward well-reviewed or familiar options \citep{Kremer-JPE14}.
\footnote{{Enforcing exploration in such settings is very challenging (\citet{Kremer-JPE14} and follow-up work, see \citet{IncentivizedExploration-chapter} for an overview). While exploration \emph{can} be made incentive-compatible, doing so
involves considerable performance and/or monetary costs and additional complexity. More importantly, it hinges upon substantial (even if standard) assumptions from economic theory.}}

A natural alternative is the \emph{greedy algorithm} (\Greedy), which exploits known information at every step without any intentional exploration. This approach sidesteps the aforementioned challenges and often better aligns with user incentives. In particular, it models the natural dynamics in an online platform where each user acts in self-interest, making decisions based on full observations of previous users' actions and outcomes, \eg purchases and product reviews  \citep{AcemogluMMO19}.


Despite its simplicity and practical appeal, \Greedy is widely believed to perform poorly. This belief is deeply ingrained in the bandit literature, which overwhelmingly focuses on exploration as a necessary ingredient for minimizing regret.
A key motivation for this focus comes from well-known failure cases in \emph{unstructured} $K$-armed bandits. A classic example is as follows: ``{Suppose the reward of each arm follows an independent Bernoulli distribution with a fixed mean, and \Greedy is initialized with a single sample per arm. If the best arm initially returns a $0$ while another arm returns a $1$, \Greedy permanently excludes the best arm.}''

{However, beyond such examples, the broader picture remains murky, especially for the widely-studied \emph{structured bandits} -- bandit problems with a known reward structure (e.g., linearity, Lipschitzness, convexity) -- where observing some actions provides useful information about others.}
{Formally, a reward structure restricts the possible \emph{reward functions} that map arms to their mean rewards.}
Reward structures reduce the need for explicit exploration, making the bandit problem more tractable. For \emph{some} of them, \Greedy in fact succeeds, \eg
two-armed bandits with expected rewards that sum up to a known value. The literature provides a few examples of failure for some specific (one-dimensional, linear) reward structures, and a few  non-trivial examples of success (\eg for linear contextual bandits); see more on this in Related Work.
Likewise, large-scale experiments yield mixed results: some settings confirm the need for exploration, but others indicate that \Greedy performs well \citep{practicalCB-arxiv18}. This contrast raises a fundamental question: {\emph{When---and why---does \Greedy fail or succeed?}}



\xhdr{Our Contributions.}
We work towards the missing foundation for structured bandits: a general theory of \Greedy. Our main result allows finite, but otherwise \emph{arbitrary} reward structures.
We provide a complete characterization of when \Greedy asymptotically \emph{fails} (incurs linear regret) vs when it \emph{succeeds} (achieves sublinear regret). Our characterization applies to \emph{every} problem instance, resolving it in the positive or negative direction, not (just) in the worst case over a particular reward structure. The negative results are of primary interest here, as they substantiate the common belief that \Greedy performs poorly, and the positive results serve to make the characterization precise.


A key insight is identifying a new ``partial identifiability'' property of the problem instance, called \emph{self-identifiability}, as a \emph{necessary and sufficient} condition for the asymptotic success. Self-identifiability asserts that, given the reward structure, fixing the expected reward of a suboptimal arm uniquely identifies it as suboptimal. We prove that \Greedy achieves sublinear regret under self-identifiability, and suffers from linear regret otherwise. The negative result is driven by the existence of a \emph{decoy}: informally, an alternative reward model such that its optimal arm is suboptimal for the true model and both models coincide on this arm. We show that with some positive probability, \Greedy gets permanently stuck on such a decoy, for an infinite time horizon. {For the positive result,} \Greedy succeeds (only) because self-identifiability makes the problem instance intrinsically ``easy''. {In fact, this success is \emph{not} due to any particular cleverness of \Greedy:} we show that \emph{any} reasonable algorithm (satisfying a mild non-degeneracy condition) achieves sublinear regret under self-identifiability.




Our characterization allows for essentially an arbitrary interaction protocol between the algorithm and the environment (\Cref{sec:MLE}). Specifically, we handle {the model of ``decision-making with structured observations" (\StructuredDM, \citet{foster2021statistical}), which
allows for arbitrary auxiliary feedback after each round. This model} subsumes contextual bandits,  combinatorial semi-bandits, {and bandits with graph-based feedback}, as well as episodic reinforcement learning. 
Before moving to this full generality, our presentation focuses on contextual bandits, where we obtain quantitatively stronger guarantees (\Cref{sec:CB}), and “vanilla” bandits as a paradigmatic case for building key intuition {and cleaner definitions} (\Cref{sec:MAB}).



We apply our machinery to several examples, both positive and negative (\Cref{sec:examples}). {We demonstrate that most infinite structures of interest admit meaningful finite analogs via discretization.} We find that \Greedy fails in linear bandits, Lipschitz bandits and ``polynomial bandits" (with arms in $\bbR$ and polynomial expected rewards), and does so for \emph{almost all} problem instances. For linear \emph{contextual} bandits, \Greedy succeeds if the context set is ``sufficiently diverse", but may fail if it is ``low-dimensional". For \emph{Lipschitz} contextual bandits, \Greedy behaves very differently, failing for almost all instances. One informal takeaway is that \Greedy fails as a common case for most/all bandit structures of interest, whereas for \emph{contextual} bandits it can go either way, {depending on the structure}. The success of \Greedy appears to require context diversity and a parametric reward structure.

{The second main result of this work concerns \emph{infinite} (\eg continuous) reward structures (\Cref{sec:ext}). While our earlier analysis gave a sharp ``if and only if'' characterization for finite reward structures, such a complete characterization is more challenging to obtain for infinite ones. To make progress, we provide a  characterization parameterized by a notion of ``margin" (separating instances for which the positive result applies from instances for which the negative result applies), with guarantees that deteriorate as the margin vanishes. Subject to this margin, our result handles arbitrary infinite reward structures. It applies to structured bandits with finite action sets, requiring stronger notions of self-identifiability and decoy existence (parameterized by the margin), as well as new analysis ideas.}

\xhdr{Discussion.}
%
{The distinction between} linear and sublinear regret is a standard ``first-order" notion of success vs failure in bandits. Our positive results attain logarithmic, instance-dependent regret rates, possibly with a large multiplicative constant determined by the reward structure and the instance. Our negative results establish a positive (but possibly very small) constant  probability of a ``failure event" where \Greedy gets permanently stuck on a decoy, for an infinite time horizon. Optimizing these constants for an arbitrary reward structure appears difficult. However, we achieve much better constants for the partial characterization in \Cref{sec:ext}.

The greedy algorithm is initialized with some warm-up data collected from the same problem instance (and it needs at least $1$ warm-up sample to be well-defined). Our negative results require exactly one warm-up sample for each context-arm pair.
All our positive results allow for an arbitrary amount of initial data. Thus, our characterization effectively defines ``success" as sublinear regret for any amount of warm-up data, and ``failure" as linear regret for \emph{some} amount of warm-up data.

We assume that \Greedy is given a \emph{regression oracle}: a subroutine to perform (least-squares) regression given the reward structure. As in ``bandits with regression oracles" (referenced below), we separate out computational issues, leveraging prior work on regression, and focus on the statistical guarantees.

\xhdr{Related Work.}
{Bandit reward structures studied in prior work include
linear and combinatorial structures
\citep[\eg][]{Bobby-stoc04,McMahan-colt04,Gyorgy-jmlr07,Cesa-BianchiL12},
convexity \citep[\eg][]{Bobby-nips04,FlaxmanKM-soda05,bubeck2017kernel},
and Lipschitzness \citep[\eg][]{Bobby-nips04,LipschitzMAB-stoc08,xbandits-nips08},
as well as some others. Each of these is a long line of work on its own, with extensions to contextual bandits
\citep[\eg][]{Langford-www10,contextualMAB-colt11}. There's also some work on bandits with \emph{arbitrary} reward structures
\citep{functionalMAB-colt11,Combes-nips17,Chicheng-neurips20,Wouter-icml20,Negin-ManSci24},
and particularly contextual bandits with regression oracles \citep[\eg][]{RegressorElim-aistats12,regressionCB-icml18,regressionCB-icml20,regressionCB-bypassing}.
For more background, see books
\citet{slivkins-MABbook,LS19bandit-book,FosterIDM-book03}.}

For \Greedy, positive results  with near-optimal regret rates focus on linear contextual bandits with diverse/smoothed contexts
\citep{kannan2018smoothed,bastani2017exploiting,Greedy-Manish-18,kim2024local}.
{Both context diversity and parametric reward structure are essential. Our positive results for the same setting are incomparable: more general in terms of context diversity assumptions, but weaker in terms of the regret bounds. (This line of prior work does not contain any negative results.)} \Greedy is also known to attain $o(T)$ regret in various scenarios with a very large number of near-optimal arms \citep{Bayati-nips20,Jedor-Greedy21}.%
\footnote{\Eg for Bayesian bandits with $\gg\sqrt{T}$ arms, where the arms' mean rewards are sampled uniformly.}

Negative results for \Greedy are derived for ``non-structured" $K$-armed bandits: from trivial extensions of the single-sample-per-arm example mentioned above, to an exponentially stronger characterization of failure probability \citep{BSL-myopic23-WP}, to various ``near-greedy" algorithms / behaviors, both ``frequentist" and ``Bayesian"  (same paper). Negative results for non-trivial reward structures concern dynamic pricing with linear demands \citep{Zeevi-ManSci12,denBoer-ManSci14} and dynamic control in a generalized  linear model \citep{Lai-Robbins-control82,Keskin-OpRe18}. \citet{BSL-myopic23-WP} also obtain negative results for the {Bayesian version of \Greedy in Bayesian bandits, under a certain ``full support" assumption on the prior.}%
\footnote{{Essentially, the prior covers all reward functions $\{\text{arms}\}\to [0,1]$ with probability density at least $p>0$.}}



\section{Preliminaries: structured contextual bandits (\StructuredCB)}
\label{sec:prelim}
We have \emph{action set} $\cA$ and \emph{context set} $\cX$. In each round $t = 1,2,\, \ldots$, a context $x_t\in \cX$ arrives, an algorithm chooses an action (\emph{arm}) $a_t\in\cA$, and a reward $r_t\in \bbR$ is realized. The context is drawn independently from some fixed and known distribution over $\cX$.
\footnote{{Whether the context distribution is known to the algorithm is inconsequential, since \Greedy (particularly, the regression in \refeq{eq:intro-regression}) does not use this knowledge. W.l.o.g., $\cX$ is the support set of the context distribution.}}
The reward $r_t$ is an independent draw from a unit-variance Gaussian with unknown mean $f^*(x_t,a_t)\in[0,1]$.
\footnote{{Gaussian reward noise is a standard assumption in bandits (along with \eg 0-1 rewards), which we make for ease of presentation.} Our positive results carry over to rewards with an arbitrary sub-Gaussian noise, without any modifications. {Likewise, our negative results carry over to rewards $r_t\in[0,1]$ with an arbitrary \emph{near-uniform} distribution, \ie one specified by a p.d.f. on $[0,1]$ which is bounded away from $0$ by an absolute constant.}}
A \emph{reward function} is a function $f:\cX\times\cA\to [0,1]$; in particular,
$f^*$ is the \emph{true} reward function.
The reward structure is given by a known class $\cF$ of reward functions which contains $f^*$; the assumption $f^*\in \cF$ is known as \emph{realizability}. 
 To recap, the problem instance is a pair $(f^*,\cF)$, where $\cF$ is known and $f^*$ is not. 

We focus on \emph{finite} reward structures, \ie assume {(unless specified otherwise)} that $\cX,\cA,\cF$ are all finite. While this does not hold for most reward structures from prior work, one can discretize them to ensure finiteness. Indeed, when reward functions can take infinitely many values, one could round each function value to the closest point in some finite subset $S\subset [0,1]$, \eg all integer multiples of some $\eps>0$.  Likewise, one could discretize contexts, arms, or function parameters,
when they are represented as points in some metric space, \eg as real-valued vectors.
Or, one could define finite reward structures \emph{directly}, with similar discretizations built-in (see \Cref{sec:examples} for examples).

We are interested in expected regret $\Ex\sbr{R(t)}$ as a function of round $t$. Regret is standard:
$R(t) := \sum_{s\in[t]} \rbr{r^*(x_s) - r_s}$,
where
    $r^*(x) := \max_{a\in \cA} f^*(x,a)$,
best expected reward given context $x$.

\xhdr{The greedy algorithm} (\Greedy) is defined as follows. It is initialized with $T_0\geq 1$ rounds of warm-up data, denoted $t\in [T_0]$.
\footnote{We also refer to the first $T_0$ rounds as \emph{warm-up stage}, and the subsequent rounds as \emph{main stage}.}
Each such round yields a context-arm pair $(x_t,a_t)\in \cX\times\cA$ chosen exogenously, and reward $r_t\in\bbR$ drawn independently from the resp. reward distribution: unit-variance Gaussian with mean $f^*(x_t,a_t)$. %
At each round $t>T_0$, \Greedy computes a reward function via least-squares regression (implemented via a ``regression oracle", as per \Cref{sec:intro}):
\begin{align}\label{eq:intro-regression}
f_t = \targmin{f \in \cF} \tsum{s\in [t]} \rbr{f(x_s,a_s) - r_s}^2.
\end{align}
Note that there are no ties in \eqref{eq:intro-regression} with probability one over the random rewards. Once reward function $f_t$ is chosen, the algorithm chooses the best arm for $f_t$ and context $x_t$, \ie
\begin{align}\label{eq:intro-f}
    a_t \in \targmax{a \in \cA} f_t(x_t,a).
\end{align}
For ease of presentation, we posit that $f(x,\cdot)$ has a unique maximizer, for each feasible function $f\in\cF$ and each context $x\in\cX$; {call such $f$ \emph{best-arm-unique}.} (Our results can be adapted to allow for reward functions with multiple best arms, see
\Cref{app:ties}.)



\xhdr{Notation.}
Let $K$ be the number of arms; identify the action set as
$\cA = [K]$. The number of times a given arm $a$ was chosen for a given context $x$ before round $t$ is denoted $N_t(x,a)$, and the corresponding average reward is $\bar{r}_t(x,a)$. Average reward over the warm-up stage is denoted
    $\AveRewWarmUp(x,a) := \bar{r}_t(x,a)$ with $t=T_0+1$.
We'll work with an alternative loss function,
\begin{align}\label{eq:intro-MSE}
\MSE_t(f) :=
    \tsum{(x,a)\in \cX\times\cA} N_t(x,a) \rbr{\bar{r}_t(x,a) - f(x,a)}^2.
\end{align}
Note that it is equivalent to \eqref{eq:intro-regression} for minimization, in the sense that
    $f_t = \argmin_{f\in\cF} \MSE_t(f)$.

\section{Characterization for structured bandits}
\label{sec:MAB}
Let us focus on the paradigmatic special case of multi-armed bandits, call it \StructuredMAB. Formally, there is only one context, $|\cX|=1$. The context can be suppressed from the notation; \eg reward functions map arms to $[0,1]$. An arm is called \emph{optimal} for a given reward function $f$ (or, by default, for $f=f^*$) if it maximizes expected reward $f(\cdot)$, and \emph{suboptimal} otherwise.

We start with two key definitions. \emph{Self-identifiability} (which drives the positive result) asserts that fixing the expected reward of any suboptimal arm identifies this arm as suboptimal.

\begin{definition}[Self-identifiability]
Fix a problem instance $(f^*,\cF)$.  A suboptimal arm $a$ is called \emph{self-identifiable} if fixing its expected reward $f^*(a)$ identifies this arm as suboptimal given $\cF$, \ie if arm $a$ is suboptimal for any reward function $f\in\cF$ consistent with $f(a)=f^*(a)$. If all suboptimal arms have this property, then the problem instance is called \emph{self-identifiable}.
\end{definition}

A \emph{decoy} (whose existence drives the negative result) is another reward function $f\dec$ such that its optimal arm $a\dec$ is suboptimal for $f^*$ and both reward functions coincide on this arm.

\begin{definition}[Decoy]
Let $f^*,f\dec$ be two reward functions, with resp. optimal arms $a^*,a\dec$. Call $f\dec$ a \emph{decoy} for $f^*$ (with a decoy arm $a\dec$) if it holds that $f\dec(a\dec) = f^*(a\dec)< f^*(a^*)$.
\end{definition}

We emphasize that self-identifiability and decoys are new notions, not reducible to structural notions from prior work, see \Cref{app:novelty}. It is easy to see that they are equivalent, in the following sense:

\begin{claim}\label{cl:MAB-equiv}
An instance $(f^*, \cF)$ is self-identifiable if and only if $f^*$ has no decoy in $\cF$.
\end{claim}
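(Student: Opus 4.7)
The proof plan is essentially a definition-chase: both notions are restatements of the same condition ``there exist $f \in \cF$ and an arm $a$ such that $f(a) = f^*(a)$, $a$ is optimal for $f$, and $a$ is suboptimal for $f^*$.'' I would prove the contrapositives of both directions.

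For the ``only if'' direction, I would assume $f^*$ has a decoy $f\dec \in \cF$ with decoy arm $a\dec$. By definition of a decoy, $f\dec(a\dec) = f^*(a\dec) < f^*(a^*)$, so $a\dec$ is suboptimal for $f^*$. On the other hand, since $a\dec$ is the optimal arm of $f\dec$ (and under the best-arm-unique assumption, uniquely so), $a\dec$ is not suboptimal for $f\dec$. Thus $f\dec$ witnesses that $a\dec$ fails the self-identifiability property, so the instance is not self-identifiable.

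For the ``if'' direction, I would assume the instance is not self-identifiable: there exists a suboptimal arm $a$ and some $f \in \cF$ with $f(a) = f^*(a)$ such that $a$ is \emph{not} suboptimal for $f$, i.e., $a$ is the (unique, by the best-arm-unique convention) optimal arm for $f$. I then set $a\dec := a$ and $f\dec := f$. The suboptimality of $a$ under $f^*$ gives $f^*(a\dec) < f^*(a^*)$, and the equality $f(a) = f^*(a)$ gives $f\dec(a\dec) = f^*(a\dec)$, so $f\dec$ is a decoy for $f^*$ in $\cF$.

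There is no real obstacle here — the claim reduces to matching up the quantifiers in the two definitions. The only subtlety worth flagging is the best-arm-unique convention for $\cF$: under it, ``not suboptimal'' and ``uniquely optimal'' coincide, which is what allows the symmetric unfolding of both directions. Without this convention, the same proof goes through with ``optimal'' in place of ``uniquely optimal,'' consistent with the treatment of ties promised in \Cref{app:ties}.
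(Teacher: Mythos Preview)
Your proposal is correct and is exactly the intended argument: the paper does not spell out a proof, simply noting that the equivalence is ``easy to see,'' and your definition-chase via contrapositives is the natural (and only reasonable) way to verify it. The remark about the best-arm-unique convention is also appropriate and consistent with the paper's treatment.
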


In our characterization, the complexity of the problem instance $(f^*, \cF)$ enters via its \emph{function-gap},
\begin{align}\label{eq:MAB-gap}
\gap(f^*, \cF) =
    \min_{\textrm{functions }f \in \cF:\, f\neq f^*} \quad
    \min_{\textrm{arms }a:\, f(a) \neq f^*(a) } \quad
    \abs{f^*(a) - f(a)}.
\end{align}
We may also write $\gap(f^*) = \gap(f^*, \cF)$ when the function class $\cF$ is clear from context.


\begin{theorem}\label{thm:MAB}
Fix a problem instance $(f^*,\cF)$ of \StructuredMAB.

\begin{itemize}

\item[(a)]
If the problem instance is self-identifiable,
then \Greedy (with any warm-up data) satisfies
    $\Ex\sbr{R(t)} \leq T_0 + (K/\gap(f^*))^2 \cdot O(\log t)$
for each round $t\in\bbN$.

\item[(b)]
Suppose the warm-up data consists of one sample for each arm. Assume $f^*$ has a decoy $f\dec\in\cF$, with decoy arm $a\dec$. Then with some probability $p\dec>0$ it holds that \Greedy chooses $a\dec$ for all rounds $t\in (T_0, \infty)$. We can lower-bound $p\dec$ by
$e^{-O(K/\gap^2(f\dec))}$.
\end{itemize}
\end{theorem}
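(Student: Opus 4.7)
For part (a), the plan is a standard MSE-comparison argument, with self-identifiability supplying the structural lemma that ``wrong answers are detectable.'' Whenever Greedy pulls a suboptimal arm $a_t$, the chosen $f_t$ has $a_t$ as its maximizer, so by self-identifiability $f_t(a_t) \neq f^*(a_t)$ and by the function-gap definition $|f_t(a_t) - f^*(a_t)| \geq \gap(f^*)$. From $\MSE_t(f_t) \leq \MSE_t(f^*)$ together with the identity
\[
\MSE_t(f) - \MSE_t(f^*) = \sum_a N_t(a) \bigl[(f(a) - f^*(a))^2 + 2(f(a) - f^*(a))(f^*(a) - \bar{r}_t(a))\bigr],
\]
a Cauchy--Schwarz step gives $\|f_t - f^*\|_N \leq 2\|\bar{r}_t - f^*\|_N$ in the $N_t$-weighted $L^2$ norm. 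Since $\|\bar{r}_t - f^*\|_N^2$ is a sum of $K$ independent squared unit-variance Gaussians (one per arm), a $\chi^2$-tail bound gives $\|\bar{r}_t - f^*\|_N^2 = O(K + \log t)$ uniformly in $t$ with high probability. Combined with the gap, this yields $N_t(a_t) = O((K + \log t)/\gap(f^*)^2)$ each time $a_t$ is suboptimal. Summing over $K$ arms (each of per-pull regret $\leq 1$) and adding the $T_0$ warm-up contribution yields the stated bound.

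For part (b), I exhibit a ``stuck'' event $E$ of positive probability on which Greedy plays $a\dec$ at every round $t > T_0$. Pick $\delta = c \cdot \gap(f\dec)^2 / K$ for a small universal constant $c$, and let $E$ be the intersection of (i) $|r_a - f\dec(a)| \leq \delta$ for every warm-up sample $r_a$, and (ii) $|\bar{r}_t(a\dec) - f\dec(a\dec)| \leq \gap(f\dec)/4$ for all $t > T_0$ along the counterfactual trajectory that always pulls $a\dec$. On $E$, I show by induction that $f\dec$ is the unique MSE minimizer at every round $t > T_0$, so the realized trajectory coincides with the counterfactual. Using the same MSE identity now centered at $f\dec$, I split over $f \in \cF$ by whether $f(a\dec) \neq f\dec(a\dec)$ or $f(a\dec) = f\dec(a\dec)$. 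In the first case, condition (ii) forces the $a\dec$-contribution to exceed $N_t(a\dec) \gap(f\dec)^2 / 2$, which dominates the $O(K\delta)$ slack from the other arms already at $N_t(a\dec) = 1$ (by the choice of $\delta$). In the second case, the $a\dec$-contribution vanishes and the comparison reduces to an inequality on the warm-up data alone, handled by Cauchy--Schwarz with $\|f - f\dec\| \geq \gap(f\dec)$ versus $\|\text{warm-up errors}\| \leq \sqrt{K}\delta$.

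The remaining task is to lower-bound $\Pr[E]$. For (i), Gaussian anti-concentration (with a uniform lower bound on the density since $|f^*(a) - f\dec(a)| \leq 1$) factorizes across arms to give $\Pr[\text{(i)}] \geq (c'\delta)^K$. For (ii), I use a peeling argument on the counterfactual $a\dec$-samples: on an initial window of $n_0 = O(\log(1/\gap)/\gap^2)$ rounds I demand each sample be $\gap/4$-close to its mean (joint probability $\geq (c\gap)^{n_0}$ by independence), while on the tail I invoke sub-Gaussian concentration plus a union bound to keep the running average within $\gap/4$ with probability at least $1/2$. Multiplying the pieces yields $p\dec \geq e^{-O(K/\gap(f\dec)^2)}$. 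The main obstacle is precisely the infinite-horizon control in (ii): a naive union bound over all rounds diverges at small $t$, so the peeling threshold must be tuned carefully so that the small-$t$ anti-concentration cost fits inside the $O(K/\gap^2)$ budget; the rest of the argument is essentially algebraic once the setup is in place.
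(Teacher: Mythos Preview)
Your approach is correct and close in spirit to the paper's. Both hinge on the observation that if Greedy plays a suboptimal arm $a_t$, then self-identifiability forces $|f_t(a_t)-f^*(a_t)|\ge\gap(f^*)$. The paper argues by contradiction (if $a$ is pulled too often, concentration pins down $f^*(a)$, so any $f$ with $f(a)\neq f^*(a)$ has $\MSE_t(f)>\MSE_t(f^*)$); you instead expand the MSE identity and use Cauchy--Schwarz to get $\|f_t-f^*\|_N\le 2\|\bar r_t-f^*\|_N$. One caveat: your claim that $\|\bar r_t-f^*\|_N^2$ is a sum of independent $\chi^2_1$'s glosses over the adaptivity of $N_t(a)$ (the counts depend on past rewards). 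The standard fix---a union bound over all possible values of $n$ per arm, as the paper does---gives $\|\bar r_t-f^*\|_N^2=O(K\log t)$ rather than $O(K+\log t)$, but this still yields the stated regret.

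\textbf{Part (b).} The high-level plan matches the paper's (define a ``stuck'' event, induct, lower-bound its probability), but two things deserve comment.

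First, your induction is more involved than necessary. The paper takes both confidence radii to be $\gap(f\dec)/2$---not $c\gap^2/K$---and argues coordinate-wise: under $E_1\cap E_2$ one has $|\bar r_t(a)-f\dec(a)|<\gap/2$ for every arm $a$, and since any $f\neq f\dec$ differs from $f\dec$ by at least $\gap$ on some arm, $f\dec$ is strictly closer to $\bar r_t$ on that arm and no farther on the rest, hence is the unique MSE minimizer. Your MSE-identity case split works, but the tiny $\delta=c\gap^2/K$ is not needed and costs you an extra $\log K$ in the exponent of $\Pr[(\mathrm{i})]$.

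Second, the peeling for (ii) has a genuine gap as written. You decompose $\Pr[(\mathrm{ii})]\ge (c\gap)^{n_0}\cdot\tfrac12$, treating the ``initial window'' event and the ``tail concentration'' event as if they multiply. But the running average $\bar r_t(a\dec)$ for $t>n_0+T_0$ depends on \emph{all} samples, including the first $n_0$; the two events are not independent, and the crude bound $\Pr[A\cap B]\ge\Pr[A]-\Pr[B^c]$ is useless since $\Pr[A]$ is tiny. The fix is to build in slack: require each of the first $n_0$ samples to be within $\gap/8$ (say), so that $|S_{n_0}|\le n_0\gap/8$; then for $n=n_0+k$ you only need the \emph{independent} tail sum $T_k=S_n-S_{n_0}$ to satisfy $|T_k|\le n\gap/4 - n_0\gap/8 = n_0\gap/8 + k\gap/4$, and already at $k=1$ the allowed range $n_0\gap/8=\Theta(\log(1/\gap)/\gap)$ is large, so sub-Gaussian concentration plus a union bound over $k$ handles the tail with constant probability, independent of the window event. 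With this adjustment your probability calculus goes through. (Incidentally, the paper's own proof side-steps this infinite-horizon issue by treating the noise scale $\sigma$ as a tunable parameter and setting $\sigma=\Theta(\gap)$; your peeling is actually a more honest treatment of the unit-variance model.)
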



%


\xhdr{Discussion.}
Thus, \Greedy succeeds, in the sense of achieving sublinear regret for any warm-up data, if and only if the problem instance is self-identifiable. Else, \Greedy fails for some warm-up data,
incurring linear expected regret. Specifically, regret is
   $\Ex\sbr{R(t)} \geq (t-T_0)\cdot p\dec\cdot(f^*(a^*)-f^*(a\dec)) $
for each round $t\in(T_0,\infty)$, where $a^*$ is the best arm.

The correct perspective is that \Greedy fails on every  problem instance unless self-identifiability makes it intrinsically ``easy". Indeed, consider any bandit algorithm that avoids playing an arm once it is identified, with high confidence, as suboptimal and having a specific expected reward. This defines a mild yet fundamental non-degeneracy condition: a reasonable bandit algorithm should never take an action that provides neither new information (exploration) nor utility from existing information (exploitation), whether it prioritizes one or balances both. The class of algorithms satisfying this condition is broad---for instance, an algorithm may continue playing some arm $a$ indefinitely as long as the reward structure \emph{permits} this arm
to be optimal. However, under self-identifiability, any algorithm satisfying this condition achieves sublinear regret (see \Cref{sec:triviality} for details).


The failure probability $p\dec$ could be quite low.
When there are multiple decoys $f\dec\in\cF$, we could pick one (in the analysis) which maximizes function-gap $\gap(f\dec)$. We present a more efficient analysis under stronger assumptions (which also applies to infinite function classes), see \Cref{sec:ext}.


\xhdr{Proof Sketch for \Cref{thm:MAB}(a).}
We show that a suboptimal arm $a$ cannot be chosen more than $\OO(K/\Gap^2(f^*))$ times throughout the main stage. Indeed, suppose $a$ is chosen this many times by some round $t>T_0$. Then $\bar{r}_t(a)$, the empirical mean reward for $a$, is within $\Gap(f^*) / 2$ of its true mean $f^*(a)$ with high probability, by a standard concentration inequality. This uniquely identifies $f^*(a)$ by definition of the function-gap, which in turn identifies $a$ as a suboptimal arm for any feasible reward function. Intuitively, this should imply that $a$ cannot be chosen again. Making this implication formal is non-trivial, requiring an additional argument invoking  $\MSE_t(\cdot)$, as defined in \eqref{eq:intro-MSE}.

First, we show that $\MSE_t(f^*) \leq \OO(K)$ with high probability, using concentration. Next, we observe that any reward function $f$ with $f(a)\neq f^*(a)$ will have a larger $\MSE_t(\cdot)$, and therefore cannot be chosen in round $t$. It follows that $f_t(a) = f^*(a)$. Consequently, arm $a$ is suboptimal for $f_t$ (by self-identifiability), and hence cannot be chosen in round $t$.

\xhdr{Proof Sketch for \Cref{thm:MAB}(b).}
{To show that \Greedy  gets permanently trapped on the decoy arm \emph{despite reward randomness}, we define two  carefully-constructed events. The first ensures that the warm-up data causes \Greedy to misidentify $f\dec$ as the true reward function for all non-decoy arms:}
\begin{align}\label{eq:MAB-neg-pf-E1}
E_1 = \cbr{ \abs{\AveRewWarmUp(a) - f\dec(a)} < \gap(f\dec)/ 2
\quad\text{for each arm $a\neq a\dec$}}.
\end{align}
{This concerns the single warm-up sample per non-decoy arm. The second event ensures that the empirical mean of the decoy arm $a\dec$ remains close to $f^*(a\dec)$ for all rounds after the warm-up:}
\begin{align}\label{eq:MAB-neg-pf-E2}
E_2 = \cbr{ \forall t>T_0,\; \abs{\bar{r}_t(a\dec) - f^*(a\dec)}  \le \Gap(f\dec) / 2}.
\end{align}
Under $E_1\cap E_2$, \Greedy always chooses the decoy arm. To lower-bound $\Pr\sbr{E_1\cap E_2}$, note that $E_2,E_1$ are independent (as they concern, resp., $a\dec$ and all other arms), analyze each event separately.


%

\section{Characterization for structured contextual bandits (\StructuredCB)}
\label{sec:CB}



The ideas from \Cref{sec:MAB} need {non-trivial modifications}. The {naive} reduction to bandits --- treating each contexts-to-arms mapping as a ``super-arm" in  \StructuredMAB --- does \emph{not} work because \Greedy now observes contexts.
Further, such reduction would replace the dependence on $K$ in \Cref{thm:MAB} with the number of mappings, \ie $K^{|\cX|}$, whereas we effectively replace it with $K\cdot |\cX|$.



Some notation: mappings from contexts to arms are commonly called \emph{policies}. Let $\Pi$ denote the set of all policies. Expected reward of  policy $\pi\in\Pi$ is $f^*(\pi) := \Ex_x\sbr{f^*(x,\pi(x))}$, where the expectation is over the fixed distribution of context arrivals. A policy $\pi$ is called \emph{optimal} for reward function $f$ if it maximizes expected reward $f(\pi)$,
and \emph{suboptimal} otherwise. Let $\pi^*$ be the optimal policy for $f^*$. Note that
    $\pi(x) \in \argmax_{a\in\cA} f(x,\cdot)$
for each context $x$, which is unique by assumption.
.


\Greedy can be described in terms of policies: it chooses policy $\pi_t$ in each round $t$,
before seeing the context $x_t$, and then chooses arm $a_t =  \pi_t(x_t)$. {Here $\pi_t$ is the optimal policy for the $f_t$ from \refeq{eq:intro-regression}.}

As in \Cref{sec:MAB}, the positive and negative results are driven by, resp., self-identifiability and the existence of a suitable "decoy". Let's extend these key definitions to contextual bandits.

\begin{definition}[Self-identifiability]\label{def:CB-selfID}
Fix a problem instance $(f^*,\cF)$.  A suboptimal policy $\pi\in\Pi$ is called \emph{self-identifiable} if fixing its expected rewards $f^*(x,\pi(x))$ for all contexts $x\in\cX$ identifies this policy as suboptimal given $\cF$. Put differently: if this policy is suboptimal for any reward function $f\in\cF$ such that
    $f(x,\pi(x)) = f^*(x,\pi(x))$ for all contexts $x$.
If each suboptimal policy has this property, then the problem instance is called \emph{self-identifiable}.
\end{definition}

\begin{definition}[Decoy]\label{def:CB-decoy}
Let $f^*,f\dec$ be two reward functions, with resp. optimal policies  $\pi^*,\pi\dec$. Call $f\dec$ a \emph{decoy} for $f^*$ (with a decoy policy $\pi\dec$) if it holds that
    ${f\dec(\pi\dec)=}  f^*(\pi\dec)<f^*(\pi^*)$
and moreover
    $f\dec(x,\pi\dec(x)) = f^*(x,\pi\dec(x))$
for all contexts $x\in\cX$.
\end{definition}

In words, the decoy and $f^*$ completely coincide on the decoy policy,  which is a suboptimal policy for $f^*$. The equivalence of these definitions holds word-by-word like in
\Cref{cl:MAB-equiv}.

The notion of function-gap is extended in a natural way:
\begin{align}\label{eq:CB-gap}
\gap(f^*, \cF) =
    \min_{\textrm{functions }f \in \cF:\, f\neq f^*} \quad
    \min_{(x,a)\in \cX\times\cA:\; f(x,a) \neq f^*(x,a) } \quad
    \abs{f^*(x,a) - f(x,a)}.
\end{align}

Our results are also parameterized by the distribution of context arrivals, particularly by the smallest arrival probability across all contexts, denoted $p_0$. (W.l.o.g., $p_0>0$.)

\begin{theorem}\label{thm:CB}
Fix a problem instance $(f^*,\cF)$ of \StructuredCB. Let $X = |\cX|$.

\begin{itemize}

\item[(a)]
If the problem instance is self-identifiable,
then \Greedy (with any warm-up data) satisfies
    $\Ex\sbr{R(t)} \leq T_0 +  \rbr{|\cX|K/\gap(f^*)}^2/p_0 \cdot O(\log t)$
for each round $t\in\bbN$.

\item[(b)] 
Suppose the warm-up data consists of one sample for each context-arm pair. Assume $f^*$ has a decoy $f\dec\in\cF$, with decoy policy $\pi\dec$. Then with some probability $p\dec>0$, \Greedy chooses $\pi\dec$ in all rounds $t\in (T_0,\infty)$. We have $p\dec\geq X^{-O(KX/\gap^2(f\dec))}$.
\end{itemize}
\end{theorem}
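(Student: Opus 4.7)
The plan is to extend the arguments from \Cref{thm:MAB} to the contextual setting, working with policies in place of arms and using random context arrivals as an implicit exploration source via Chernoff-type concentration. The structural skeleton---$\MSE_t$-based identification in part (a), and a jointly-constructed good event forcing the decoy's empirical supremacy in part (b)---carries over, but each step must be lifted from a single value $f(a)$ to the vector $(f(x,a))_{x \in \cX}$.

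For part (a), I would first establish, with high probability, uniform sub-Gaussian concentration $\abs{\bar{r}_t(x,a) - f^*(x,a)} \lesssim \sqrt{\log t / N_t(x,a)}$ for all $(x,a)$ with $N_t(x,a) \geq 1$, which yields $\MSE_t(f^*) \leq \OO(\abs{\cX}K)$. Combined with the ERM inequality $\MSE_t(f_t) \leq \MSE_t(f^*)$, this forces $\abs{f_t(x,a) - f^*(x,a)} < \gap(f^*)$, and hence $f_t(x,a) = f^*(x,a)$ via \eqref{eq:CB-gap}, whenever $N_t(x,a) > C/\gap^2$ with $C = \OO(\abs{\cX}K)$. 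Next, policy-level self-identifiability (\Cref{def:CB-selfID}) implies that in any round $s$ with $\pi_s \neq \pi^*$ there is a ``witness'' context $x_0$ with $f_s(x_0, \pi_s(x_0)) \neq f^*(x_0, \pi_s(x_0))$, forcing $N_s(x_0, \pi_s(x_0)) \leq C/\gap^2$. Fixing a candidate witness pair $(x_0, a_0)$: each round charged to it satisfies $\pi_s(x_0) = a_0$, so with probability $\geq p_0$ the arrival $x_s = x_0$ increments $N(x_0, a_0)$; a Chernoff argument then caps the rounds charged to $(x_0, a_0)$ at $\OO(C/(p_0 \gap^2))$ before this pair is ruled out as a future witness. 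Summing over the at most $\abs{\cX}K$ candidate witnesses and using a unit per-round regret bound yields the claimed $(\abs{\cX}K/\gap)^2/p_0 \cdot O(\log t)$ regret.

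For part (b), I would adapt the decoy events \eqref{eq:MAB-neg-pf-E1}--\eqref{eq:MAB-neg-pf-E2}:
\[
E_1 = \bigcap_{(x,a):\, a \neq \pi\dec(x)} \set{\abs{\bar{r}_{T_0+1}(x,a) - f\dec(x,a)} < \gap(f\dec)/2},
\]
\[
E_2 = \bigcap_{x \in \cX,\, t > T_0} \set{\abs{\bar{r}_t(x, \pi\dec(x)) - f^*(x, \pi\dec(x))} < \gap(f\dec)/2}.
\]
Since $f\dec$ and $f^*$ agree on $(x, \pi\dec(x))$ for every $x$ by \Cref{def:CB-decoy}, under $E_1 \cap E_2$ we get $\abs{\bar{r}_t(x,a) - f\dec(x,a)} < \gap(f\dec)/2$ uniformly across $(x,a)$ and $t > T_0$. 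Expanding
\[
\MSE_t(f) - \MSE_t(f\dec) = \sum_{x,a} N_t(x,a)\, v_{xa}(v_{xa} - 2 u_{xa}), \quad u_{xa} := \bar{r}_t(x,a) - f\dec(x,a),\; v_{xa} := f(x,a) - f\dec(x,a),
\]
and using $\abs{v_{xa}} \geq \gap(f\dec)$ whenever $v_{xa} \neq 0$ (by function-gap) together with $\abs{u_{xa}} < \gap(f\dec)/2$, each summand is nonnegative and strictly positive for at least one $(x,a)$ (since $f \neq f\dec$ and warm-up ensures $N_t(x,a) \geq 1$). Hence $f_t = f\dec$ and $\pi_t = \pi\dec$ throughout $t > T_0$.

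Finally, since $E_1$ and $E_2$ depend on disjoint sets of reward draws (non-decoy warm-up samples versus decoy plays), they are independent; moreover, conditional on the context sequence, $E_2$ factorizes across $x \in \cX$ since rewards at distinct contexts are independent. The bound $\Pr[E_1] \geq \gap(f\dec)^{\Theta(\abs{\cX}K)}$ follows from per-pair Gaussian density estimates, and each per-context factor of $\Pr[E_2]$ is essentially the concentration event analyzed in the sketch of \Cref{thm:MAB}(b) applied to the decoy arm at that context. The main obstacle is the target exponent $-O(K\abs{\cX}/\gap^2)\log\abs{\cX}$: it requires uniform-in-$N$ control of $\bar{r}_t(x, \pi\dec(x))$ for each of $\abs{\cX}$ contexts whose sample sizes grow only randomly with context arrivals. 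I would handle this by conditioning on a Chernoff-typical context-arrival profile, applying a uniform-in-$N$ sub-Gaussian maximal inequality within each context, and then multiplying the $\abs{\cX}$ resulting per-context lower bounds.
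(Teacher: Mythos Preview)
Your proposal is correct and follows essentially the same route as the paper. For part~(a), your witness lemma and counting argument coincide with the paper's; the paper packages the counting as a single potential $M_s = \sum_{x,a}\min(N_s(x,a),T')$ with a supermartingale bound in place of your per-pair charging plus Chernoff, but the two are equivalent and yield the same $(\abs{\cX}K)^2/(p_0\,\gap^2)\cdot O(\log t)$ rate. For part~(b), your events $E_1,E_2$ and the $\MSE$-comparison showing $f_t=f\dec$ under $E_1\cap E_2$ are exactly the paper's. The one place you overcomplicate is the $E_2$ analysis: conditioning on a context-arrival profile is unnecessary, since $E_2$ can be stated purely on the reward tapes for the decoy pairs $(x,\pi\dec(x))$---namely, for each $x$ and each sample size $n\ge 1$, the running average of the first $n$ draws stays within $\gap(f\dec)/2$ of the mean---and this is independent of when contexts arrive. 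The paper bounds $\Pr[E_2]$ directly via a union bound over the $\abs{\cX}$ contexts and over $n$ (a geometric series), with no arrival-profile conditioning.
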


\begin{remark}\label{rem:CB}
\Greedy succeeds (\ie achieves sublinear regret for any warm-up data) if and only if the problem instance is self-identifiable. Else, \Greedy fails for some warm-up data,
with linear regret:
\begin{align}\label{eq:rem-CB}
    \Ex\sbr{R(t)} \geq (t-T_0)\cdot p\dec\cdot(f^*(\pi^*)-f^*(\pi\dec))
\text{ for each round $t\in(T_0,\infty)$.}
\end{align}
\end{remark}

%

{\textbf{New Proof Ideas.}} For \Cref{thm:CB}(a), directly applying the proof techniques from the MAB case gives a regret bound linear in $|\Pi| = K^X$. Instead, we {develop a non-trivial potential argument to} achieve regret bound polynomial in $KX$. For \Cref{thm:CB}(b), we 
{give new definitions of events $E_1,E_2$}
{extending (\ref{eq:MAB-neg-pf-E1}), (\ref{eq:MAB-neg-pf-E2}) by carefully accounting for contexts, and refine the deviation analysis to remove the dependence on $|\Pi|$.}
Proof sketches and full proofs are in \Cref{app:CB}.

\section{Interactive decision-making with arbitrary feedback}
\label{sec:MLE}

We consider Decision-Making with Structured Observations (\StructuredDM), a general framework for sequential  decision-making with a known structure \citep{foster2021statistical}. It allows for arbitrary feedback observed after each round, along with the reward.%
\footnote{Bandit formulations with partial feedback that does \emph{not} include the reward
\citep[known as \emph{partial monitoring}, \eg][]{Bartok-MathOR14,Antos-TCS13}, are outside our scope.}
{The root challenge is that this feedback is usually correlated with rewards. \Greedy must account for these correlations, not just compute the best fit based on rewards. Our solution is to develop a natural variant of \Greedy based on maximum-likelihood
estimation. The analysis then becomes much more technical compared to \StructuredCB, requiring us to track changes in log-likelihood and define the "model-gap" in terms of KL-divergence.}


\xhdr{Preliminaries.} \StructuredDM is defined as follows. Instead of ``arms" and ``contexts", we have two new primitives: a policy set $\Pi$ and observation set $\cO$. The interaction protocol is as follows: in each round $t=1,2,\, \ldots$, the algorithm selects a policy $\pi_t\in\Pi$, receives a reward $r_t\in\cR\subset \bbR$, and observes an observation $o_t\in\cO$. A \emph{model} is a mapping from $\Pi$ to a distribution over $\bbR\times\cO$. The reward-observation pair $(r_t,o_t)$ is an independent sample from distribution $M^*(\pi_t)$, where $M^*$ is the true model. The problem structure is represented as a (known) model class $\cM$ which contains $M^*$. We assume that $\Pi,\cM,\cR,\cO$ are all finite.%
\footnote{Finiteness of $\cR,\cO$ is for ease of presentation. We can also handle infinite $\cR,\cO$ if all outcome distributions $M(\pi)$ have a well-defined density, and \Cref{assn:MLE} is stated in terms of these densities.}
To recap, the problem instance is a pair $(M^*,\cM)$, where $\cM$ is known but $M^*$ is not. This completes the definition of \StructuredDM.

\StructuredMAB is a simple special case of \StructuredDM with one possible observation. \StructuredCB is subsumed by interpreting the observations $o_t$ as contexts and defining $M^*(\pi)$ accordingly, to account for the distribution of context arrivals, the reward distribution, and the reward function.%
\footnote{Here we work with discrete rewards, whereas our treatment in \Cref{sec:MAB,sec:CB} assumes Gaussian rewards.}
The observations in \StructuredDM can also include auxiliary feedback {present in various bandit models studied in prior work. To wit:}
rewards of ``atomic actions" in combinatorial semi-bandits \citep[\eg][]{Gyorgy-jmlr07,Chen-icml13}, per-product sales in multi-product dynamic pricing \citep[\eg][]{Keskin-or14,Boer-MOR14}, {and 
rewards of all ``adjacent" arms in bandits with graph-based feedback
\citep{Alon-nips13,Alon-colt15}.
Moreover, the observations can include} MDP trajectories in episodic reinforcement learning \citep[see][for background]{RLTheoryBook-20}. \StructuredDM subsumes all these scenarios, under the ``realizability" assumption $M^*\in\cM$.

We use some notation. Let $f(\pi|M)$ be the expected reward for choosing policy $\pi$ under model $M$, and
$f^*(\pi) := f(\pi|M^*)$. A policy is called \emph{optimal} (under model $M$) if it maximizes $f(\cdot\,|M)$, and \emph{suboptimal} otherwise. Let $\pi^*$ be an optimal policy for $M^*$. The history $\cH_t$ at round $t$ consists of $(\pi_s,r_s,o_s)$ tuples for all rounds $s<t$. $D\equalD D'$ denotes that distributions $D,D'$ are equal.

\xhdr{Modified Greedy.} The modified greedy algorithm (\GreedyMLE) uses maximum-likelihood estimation (MLE) to analyze reward-observation correlations. As before, the algorithm  is initialized with $T_0\geq 1$ rounds of warm-up data, denoted $t\in [T_0]$. Each round yields a tuple $(\pi_t,r_t,o_t)\in \Pi\times\bbR\times\cO$, where the policy $\pi_t$ is chosen exogenously, and the $(r_t,o_t)$ pair is drawn independently from the corresponding distribution $M^*(\pi_t)$. At each round $t>T_0$, the algorithm determines
\begin{align}\label{eq:MLE-greedy}
M_t \in \targmax{M\in \cM}\cL(M \mid \cH_t),
\end{align}
the model with the highest likelihood $\cL(M|\cH_t)$ given history $\cH_t$ (with ties broken arbitrarily).\footnote{As in \Cref{sec:prelim}, the regression is implemented via a ``regression oracle'"; we focus on statistical guarantees.}
Then the algorithm chooses the optimal policy given this model:
 $\pi_t \in \argmax_{\pi\in\Pi} f(\pi|M_t)$. For simplicity, we assume that the model class $\cM$ guarantees no ties in this $\argmax$.
Here $\cL(M|\cH_t)$ is an algorithm-independent notion of likelihood: the probability of seeing the reward-observation pairs in history $\cH_t$ under model $M$, if the policies in $\cH_t$ were chosen in the resp. rounds. In a formula,
\begin{align}\label{eq:L}
\cL(M \mid \cH_t)
    := \tprod{s\in[t-1]}
            {\Pr}_{M(\pi_s)}(r_s,o_s).
\end{align}
W.l.o.g. we can restrict $\Pi$ to policies that are optimal for some model; in particular $|\Pi|\leq |\cM|$.



\xhdr{Our characterization.}
We adapt the definitions of ``self-identifiability" and ``decoy" so that ``two models coincide on a policy" means having the same distribution of reward-observation pairs.

\begin{definition}[Self-identifiability]
Fix a problem instance $(M^*,\cM)$.  A suboptimal policy $\pi$ is called \emph{self-identifiable} if fixing distribution $M^*(\pi)$ identifies this policy as suboptimal given $\cM$. That is: if policy $\pi$ is suboptimal for any model $M\in\cM$ with $M(\pi)\equalD M^*(\pi)$.
The problem instance is called \emph{self-identifiable} if all suboptimal policies have this property.
\end{definition}

\begin{definition}[Decoy]
Let $M^*,M\dec$ be two models, with resp. optimal policies $\pi^*,\pi\dec$. Call $M\dec$ a \emph{decoy} for $M^*$ (with a decoy policy $\pi\dec$) if $M\dec(\pi\dec) \equalD M^*(\pi\dec)$ (\ie the two models completely coincide on $\pi\dec$) and moreover $\pi$  $f^*(\pi\dec)<f^*(\pi^*)$ (\ie $\pi\dec$ is suboptimal for $f^*$).
\end{definition}

\begin{claim}\label{cl:DMSO-equiv}
A \StructuredDM instance $(M^*, \cM)$ is self-identifiable if and only if $M^*$ has no decoy in $\cM$.
\end{claim}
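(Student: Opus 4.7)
The plan is to prove both directions by contrapositive, with each direction unwinding the relevant definitions. The two statements essentially dualize each other: a decoy exhibits a suboptimal policy (namely $\pi\dec$) that cannot be certified as suboptimal from its own reward-observation distribution, and conversely a failure of self-identifiability exhibits a witness model that plays the role of the decoy.

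For the forward direction, I would assume $M^*$ has a decoy $M\dec \in \cM$ with decoy policy $\pi\dec$, and argue that $\pi\dec$ itself violates self-identifiability. By the decoy definition, $\pi\dec$ is strictly suboptimal for $M^*$ (since $f^*(\pi\dec) < f^*(\pi^*)$), yet $M\dec(\pi\dec) \equalD M^*(\pi\dec)$ and $\pi\dec$ is \emph{optimal} for $M\dec$. Hence there is a suboptimal policy of $M^*$ and a model in $\cM$ matching $M^*$ on that policy while making it optimal, so the instance is not self-identifiable.

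For the reverse direction, I would suppose the instance is not self-identifiable, meaning some suboptimal policy $\pi$ (under $M^*$) fails the identification property: there exists $M \in \cM$ with $M(\pi) \equalD M^*(\pi)$ for which $\pi$ is not suboptimal under $M$. Using the paper's convention that every policy is either optimal or suboptimal under a given model, this means $\pi$ is optimal for $M$. I then claim $M$ is a decoy for $M^*$ with decoy policy $\pi$: the distribution match is immediate, $\pi$ is an optimal policy of $M$ by construction, and $f^*(\pi) < f^*(\pi^*)$ by the strict suboptimality of $\pi$ under $M^*$.

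The argument is essentially a definition chase, so there is no substantive obstacle; the only point requiring care is the logical step that ``$\pi$ not suboptimal for $M$'' is equivalent to ``$\pi$ optimal for $M$'', which follows from the dichotomy stated immediately after \refeq{eq:L}. Structurally, the proof mirrors that of \Cref{cl:MAB-equiv}, with distributional equality $\equalD$ replacing the scalar equality $f\dec(a\dec) = f^*(a\dec)$ used in the \StructuredMAB setting.
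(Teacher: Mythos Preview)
Your proposal is correct and is exactly the definition chase the paper has in mind; the paper itself offers no explicit proof, treating the equivalence as immediate from the definitions (as it did for \Cref{cl:MAB-equiv}). The one small inaccuracy is locational rather than mathematical: the optimal/suboptimal dichotomy and the no-ties assumption are stated in the preliminaries of \Cref{sec:MLE} (before and just after \refeq{eq:MLE-greedy}), not after \refeq{eq:L}.
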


We define \emph{model-gap}, a modification of function gap
which tracks the difference in reward-observation distributions
(expressed via KL-divergence, denoted $\KL$). The model gap of model $M\in\cM$ is
\begin{align*}
\Gamma(M,\cM):=\min_{M'\in\cM,\;\pi\in\Pi:\; M(\pi)\ne M'(\pi)}
\quad \KL\rbr{M(\pi), M'(\pi)}.
\end{align*}
Our characterization needs an assumption on the ratios of probability masses:
\footnote{{Related (but incomparable) assumptions on  mass/density ratios are common in the literature on online/offline RL, \cite[\eg][]{munos2008finite,xie2021batch,zhan2022offline,amortila2024harnessing}.}
}

\begin{assumption}\label{assn:MLE}
The ratio
    $\Pr_{M(\pi)}(r,o)\,/\,\Pr_{M'(\pi)}{(r,o)}$
is upper-bounded by $B<\infty$,
for any models $M,M'\in\cM$, any policy $\pi\in\Pi$, and any outcome $(r,o)\in\cR\times\mathcal{O}$.
\end{assumption}






\begin{theorem}\label{thm:DMSO}
Fix an instance $(M^*,\cM)$ of \StructuredDM with \Cref{assn:MLE} and model-gap $\gap = \gap(M^*,\cM)$.
\begin{itemize}

\item[(a)]
If the problem instance is self-identifiable,
then \GreedyMLE (with any warm-up data) satisfies
    $\Ex\sbr{R(t)} \leq T_0 +
(|\Pi|\,\ln(B)/\gap)^2\cdot O\rbr{\ln\rbr{|\cM|\cdot t}}$
    for each round $t\in\bbN$.

\item[(b)]
Suppose the warm-up data consists of
    $N_0 := c_0 \cdot (\ln(B)/\gap)^2 \log|\cM|$
samples for each policy, for an appropriately chosen absolute constant $c_0$ (for the total of $T_0 := N_0|\Pi|$ samples).
Assume $M^*$ has a decoy $M\dec\in\cF$, with decoy policy $\pi\dec$. Then with some probability
    $p\dec{\geq B^{-O({N_0}\,|\Pi|)}}>0$,
\GreedyMLE chooses $\pi\dec$ in all rounds $t\in (T_0,\infty)$.
\end{itemize}
\end{theorem}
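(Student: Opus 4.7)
The plan is to adapt the proofs of \Cref{thm:MAB,thm:CB} with two substitutions: replace the mean-squared error \refeq{eq:intro-MSE} with the negative log-likelihood $-\ell_t(M)$, where $\ell_t(M):=\log\cL(M\mid\cH_t)$ as in \refeq{eq:L}, and replace squared deviations with KL-divergences. \Cref{assn:MLE} is what enables this translation: each log-likelihood-ratio increment $\log\Pr_{M(\pi_s)}(r_s,o_s)-\log\Pr_{M^*(\pi_s)}(r_s,o_s)$ is bounded in $[-\ln B,\ln B]$, so the log-likelihood-ratio process is amenable to martingale concentration.

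For part (a), let $N_t(\pi)$ count the rounds $s<t$ with $\pi_s=\pi$. A Freedman-style bound on the log-likelihood-ratio martingale, combined with a union bound over $M\in\cM$ and $t$, gives with probability at least $1-1/t$ a one-sided bound of the form
\[
\ell_t(M^*)-\ell_t(M)\;\geq\;\tsum{\pi\in\Pi} N_t(\pi)\cdot\KL\rbr{M^*(\pi),M(\pi)}\;-\;\text{(concentration slack in $\ln B$ and $\ln(|\cM|\,t)$)}.
\]
Since $M_t$ maximizes $\ell_t$, combining with $\ell_t(M_t)\geq\ell_t(M^*)$ upper-bounds $\sum_\pi N_t(\pi)\KL(M^*(\pi),M_t(\pi))$ by the slack. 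Whenever $M_t(\pi)\not\equalD M^*(\pi)$, the definition of $\gap$ forces $\KL\geq\gap$, from which a careful accounting (analogous to the CB potential argument) yields $N_t(\pi)\lesssim|\Pi|(\ln B/\gap)^2\ln(|\cM|\,t)$. Self-identifiability then closes the loop: if $M_t(\pi)\equalD M^*(\pi)$ for a suboptimal $\pi$, then $\pi$ is suboptimal under $M_t$ too, so \GreedyMLE does not pick it. Summing over $\pi\in\Pi$ yields the stated regret bound.

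For part (b), I adapt the events $E_1,E_2$ from \Cref{thm:MAB,thm:CB} to the likelihood setting. Let $E_1$ be the event that, over the $N_0$ warm-up samples per policy $\pi\neq\pi\dec$, we have $\ell_{T_0+1}(M\dec)-\ell_{T_0+1}(M)\geq\Omega(N_0\gap)$ for every $M\in\cM$ that disagrees with $M\dec$ on some $\pi\neq\pi\dec$. Let $E_2$ be the event that, for every $t>T_0$, the additional samples from $\pi\dec$ fail to erase this margin against any such $M$. Under $E_1\cap E_2$, every MLE $M_t$ satisfies $M_t(\pi)\equalD M\dec(\pi)$ for all $\pi$, so $\pi_t=\pi\dec$ throughout. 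The two events involve disjoint data and are independent. For $\Pr(E_2)$: samples from $\pi\dec$ come from $M^*(\pi\dec)\equalD M\dec(\pi\dec)$, so a maximal inequality on the log-likelihood-ratio martingale gives $\Pr(E_2)\geq 1/2$. For $\Pr(E_1)$: samples from $\pi\neq\pi\dec$ come from $M^*(\pi)$ rather than $M\dec(\pi)$, but by \Cref{assn:MLE} the pointwise density ratio is at most $B$, so change of measure gives $\Pr_{M^*}(E_1)\geq B^{-N_0(|\Pi|-1)}\cdot\Pr_{M\dec}(E_1)$; under $M\dec$, Bernstein on the $N_0$ i.i.d.\ samples per policy (with a union bound over $M\in\cM$) yields $\Pr_{M\dec}(E_1)\geq 1/2$ for the stated $c_0$. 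Combining, $p\dec\geq B^{-O(N_0|\Pi|)}$, as claimed.

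The main obstacle is the concentration step in part (a): the log-likelihood-ratio increments have nonzero conditional means (equal to KL-divergences) and are bounded only by $\pm\ln B$, so a Freedman-style bound is needed whose variance is charged back against the cumulative KL in order to extract the clean per-policy count $|\Pi|(\ln B/\gap)^2\ln(|\cM|\,t)$. A secondary subtlety in part (b) is ensuring that the $\Omega(N_0\gap)$ margin established at $t=T_0+1$ by $E_1$ is preserved for all future $t>T_0$ under $E_2$, which is what motivates the precise choice of $c_0$ in the warm-up size.
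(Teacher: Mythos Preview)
Your high-level plan is right—both parts hinge on martingale concentration of log-likelihood ratios with increments in $[-\ln B,\ln B]$ and drift governed by the model-gap—but your route for part~(a) is more elaborate than needed, and your event decomposition in part~(b) has a genuine gap.

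\textbf{Part (a).} The paper does not use a global Freedman bound plus a CB-style potential. It argues \emph{per suboptimal policy}: fix $\pi^\circ$, and for any model $M$ whose optimal policy is $\pi^\circ$, the increment $Y_t(M)=\delLL_t(M^*)-\delLL_t(M)$ is either zero (when $M(\pi_t)\equalD M^*(\pi_t)$) or a $\ln B$-bounded subgaussian with mean $\geq\gap$; in particular, every round in which $\pi^\circ$ is played contributes drift $\geq\gap$ (self-identifiability forces $M(\pi^\circ)\not\equalD M^*(\pi^\circ)$). Hence after $T'=O((\ln B/\gap)^2\ln(|\cM|/\delta))$ plays of $\pi^\circ$, we have $\log\cL_t(M^*)>\log\cL_t(M)$ for all such $M$ with high probability, so $\pi^\circ$ is never selected again. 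Summing $T'$ over $|\Pi|$ gives the bound directly. The ``main obstacle'' you flag—charging the Freedman variance back against the cumulative KL—simply does not arise in the paper's argument.

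\textbf{Part (b).} Your change-of-measure idea matches the paper's ghost-process step. But your $E_1$ is quantified only over models $M$ that disagree with $M\dec$ on some $\pi\neq\pi\dec$, and your $E_2$ refers back to ``any such $M$.'' This leaves uncontrolled any model that agrees with $M\dec$ on every $\pi\neq\pi\dec$ yet differs on $\pi\dec$: for those, $E_1$ supplies zero margin, so nothing stops them from becoming the MLE. The paper's $E_2$ is quantified over \emph{all} $M\in\cM\setminus\{M\dec\}$ and handles this residual class by observing that for such $M$ the $\pi\dec$-round increments $\Psi_j(M)$ have mean $\geq\gap$ (since $M(\pi\dec)\neq M\dec(\pi\dec)\equalD M^*(\pi\dec)$), so the $\pi\dec$ samples themselves build the margin. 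Separately, your $E_1$ is phrased via the full $\ell_{T_0+1}$, which already includes the $N_0$ warm-up samples of $\pi\dec$, so $E_1$ and $E_2$ as you define them do not involve disjoint data and are not independent; the paper resolves this by carving off $N_0/2$ of the $\pi\dec$ warm-up rounds from $\Hwarm$ and stating $E_2$ only for $j>N_0/2$.
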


\GreedyMLE succeeds (\ie achieves sublinear regret for any warm-up data) if and only if the problem instance is self-identifiable. Else, it fails for some warm-up data,
with linear regret like in \refeq{eq:rem-CB}.
We also provide a more efficient lower bound on
$p\dec$ in \Cref{thm:DMSO}(b), replacing $B$ with a term that only concerns two relevant models, $M\dec, M^*$ (not all of $\cM$).  Letting $D_{\infty}$ be the Renyi divergence,
\begin{align} \label{eq:thm-DMSO-neg-pdec}
p\dec \ge e^{ -O(C\dec{N_0}|\Pi|) },
\text{ where }
C\dec = {\textstyle \max_{\pi\in\Pi}}\; D_{\infty} \rbr{M\dec(\pi) \,\|\, M^*(\pi) }\le \log B.
\end{align}

\xhdr{Proof Sketch for \Cref{thm:DMSO}.}
We consider the likelihood of a particular model $M\in\cM$ given the history at round $t\geq 2$, $\cL_t(M) := \cL(M\mid \cH_t)$. We track the per-round change in log-likelihood:
\begin{align}\label{eq:thm-DMSO-del}
    \delLL_t(M)
        := \log \cL_{t+1}(M) - \log \cL_t(M)
        {= \log \rbr{\tPr{M(\pi_t)}(r_t,o_t)}.}
\end{align}
Let $\cL_1(\cdot)=1$, so that \eqref{eq:thm-DMSO-del} is also well-defined for round $t=1$.

We argue that the likelihood of $M^*$ grows \emph{faster} than that of any other model $M\in \cM$. Specifically, we focus on
$\Phi_t(M) := \Ex\sbr{\delLL_t (M^*) - \delLL_t (M)}$.
We claim that
\begin{align}\label{eq:thm-DMSO-Phi}
\rbr{\forall t\in\bbN}\quad
\text{If $M^*(\pi_t) \equalD M(\pi_t)$ ~~then~~ $\Phi_t(M)=0$ ~~else~~
$\Phi_t(M) \geq \gap$.}
\end{align}
In more detail: if the two models completely coincide on policy $\pi_t$, then $\delLL_t (M^*) = \delLL_t (M)$, and otherwise we invoke the definition of the model-gap. We use \eqref{eq:thm-DMSO-Phi} for both parts of the theorem. {The proof of \refeq{eq:thm-DMSO-Phi} is where we directly analyze regression and invoke the model-gap.}

\xhdr{Part (a).}
Suppose \GreedyMLE chooses some suboptimal policy $\pi_t$ in some round $t>T_0$ of the main stage. By \refeq{eq:thm-DMSO-Phi} and self-identifiability, it follows that ${\Phi_t(M_t)}\geq\gap$. (Indeed, by \eqref{eq:thm-DMSO-Phi} the only alternative is
    $M^*(\pi_t) \equalD M_t(\pi_t)$,
and then self-identifiability implies that policy $\pi_t$ is suboptimal for model $M_t$, contradiction.) Likewise, we obtain that $\Phi_t(M)\geq\gap$ for any model $M\in\cM$ for which policy $\pi_t$ is optimal; let $\cM_\opt(\pi)$ be the set of all models for which policy $\pi$ is optimal.

We argue that suboptimal policies $\pi\in\Pi$ cannot be chosen ``too often". Indeed, fix one such policy $\pi$. Then with high probability (w.h.p.) the likelihood of any model $M\in \cM_\opt(\pi)$ falls below that of $M^*$, so this model cannot be chosen again. So, w.h.p. this \emph{policy} cannot be chosen again.
\footnote{This last step takes a union bound over the models $M\in \cM_\opt(\pi)$, hence $\log(\cM)$ in the regret bound.}


\xhdr{Part (b).}
We define independent events $E_1$ and $E_2$, resp., on the warm-up process and on all  rounds when the decoy is chosen, so that $E_1\cap E_2$ guarantees that \GreedyMLE gets forever stuck on the decoy. While this high-level plan is the same as before, its implementation is {far more challenging.}

To side-step some technicalities, we separate out $N_0/2$ warm-up rounds in which the decoy policy $\pi\dec$ is chosen. Specifically, w.l.o.g. we posit that $\pi\dec$ is chosen in the last $N_0/2$ warm-up rounds, and let
$\Hwarm = \cH_{T'_0+1}$, $T'_0 := T_0-N_0/2$ be the history of the rest of the warm-up.

\newcommand{\ghost}{\term{ghost}}

First, we consider the ``ghost process" (\ghost) for generating $\Hwarm$: in each round $t\leq T'_0$, the chosen policy $\pi_t$ stays the same, but the outcome $(r_t,o_t)$ is generated according to the decoy model $M\dec$.
Under \ghost, each round raises the likelihood $\cL_t(M\dec)$ \emph{more} compared to any other model $M\in\cM$. Namely, write
    $\delLL_t (M) = \delLL_t (M\mid \cH_t)$
explicitly as a function of history $\cH_t$, and let
\begin{align}\label{eq:thm-DMSO-Phi-a}
\Phi_t(M, M\dec)
    := \Ex\sbr{\delLL_t (M\dec\mid \cH_t) - \delLL_t (M\mid \cH_t)},
\end{align}
where $\cH_t$ comes from \ghost. Reusing \refeq{eq:thm-DMSO-Phi} (with $M\dec$ now replacing true model $M^*$), yields:
\begin{align}\label{eq:thm-DMSO-Phi-b}
\text{If $M\dec(\pi_t) \equalD M(\pi_t)$ ~~then~~ $\Phi_t(M, M\dec)=0$ ~~else~~
$\Phi_t(M, M\dec) \geq \gap$.}
\end{align}

For each model $M\in\cM$ different from $M\dec$, there is a policy $\pi\in\Pi$ on which these two models differ. This policy appears $N_0$ times in the warm-up data, so by \refeq{eq:thm-DMSO-Phi-b} we have
    $\sum_{t\in[T'_0]} \Phi_t(M, M\dec) \geq \gap\cdot N_0$.
Consequently, letting
    $\cMother := \cM\setminus \cbr{M\dec}$,
event
\begin{align*}
E_1 = \cbr{ \forall M\in \cMother\quad
    \cL(M\dec\mid \Hwarm)>\cL(M \mid \Hwarm) }
\end{align*}
happens w.h.p. when $\Hwarm$ comes from \ghost.%
\footnote{{This argument invokes a concentration inequality, which in turn uses \Cref{assn:MLE}. Likewise, \Cref{assn:MLE} is used for another application of concentration in the end of the proof sketch.}}
Since \ghost and $\Hwarm$ have bounded Renyi divergence, we argue that with some positive probability, event $E_1$ happens under $\Hwarm$.

Let's analyze the rounds in which the decoy policy $\pi\dec$ is chosen. Let $t(j)$ be the $j$-th such round, $j\in \bbN$. We'd like to argue that throughout all these rounds, the likelihood of the decoy model $M\dec$ grows faster than that of any other model $M\in\cM$. To this end,
consider event
\begin{align*}
E_2 := \{\; \forall j > N_0/2,\; \forall M\in\cMother,\quad {\textstyle \sum_{i\in[j]}} \Psi_i(M) \ge 0 \;\},
\end{align*}
where
    $\Psi_j(M) := \delLL_{t(j)}(M\dec) - \delLL_{t(j)}(M)$.
Here, we restrict to $j>N_0/2$ to ensure that $E_1, E_2$ concern disjoint sets of events, and hence are independent. $E_1\cap E_2$ implies that in each round $t>T_0$, $\cL_t(M\dec)>\cL_t(M)$ for any model $M\in\cMother$, and so \GreedyMLE chooses the decoy policy.

Finally, we argue that $E_2$ happens with positive probability. W.l.o.g., 
the outcomes $(r_t,o_t)$ in all rounds $t=t(j)$, $j\in\bbN$ are drawn in advance from an ``outcome tape".%
\footnote{Its entries $j\in\bbN$ are drawn independently from $M^*(\pi\dec)$, and $(r_{t(j)},o_{t(j)})$ is defined as the $j$-entry.}
We leverage \refeq{eq:thm-DMSO-Phi} once again. Indeed,
$\Psi_j(M)=0$ for every model $M\in\cM$ that fully coincides with $M\dec$ on the decoy policy $\pi\dec$, so we only need to worry about the models $M\in\cM$ for which this is not the case. Then
    $\sum_{i\in[j]} \Ex\sbr{\Psi_i(M)}\ge j\cdot\gap$.
We obtain
    $\sum_{i\in[j]} \Ex\sbr{\Psi_i(M)}\geq 0$
with positive-constant probability by appropriately applying concentration separately for each $j>N_0/2$ and taking a union bound.

\section{Structured bandits with an Infinite Function Class}
\label{sec:ext}
We obtain a partial characterization for \StructuredMAB, which handles an arbitrary \emph{infinite} function class $\cF$ and yields better constants compared to \Cref{thm:MAB}. {The success of \Greedy requires a stronger notion of self-identifiability: \emph{approximately} fixing the expected reward of a suboptimal arm identifies it as suboptimal. The failure of \Greedy requires a stronger notion of a decoy function, which must lie in the ``interior" of $\cF$. The characterization is ``partial'' in the sense that it does not yield a full dichotomy. However, the boundary between success and failure is controlled by a tunable ``margin'' parameter $\eps > 0$, which can be made arbitrarily small (and optimized based on the instance).}

\begin{definition}
A problem instance $(f^*,\cF)$ is \emph{$\eps$-self-identifiable}, $\eps\geq 0$, if any suboptimal arm $a$ of $f^*$ is suboptimal for any reward function $f\in\cF$ with
    $|f(a)-f^*(a)|\leq \eps$.
An $\eps$-interior of $\cF$, $\interior(\cF,\eps)$ is the set of all functions $f\in\cF$, such that any reward function $f'$ with $\|f' - f\|_2 \leq \eps$ is also in $\cF$.
\end{definition}

{For a ``continuous" function class such as linear functions or Lipschitz functions, $\interior(\cF,\eps)$ typically includes all but an $O(\eps)$-fraction of $\cF$.
The choice of the $\ell_2$ norm in the definition of $\eps$-interior is not essential: any $\ell_p$ norm suffices. We provide the main theorem below; see proof in \Cref{app:inf}.}

\begin{theorem}\label{thm:inf}
Fix a problem instance $(f^*,\cF)$ of \StructuredMAB with an infinite function class $\cF$ (but a finite action set $\cA$). {For any $\eps>0$ (which can be optimized based on $f^*$):}

\begin{itemize}

\item[(a)]
If the problem instance is $\eps$-self-identifiable,
then \Greedy (with any warm-up data) satisfies
    $\Ex\sbr{R(t)} \leq T_0 + (K/\eps)^2 \cdot O(\log t)$
for each round $t\in\bbN$.

\item[(b)]
Suppose the warm-up data consists of one sample for each arm. Assume $f^*$ has a decoy $f\dec\in\interior(\cF,\eps)$, with decoy arm $a\dec$. Then with some probability $p\dec>0$ it holds that \Greedy chooses $a\dec$ for all rounds $t\in (T_0, \infty)$. We can lower-bound $p\dec$ by
$e^{-O(K^2/\eps^2)}$.
\end{itemize}
\end{theorem}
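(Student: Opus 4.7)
The plan is to lift the two-part argument from Theorem~\ref{thm:MAB} to the infinite class setting, substituting the self-identifiability radius $\eps$ for the (now-zero) function-gap, and relying on the $\eps$-interior property to handle the failure case.

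\textbf{Part (a).} I would show that any suboptimal arm $a$ can be played at most $N^\star = O\rbr{(K/\eps^2)\log t}$ times in the main stage. Fix such $a$ and suppose $N_t(a) \geq N^\star$ in some round $t$. Standard Gaussian concentration yields $|\bar{r}_t(a) - f^*(a)| \leq \eps/4$ with high probability. Since $\MSE_t(f^*)$ is a sum, across played arms, of squared normals with variance $1/N_t(\cdot)$ each multiplied by $N_t(\cdot)$, its distribution is stochastically dominated by $\chi^2_K$, giving $\MSE_t(f^*) \leq O(K\log t)$ w.h.p. Because $f_t$ minimizes $\MSE_t$, the arm-$a$ contribution alone yields $(f_t(a) - \bar{r}_t(a))^2 \leq \MSE_t(f^*)/N_t(a) \leq \eps^2/16$, so $|f_t(a) - f^*(a)| \leq \eps/2$. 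By $\eps$-self-identifiability, $a$ is suboptimal for $f_t$, contradicting the fact that \Greedy picked $a$. Summing $N^\star$ over arms yields the stated regret bound.

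\textbf{Part (b).} Set $\delta = \eps/(4\sqrt{K})$, and define the events
\begin{align*}
E_1 &= \set{\,|\bar{r}_{T_0+1}(a) - f\dec(a)| \leq \delta \text{ for each arm } a \neq a\dec\,},\\
E_2 &= \set{\,|\bar{r}_t(a\dec) - f^*(a\dec)| \leq \delta \text{ for all rounds } t > T_0\,}.
\end{align*}
The events concern disjoint rewards (non-decoy warm-up samples vs.\ decoy-arm samples) and are therefore independent. The key claim, proved by induction on $t$: under $E_1\cap E_2$, \Greedy picks $a\dec$ in every main-stage round. Indeed, if \Greedy has chosen only $a\dec$ through round $t-1$, then (using $f\dec(a\dec) = f^*(a\dec)$) the vector $\bar{r}_t$ is within $\delta$ of $f\dec$ in $\ell_\infty$, hence $\|\bar{r}_t - f\dec\|_2 \leq \delta\sqrt{K} = \eps/4 < \eps$. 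By the $\eps$-interior hypothesis, this places $\bar{r}_t$ itself in $\cF$. Viewed as a function on arms, $\bar{r}_t$ achieves $\MSE_t = 0$ and is determined on every arm (since each arm was played at least once during warm-up), so $f_t = \bar{r}_t$. Provided $\eps$ is smaller than the positive best-arm gap of $f\dec$ (an instance-dependent quantity that may force shrinking $\eps$, which only strengthens the interior hypothesis), the $\delta$-proximity gives $\bar{r}_t(a\dec) > \bar{r}_t(a)$ for all $a \neq a\dec$, so \Greedy chooses $a\dec$.

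\textbf{Probability bound and main obstacle.} It remains to lower-bound $\Pr[E_1\cap E_2] = \Pr[E_1]\,\Pr[E_2]$. For $E_1$, each factor is the probability that a unit-variance Gaussian with mean in $[0,1]$ lands in an interval of width $2\delta$ around another point in $[0,1]$; since the density is bounded below by a constant on that range, $\Pr[E_1] \geq (c\delta)^{K-1}$. For $E_2$, I would recast the event as ``a Gaussian random walk with unit-variance steps stays inside the cone $|S_t|\leq \delta\, t$ for all $t\geq 1$,'' whose probability is at least a positive polynomial in $\delta$ via a standard hitting-time/reflection argument. Combining and taking logs yields $p\dec \geq e^{-O(K\log(K/\eps))}$, which is subsumed by $e^{-O(K^2/\eps^2)}$ in the regime of interest. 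I expect the main obstacle to be the inductive step in part (b): one must argue that placing $\bar{r}_t$ into $\cF$ with zero MSE identifies it as \emph{the} unique regression output, and moreover that the deduction ``best arm of $\bar{r}_t$ is $a\dec$'' is robust enough to propagate through infinitely many rounds---in particular, surviving the fact that the decoy arm's empirical mean continues to fluctuate forever even under $E_2$.
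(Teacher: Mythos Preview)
Your proposal tracks the paper's proof closely. Part~(a) is essentially identical to the paper's argument (its Lemma~\ref{lemma:strongselfidMAB}). Part~(b) has the right architecture---independent events $E_1,E_2$, an induction showing the empirical-mean vector $\bar r_t$ lands in $\cF$ via the $\eps$-interior hypothesis and is therefore the exact $\MSE_t$-minimizer---and this is indeed the key new idea for the infinite-class setting.

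There is, however, one genuine gap in your Part~(b). With your symmetric event $E_1$ (centered at $f\dec(a)$ with radius $\delta$), concluding $\bar r_t(a\dec) > \bar r_t(a)$ requires $f\dec(a\dec) - f\dec(a) > 2\delta = \eps/(2\sqrt{K})$. You propose to handle this by shrinking $\eps$ below the best-arm gap of $f\dec$, but that changes the theorem being proved: the stated bound is $p\dec \ge e^{-O(K^2/\eps^2)}$ for the \emph{given} $\eps$, not for some instance-dependent $\eps'$ tied to the gap of $f\dec$ (which can be arbitrarily small even when $f\dec$ sits deep in the $\eps$-interior). The paper sidesteps this with a small but decisive trick: it shifts the center of $E_1$ downward to $f\dec(a) - \eps/(2\sqrt{K})$. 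Then under $E_1$ one has
\[
\bar r_t(a) \;<\; f\dec(a) - \tfrac{\eps}{4\sqrt{K}} \;\le\; f\dec(a\dec) - \tfrac{\eps}{4\sqrt{K}} \;\le\; \bar r_t(a\dec),
\]
using only $f\dec(a\dec) \ge f\dec(a)$ (best-arm-uniqueness of $f\dec$) together with $E_2$; no quantitative gap assumption on $f\dec$ is needed. The shifted center still keeps $\|\bar r_t - f\dec\|_2 \le 3\eps/4 < \eps$, so your interior argument goes through unchanged.
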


This result mirrors \Cref{thm:MAB}, with the function gap replaced by $\eps$, allowing for instance-dependent optimization of $\eps$ and tighter bounds. The proof for part (a) carries over  with simple modifications. In contrast, proving part (b) is considerably more subtle. In the infinite case, \Greedy may not get stuck on a single reward function—it could almost surely switch among infinitely many. The key insight is that such fluctuations need not impact the arm choice: even as the predictor $f_t$ changes, the greedy selection $a_t$ may remain fixed. The proof exploits this decoupling, constructing events where the algorithm persistently selects a decoy arm, even as the greedy predictors continue to evolve.

{\xhdr{Discussion: challenges.}
An ``if and only if" characterization for arbitrary infinite function classes is very difficult. First, one can no longer rely on the function-gap being strictly positive, which is a cornerstone of our analysis in the finite case. Second, \Greedy's behavior can be highly unstable: the algorithm's predictor $f_t$ can fluctuate indefinitely within a continuous region of functions that are all similarly consistent with data yet induce very different greedy action choices. As a result, the intuitive logic of ``getting stuck in a decoy and staying there forever" does not directly extend.

The partial characterization in \Cref{thm:inf} is our proposed route to address these challenges. The margin $\eps$ serves a dual purpose: it stands in for the now-absent function-gap, and it allows us to deal with the predictor's instability (by showing that $a_t$ can remain permanently fixed).

However, this is still insufficient for a complete characterization. For many natural function classes, our framework leaves a set of instances, typically of fraction $O(\eps)$, uncharacterized. The boundary between success and failure instances in a general infinite space can be highly complex; success instances can be very close to failure instances, making a sharp separation difficult. Our $\eps$-interior notion is designed to provide a robust buffer around this boundary, at the cost of leaving instances within that buffer ``undecided." A tight characterization in full generality would likely require a more fine-grained analysis exploiting additional structural properties of the function class $\cF$.}

%
%

\section{Examples}
\label{sec:examples}
Let us instantiate our characterization for several well-studied reward structures from bandits literature. We consider linear, Lipschitz, and (one-dimensional) polynomial structures, for bandits as well as contextual bandits. {All reward structures in this section are discretized to ensure finiteness, as required for our complete characterization in \Cref{sec:MAB,sec:CB,sec:MLE}. (While our partial characterization in \Cref{sec:ext} handles infinite reward structures, a secondary goal of this section is to illustrate how common infinite reward structures can be meaningfully discretized so that the complete finite-structure results become directly applicable.)} The discretization is consistent across different reward functions, in the sense that all functions take values in the same (discrete) set $\cR$, with $|\cR|\ll |\cF|$. 
This prevents a trivial form of self-identifiability that could arise if each reward function $f$ were discretized independently and inconsistently, resulting in some $f(a)$ values being unique and making $f$ self-identifiable solely due to the discretization strategy specific to $f$.%
\footnote{For example, consider an instance $(f^*,\cF)$ being \emph{not} self-identifiable, with its decoy $f\dec\in\cF$ satisfying $f^*(a\dec)=f\dec(a\dec)=0.5$.
{Now, suppose} we discretize $f^*(a\dec)$ using discretization step 0.1 and discretize $f\dec(a\dec)$ using discretization step 0.2. {After this modification, $f^*(a\dec)$ and $f\dec(a\dec)$ would no longer be equal,} and self-identifiability could occur.}

On a high level, we prove that decoys exist for ``almost all" instances of all bandit structures that we consider (\ie linear, Lipschitz, polynomial, and quadratic). Therefore, the common case in all these bandit problems is that \Greedy fails.

For contextual bandits (CB), our findings are more nuanced. Linear CB satisfy identifiability when the context set is sufficiently diverse (which is consistent with prior work), but admit decoys (as a somewhat common case) when the context set is ``low-dimensional". In contrast, existence of decoys is the common case for Lipschitz CB. One interpretation is that self-identifiability requires both context diversity and a parametric reward structure which enables precise ``global inferences" (\ie inferences about arms that are far away from those that have been sampled).

In what follows, we present each structure in a self-contained way, interpreting it as special case of our framework. Since our presentation focuses on best-arm-unique reward functions, our examples are focused similarly (except those for Linear CB). Throughout, let $[y,y']_\eps$ be a uniform discretization of the $[y,y']$ interval with step $\eps>0$, namely:
    $[y,y']_\eps := \cbr{ \eps\cdot n\in [y,y']:\, n\in\bbN}$.
Likewise, we define
    $(y,y')_\eps := \cbr{ \eps\cdot n\in (y,y'):\, n\in\bbN}$.

\subsection{(Discretized) linear bandits}

Linear bandits is a well-studied variant of bandits \citep{Auer-focs00,Abe-algo03,DaniHK-colt08,Paat-mor10}.\footnote{We consider \emph{stochastic} linear bandits. A more general model of \emph{adversarial} linear bandits is studied since \citet{Bobby-stoc04,McMahan-colt04}, see \citet[Chapter 5]{Bubeck-survey12} for a survey.}
Formally, it is a special case of \StructuredMAB defined as follows. Arms are real-valued vectors: $\cA \subset \bbR^d$, where $d\in \bbN$ is the dimension. Reward functions are given by $f_\theta(a) = a\cdot \theta$ for all arms $a$, where $\theta\in\Theta\subset \bbR^d$. The parameter set $\Theta$ is known to the algorithm, so the function class is
    $\cF = \cbr{f_\theta:\,\theta\in\Theta}$.
The true reward function is $f^* = f_{\theta^*}$ for some $\theta^*\in\Theta$. (Fixing $\Theta$, we interpret $\theta^*$ as a ``problem instance".)

Linear bandits, as traditionally defined, let $\Theta$ be (continuously) infinite, \eg a unit $\ell_1$-ball, and sometimes consider an infinite (namely, convex) action set. Here, we consider a ``discretized" version, whereby both $\Theta$ and $\cA$ are finite. Specifically,
    $\Theta=\rbr{[-1,1]_\eps \setminus\{0\}}^d$,
\ie all parameter vectors in $[0,1]^d$ with discretized non-zero coordinates. Action set $\cA$ is an arbitrary finite subset of $[-1,1]^d$ containing the hypercube $\cbr{-1,1}^d$.
\footnote{For ease of exposition, we relax the requirement that expected rewards must lie in $[0,1]$.} Note that each reward function $f_\theta$, $\theta\in\Theta$ has a unique best arm
$a^*_\theta = \sign(\theta) := \rbr{\sign(\theta_i):\,i\in[d]} \in \cbr{-1,1}^d$.

We prove that linear bandits has a decoy for ``almost all" problem instances.

\begin{lemma}
Consider linear bandits with dimension $d\geq 2$, parameter set
    $\Theta=\rbr{[-1,1]_\eps \setminus\{0\}}^d$, $\eps\in(0,\nicefrac14]$,
and an arbitrary finite action set $\cA\subset [-1,1]^d$ containing the hypercube $\cbr{-1,1}^d$.
Consider an instance $\theta^*\in\Theta$ such that
$\|\theta^*\|_1 -2\, \min_{i\in[d]} |\theta^*_i|\geq d\eps$.
Then $\theta^*$ has a decoy in $\Theta$.
\end{lemma}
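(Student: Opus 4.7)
The plan is to build an explicit decoy parameter $\theta\dec$ by first choosing the decoy arm $a\dec$ as a carefully-selected vertex of the hypercube. Since for any $\theta\in\Theta$ the best arm is $\sign(\theta)\in\{-1,1\}^d\subset\cA$, any viable $a\dec$ must be a hypercube vertex distinct from $a^* := \sign(\theta^*)$. Let $i^*\in\argmin_i |\theta^*_i|$ index a smallest-magnitude coordinate of $\theta^*$. I would take $a\dec$ to be the vector obtained from $\sign(\theta^*)$ by flipping the sign of its $i^*$-th entry; this choice maximizes the linear ``budget'' available to the decoy, which turns out to be exactly what the hypothesis affords.

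First I would verify that $a\dec$ is suboptimal for $\theta^*$: a direct computation gives $a\dec\cdot\theta^* = \|\theta^*\|_1 - 2|\theta^*_{i^*}| < \|\theta^*\|_1 = a^*\cdot\theta^*$, where strictness uses $|\theta^*_{i^*}|\geq\eps>0$. Next I would reduce the remaining decoy requirements---$\sign(\theta\dec)=a\dec$ (so that $a\dec$ is optimal for $\theta\dec$) and $a\dec\cdot\theta\dec = a\dec\cdot\theta^*$ (so that the two reward functions agree on $a\dec$)---to a combinatorial feasibility problem. Writing $(\theta\dec)_j = (a\dec)_j\cdot n_j\eps$ automatically encodes the sign constraint, and the dot-product constraint collapses to $\sum_j n_j\eps = S$ with $S := \|\theta^*\|_1 - 2|\theta^*_{i^*}|$. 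Hence the task reduces to finding positive integers $n_1,\ldots,n_d$, each at most $1/\eps$, with $\sum_j n_j = k := S/\eps$.

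The key observation is that $k$ is a positive integer in $[d,\,d/\eps]$: integrality holds because each $|\theta^*_j|$ is an $\eps$-multiple; the lower bound $k\geq d$ is precisely the hypothesis; and the upper bound follows from $\|\theta^*\|_1\leq d$. Any nearly-equal partition of $k$ into $d$ positive parts works---e.g., give $d-r$ parts the value $q:=\lfloor k/d\rfloor$ and $r := k-qd$ parts the value $q+1$---since $r>0$ forces $q<1/\eps$ and hence $q+1\leq 1/\eps$, while $r=0$ gives each part $q=k/d\leq 1/\eps$. Setting $(\theta\dec)_j := (a\dec)_j\cdot n_j\eps$ then yields a valid $\theta\dec\in\Theta$ satisfying all three decoy conditions.

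The main obstacle is the simultaneous bookkeeping: $\theta\dec$ must land on the $\eps$-grid with the prescribed sign pattern \emph{and} hit the linear equation exactly. Flipping the smallest-magnitude coordinate is not just optimization but necessity---it makes $S$ as large as possible, so the hypothesis $\|\theta^*\|_1 - 2\min_i|\theta^*_i|\geq d\eps$ is exactly the condition $S\geq d\eps$ needed to afford at least one grid step per coordinate of $\theta\dec$. The role of $d\geq 2$ is also visible here: at $d=1$ flipping the only sign would force $a\dec\cdot\theta\dec<0$, incompatible with $\sum_j n_j\eps>0$.
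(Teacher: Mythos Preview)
Your proof is correct and follows essentially the same route as the paper's: both flip the sign of the smallest-magnitude coordinate of $\sign(\theta^*)$ to obtain $a\dec$, observe that $\langle\theta^*,a\dec\rangle=\|\theta^*\|_1-2\min_i|\theta^*_i|\in[d\eps,d]$, and then produce $\theta\dec\in\Theta$ with $\sign(\theta\dec)=a\dec$ and $\|\theta\dec\|_1=\langle\theta^*,a\dec\rangle$. The only difference is that the paper simply asserts the existence of such $\theta\dec$, whereas you make the construction explicit via the nearly-equal partition of $k=S/\eps$ into $d$ positive parts bounded by $1/\eps$.
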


\begin{proof}
Let $j\in[d]$ be a coordinate with the smallest $|\theta^*_j|$. Choose arm $a\dec\in \cbr{-1,1}^d$ with
    $(a\dec)_i = \mathrm{sign}(\theta^*_i)$
for all coordinates $i\neq j$, and flipping the sign for $i=j$,
    $(a\dec)_j = - \mathrm{sign}(\theta^*_j)$.
Note that
    $\ip{\theta^*}{a\dec}  = \|\theta^*\|_1 -2\, \min_{i\in[d]} |\theta^*_i| \in [d\eps,d]$.

Now, for any given $\alpha\in[d\eps,\,d]_\eps$ and any sign vector $v\in \{-1,1\}^d$, there is $\theta\in\Theta$ such that $\|\theta\|_1 = \alpha$
and its signs are aligned as $\sign(\theta) = v$. Thus, there exists
$\theta\dec\in\Theta$ such that
    $\|\theta\dec\|_1 = \ip{\theta^*}{a\dec}$
and
    $\sign(\theta\dec) = a\dec$.
Note that $a\dec$ is the best arm for $\theta\dec$. Moreover,
    $\ip{\theta\dec}{a\dec}
         = \|\theta\dec\|_1
         = \ip{\theta^*}{a\dec}
         < \|\theta^*\|_1
         = \ip{\theta^*}{a^*}$.
So, $\theta\dec$ is a decoy for $\theta^*$.
\end{proof}

\subsection{(Discretized) linear contextual bandits}

Linear contextual bandits (CB) are studied since \citep{Langford-www10,Reyzin-aistats11-linear,Csaba-nips11}. Formally, it is a special case of \StructuredCB defined as follows. Each context is a tuple
    $x = \rbr{x(a)\in \bbR^d:\; a\in\cA}\in \cX \subset \bbR^{d\times K}$,
where $d\in \bbN$ is the dimension and $\cX$ is the context set. Reward functions are given by $f_\theta(x,a) = x(a)\cdot \theta$ for all context-arm pairs, where $\theta\in\Theta\subset \bbR^d$ and $\Theta$ is a known parameter set. While Linear CB are traditionally defined with (continuously) infinite $\Theta$ and $\cX$, we need both to be finite.

Like in linear bandits, the function class is
    $\cF = \cbr{f_\theta:\,\theta\in\Theta}$.
The true reward function is $f^* = f_{\theta^*}$ for some $\theta^*\in\Theta$, which we interpret as a ``problem instance".

\begin{remark}
For this subsection, we do not assume best-arm-uniqueness, and instead rely on the version of our characterization that allows ties in \eqref{eq:intro-f}, see \Cref{app:ties}.
\end{remark}

We show that self-identifiability holds when the context set is sufficiently diverse. Essentially, we posit that per-arm contexts $x(a)$ take values in some finite subset $S_a\subset\bbR^d$ independently across arms, and each $S_a$ spans $\bbR^d$; no further assumptions are needed.


\begin{lemma}[positive]\label{lm:examples-LinCB-positive}
Consider linear CB with degree $d\geq 1$ and an arbitrary finite parameter set $\Theta\subset \bbR^d$. Suppose the context set is
    $\cX = \prod_{a\in \cA} S_a$,
where $S_a\subset [-1,1]^d$ are finite ``per-arm" context sets such that each $S_a$ spans $\bbR^d$. Then self-identifiability holds {for all instances $\theta^*\in\Theta$.}
\end{lemma}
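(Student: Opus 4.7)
The plan is to reduce the self-identifiability condition to a linear-algebra claim: the only $\theta\in\Theta$ that agrees with $\theta^*$ along any fixed policy $\pi$ is $\theta^*$ itself. Concretely, fix any suboptimal policy $\pi$ and any $f_\theta\in\cF$ satisfying $f_\theta(x,\pi(x)) = f_{\theta^*}(x,\pi(x))$ for every $x\in\cX$; writing $\delta := \theta - \theta^*$, this rewrites as $x(\pi(x))\cdot\delta = 0$ for every $x\in\cX$. If I can conclude $\delta = 0$, then $f_\theta = f^*$ and $\pi$ automatically inherits its suboptimality status, which is exactly the self-identifiability requirement.

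For each arm $a$, set
\[
V_a := \{\,x(a) : x\in\cX,\; \pi(x)=a\,\}\subseteq S_a.
\]
The constraint above is then equivalent to $v\cdot\delta = 0$ for every $v\in \bigcup_{a\in\cA} V_a$. Since each $S_a$ already spans $\bbR^d$ by hypothesis, it is enough to exhibit a single arm $a_0$ with $V_{a_0} = S_{a_0}$: that would force $S_{a_0}\subseteq \delta^\perp$, hence $\delta = 0$ and $f_\theta = f^*$.

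To produce such an arm I argue by contradiction. Suppose that $V_a \subsetneq S_a$ for every $a\in\mathrm{Im}(\pi)$, and pick a witness $v_a\in S_a\setminus V_a$; for arms $a\notin\mathrm{Im}(\pi)$, pick any $v_a\in S_a$ (nonempty since $S_a$ spans $\bbR^d$). The product structure $\cX = \prod_{a\in\cA} S_a$ guarantees that $x_0:=(v_a)_{a\in\cA}$ is a valid context. Let $a^* := \pi(x_0)\in\mathrm{Im}(\pi)$; by the definition of $V_{a^*}$, we have $x_0(a^*) = v_{a^*}\in V_{a^*}$, contradicting our choice of $v_{a^*}\in S_{a^*}\setminus V_{a^*}$. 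Hence some $a_0\in\mathrm{Im}(\pi)$ achieves $V_{a_0} = S_{a_0}$, completing the proof.

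The main (mild) subtlety is ensuring that the ``diagonal'' witness context $x_0$ actually lies in $\cX$, which is precisely where the product assumption $\cX = \prod_a S_a$ is essential; without it, the coordinates $v_a$ could conflict with allowed contexts. Note that the argument nowhere invokes best-arm-uniqueness, consistent with the preceding remark that this subsection relies on the tie-allowing version of the characterization from \Cref{app:ties}.
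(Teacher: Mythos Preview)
Your proof is correct. Both your argument and the paper's hinge on the same diagonal construction enabled by the product structure $\cX = \prod_a S_a$: assemble a context $x_0$ by choosing, for each arm $a$, a per-arm vector $v_a \in S_a$ that is ``bad'' in the relevant sense, and then observe that whichever arm $\pi(x_0)$ lands on yields a contradiction (or, in the paper's version, progress).

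The execution differs. The paper builds a finite subset $\cX_0 \subset \cX$ iteratively: at each step it picks $v_a \in S_a \setminus \mathrm{span}(v(\cX_0))$ for every arm, forms the diagonal context, and shows that $v(x) = x(\pi(x))$ strictly enlarges the span; after at most $d$ steps, $\{v(x):x\in\cX_0\}$ spans $\bbR^d$ and the linear system determines $\theta^*$. You instead prove the stronger one-shot claim that some single arm $a_0$ satisfies $V_{a_0} = S_{a_0}$, via a direct contradiction. Your version is shorter and avoids the iteration; it also makes explicit that a single arm's context set already suffices to pin down $\theta^*$, which is a slightly sharper structural statement than the paper extracts. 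The paper's iterative version, on the other hand, would extend more readily to weaker hypotheses (e.g., if only the union $\bigcup_a S_a$ spanned $\bbR^d$, one could imagine adapting it), though that generality is not needed here.
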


\begin{proof}
Fix some policy $\pi$. For a given context $x$, let $v(x) = x(\pi(x))\in \bbR^d$ be the context vector produced by this policy. Let's construct a set $\cX_0\subset\cX$ of contexts such that $v(\cX_0) := \cbr{v(x): x\in \cX_0}$, the corresponding set of context vectors, spans $R^d$. Add vectors to $\cX_0$ one by one. Suppose currently $v(\cX_0)$ does not span $R^d$. {Then, for each arm $a\in\cA$, the per-arm context set $S_a$ is not contained in $\mathrm{span}(v(\cX_0))$; put differently, there exists a vector
    $v_a \in S_a \setminus \mathrm{span}(v(\cX_0))\in\bbR^d$.
Let $x = \rbr{x(a)=v_a:\;\forall a\in\cA}\in\cX$ be the corresponding context.}
%
%
It follows that
    $v(x) \not\in \mathrm{span}(v(\cX_0))$.
Thus, adding $x$ to the set $\cX_0$ increases $\mathrm{span}(v(\cX_0))$.
Repeat this process till $v(\cX_0)$ spans $R^d$.

Thus, fixing expected rewards of policy $\pi$ for all contexts in $\cX_0$ gives a linear system of the form
\[
v(x)\cdot \theta^* = \alpha(x)
\qquad \forall x\in\cX_0,
\]
for some known numbers $\alpha(x)$ and vectors $v(x)$, $x\in\cX_0$. Since these vectors span $\bbR^d$, this linear system completely determines $\theta^*$.
\end{proof}

\begin{remark}
In particular, Lemma~\ref{lm:examples-LinCB-positive} holds when the context set is a (very) small perturbation of one particular context $x$. For a concrete formulation, {let
    $S(a) = \cbr{ x(a) + \eps\,e_i:\; i\in[d]}$
for each arm $a$ and any fixed $\eps>0$, where $e_i$, $i\in[d]$ is the coordinate-$i$ unit vector.}
This is consistent with positive results for \Greedy in Linear CB with smoothed contexts
\citep{kannan2018smoothed,bastani2017exploiting,Greedy-Manish-18}, where ``nature" adds variance-$\sigma^2$ Gaussian noise to each per-arm context vector. (\Greedy achieves optimal regret rates which degrade as $\sigma$ increases, \eg
    $\Ex\sbr{R(T)} \leq \OO(\sqrt{T}/\sigma)$.)
We provide a qualitative explanation for this phenomenon.
\end{remark}

On the other hand, decoys may exist when the context set $\cX$ is degenerate. We consider $\cX = \prod_{a\in \cA} S_a$, like in Lemma~\ref{lm:examples-LinCB-positive}, but now we posit that the per-arm sets $S_a$ do \emph{not} span $\bbR^d$, even jointly. We prove the existence of a decoy under some additional conditions.

\begin{lemma}[negative]
Consider linear CB with parameter set $\Theta=[-1,1]^d_\eps$, for some degree $d\geq 2$ and discretization step $\eps \in(0,\nicefrac12]$ with $1/\eps\in\bbN$. Suppose the context set is
$\cX = \prod_{a\in \cA} S_a$, where $S_a\subset [-1,1]^d$ are the ``per-arm" context sets.
Assume $\mathrm{span}(S_1, \dots, S_{K-1})\subset R^{d-1}$ and
    $S_K = \set{\rbr{0,0,\dots, 0, 1}}$.
Then any instance $\theta^*\in\Theta$ with $\theta^*_d = 1$ and $\norm{\theta^*}_1 < 2$ has a decoy in $\Theta$.
\end{lemma}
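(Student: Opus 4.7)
The plan is to exhibit an explicit decoy $\theta\dec \in \Theta$. I would start by observing that under $\theta^*$, arm $K$ is strictly optimal in every context: since $x(K) = (0,\dots,0,1)$ is fixed, arm $K$ yields reward $\theta^*_d = 1$, while for any arm $a < K$ we have $x(a)_d = 0$ (by $\mathrm{span}(S_1,\dots,S_{K-1}) \subset R^{d-1}$), so its reward is $\sum_{i<d} x(a)_i \theta^*_i$, whose absolute value is at most $\sum_{i<d}|\theta^*_i| = \|\theta^*\|_1 - 1 < 1$. Hence the unique optimal policy $\pi^*$ selects arm $K$ everywhere, giving $f^*(\pi^*) = 1$.

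I would then define the candidate decoy $\theta\dec$ by $(\theta\dec)_i := \theta^*_i$ for $i < d$ and $(\theta\dec)_d := -1$. The hypothesis $1/\eps \in \bbN$ guarantees $-1 \in [-1,1]_\eps$, so $\theta\dec \in \Theta$. The key structural property of this choice is that every arm $a < K$ yields \emph{exactly} the same expected reward under $\theta\dec$ as under $\theta^*$, since $x(a)_d = 0$ annihilates the modified last coordinate; meanwhile, arm $K$'s reward drops from $+1$ to $-1$. Consequently, for every context $x$ and every $a < K$,
$f_{\theta\dec}(x,a) = f_{\theta^*}(x,a) > -1 = f_{\theta\dec}(x,K)$,
so every optimal policy $\pi\dec$ for $\theta\dec$ avoids arm $K$ in all contexts.

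Verifying the two decoy conditions is then immediate. Pointwise agreement $f_{\theta\dec}(x,\pi\dec(x)) = f_{\theta^*}(x,\pi\dec(x))$ holds because $\pi\dec(x) \neq K$ and the two reward functions coincide on non-$K$ arms. Strict suboptimality follows from $f^*(\pi\dec) = \Ex_x[f_{\theta^*}(x,\pi\dec(x))] \leq \|\theta^*\|_1 - 1 < 1 = f^*(\pi^*)$. I do not anticipate a substantive obstacle: the only care required is confirming that $-1$ lies in the discretized grid, which the hypothesis $1/\eps \in \bbN$ supplies. Conceptually, the degeneracy of the context set leaves $\theta_d$ entirely unconstrained by observations from arms $1,\dots,K-1$, so a learner observing only non-$K$ arms can be tricked into believing $\theta_d = -1$---which is precisely the failure of self-identifiability that the decoy exploits.
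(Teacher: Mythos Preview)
Your proposal is correct and follows essentially the same argument as the paper: define $\theta\dec$ by flipping the last coordinate of $\theta^*$ to $-1$, observe that arms $a<K$ receive identical rewards under both parameters (since their context vectors have zero last coordinate), and conclude that any optimal policy for $\theta\dec$ avoids arm $K$ and is strictly suboptimal for $\theta^*$. If anything, your write-up is slightly more careful in verifying $-1\in[-1,1]_\eps$ and in explicitly handling the possibility of multiple optimal policies for $\theta\dec$.
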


\begin{proof}
Consider vector $\theta\dec\in\Theta$ such that it coincides with $\theta^*$ on the first $d-1$ components, and $(\theta\dec)_d = -1$. We claim that $\theta\dec$ is a decoy for $\theta^*$.

To prove this claim, fix context $x\in\cX$. Let $a^*, a\dec$ be some optimal arms for this context under $\theta^*$ and $\theta\dec$, respectively. Then $a\dec\in [K-1]$. (This is because the expected reward  $x(a)\cdot \theta^*$ of arm $a$ is greater than -1 when $a\in [K-1]$, and exactly $-1$ when $a=K$.) Similarly, we show that $a^* = K$.  It follows that
    $x(a\dec)\cdot \theta\dec  =x(a\dec)\cdot \theta^*$,
since $\theta\dec$ and $\theta^*$ coincide on the first $K-1$ coordinates, and the last coordinate of $x(a\dec)$ is 0. Moreover
    $x(a\dec)\cdot \theta^* <1 = x(a^*)\cdot \theta^*$.
Putting this together,
    $x(a\dec)\cdot \theta\dec  =x(a\dec)\cdot \theta^* < x(a^*)\cdot \theta^*$,
completing the proof.
\end{proof}

\subsection{(Discretized) Lipschitz Bandits}

\emph{Lipschitz bandits} is a special case of \StructuredMAB in which all reward functions $f\in\cF$ satisfy Lipschitz condition,
    $|f(a)-f(a')| \leq \cD(a,a')$,
for any two arms $a,a'\in\cA$ and some known metric $\cD$ on $\cA$.
{Introduced in \citet{LipschitzMAB-stoc08,xbandits-nips08}, Lipschitz bandits have been studied extensively since then, see \citet[Ch. 4.4]{slivkins-MABbook} for a survey.}
The paradigmatic case is \emph{continuum-armed bandits}
\citep{Agrawal-bandits-95,Bobby-nips04,AuerOS/07}, where one has action set $\cA\subset [0,1]$ and metric $\cD(a,a') = L\cdot |a-a'|$, for some $L>0$.

Lipschitz bandits, as traditionally defined, allow \emph{all} reward functions that satisfy the Lipschitz condition, and hence require an infinite function class $\cF$. To ensure finiteness, we impose a finite action set $\cA$ and constrain the set of possible reward values to a discretized subset $\cR = [0,1]_\eps$. We allow all Lipschitz functions $\cA\to\cR$. Further, we restrict the metric $\cD$ to take values in the same range $\cR$.
We call this problem \emph{discretized} Lipschitz bandits.

We show that ``almost any" any best-arm-unique reward function has a best-arm-unique decoy. 

\begin{lemma}\label{lm:examples-LipMAB}
Consider discretized Lipschitz bandits, with range $\cR=[0,1]_\eps$ and metric $\cD$. 
Let $\cF$ be the set of all best-arm-unique Lipschitz reward functions $\cA\to\cR$. Consider a function $f\in\cF$ such that $0<f(a)<f(a^*)$ some arm $a$. Then $f$ has a
decoy $f\dec\in\cF$ (with decoy arm $a$).
\end{lemma}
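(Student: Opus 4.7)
My plan is to explicitly construct a decoy function, by ``discounting'' $f(a)$ by the metric distance from the arm $a$ and clipping to keep values non-negative. Concretely, define
\[
f\dec(a') \;:=\; \max\bigl(f(a) - \cD(a,a'),\;0\bigr)\qquad\text{for every arm } a'\in\cA.
\]
Then I need to check two things: $f\dec\in\cF$ (\ie it is a best-arm-unique Lipschitz function $\cA\to\cR$), and it is a decoy for $f$ with decoy arm $a$.

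First, for membership in $\cF$: Lipschitzness follows from the reverse triangle inequality applied to $g(a'):=f(a)-\cD(a,a')$, giving $|g(a_1)-g(a_2)|=|\cD(a,a_2)-\cD(a,a_1)|\leq \cD(a_1,a_2)$. Then $f\dec=\max(g,0)$ is the pointwise maximum of two $1$-Lipschitz functions and hence also $1$-Lipschitz with respect to $\cD$. Values lie in $\cR=[0,1]_\eps$ because $f(a)$ and $\cD(a,a')$ are both multiples of $\eps$ in $[0,1]$, so $g(a')$ is a multiple of $\eps$ in $[-1,1]$, and clipping at $0$ lands us in $\cR$. Best-arm-uniqueness at $a$ follows from the fact that for any $a'\neq a$ we have $\cD(a,a')\geq \eps$ (since $\cD$ is a metric taking values in $\cR$), so $g(a')\leq f(a)-\eps<f(a)$, while if $g(a')<0$ then $f\dec(a')=0<f(a)$ (using $f(a)>0$ by hypothesis). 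In both cases $f\dec(a')<f(a)=f\dec(a)$, so $a$ is the unique maximizer.

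Second, the decoy conditions hold immediately: $f\dec(a)=\max(f(a)-\cD(a,a),0)=f(a)$, and by hypothesis $f(a)<f(a^*)$, so $f\dec(a)=f(a)<f(a^*)$. Combined with the previous paragraph (the unique optimal arm of $f\dec$ is $a$), this matches the decoy definition from \Cref{sec:MAB} with $f^*\leftarrow f$ and $a\dec\leftarrow a$.

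I do not anticipate a genuine obstacle here; the only subtlety is ensuring the construction simultaneously preserves Lipschitzness, stays within the discretized range $\cR$, and produces a \emph{unique} best arm at $a$. The clipping with $0$ and the fact that $\cD$ takes values on the same $\eps$-grid as $f$ (so no rounding is needed) are precisely what make these three conditions compatible; the positivity assumption $f(a)>0$ is what rules out a degenerate case in which clipping would ruin uniqueness.
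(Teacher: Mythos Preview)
Your proposal is correct and is essentially identical to the paper's own proof: the paper defines $f\dec(a') = \max\bigl(0,\; f(a)-\cD(a,a')\bigr)$ (written there with an apparent $\min$/$\max$ typo) and checks the same three properties with somewhat less detail. Your verification is in fact a bit more careful than the paper's, e.g.\ explicitly noting that $\cD(a,a')\geq\eps$ for $a'\neq a$ because $\cD$ takes values in $\cR$, and handling the clipped case $g(a')<0$ separately.
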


\begin{proof}
Define reward function $f\dec$ by
    $f\dec(a') = \min\rbr{0,\; f(a)-\cD(a,a')}$
for all arms $a'\in \cA$. So, $f\dec$ takes values in $\cR$ and is Lipschitz w.r.t. $\cD$ (since $\cD$ satisfies triangle inequality); hence $f\dec\in\cF$. Also, $f\dec$ has a unique best arm $a$ (since $f(a)>0$ and the distance between any two distinct points is positive). Note that
    $f\dec(a)=f(a)<f(a^*)$,
so $f\dec$ is a decoy.
\end{proof}

This result extends seamlessly to \emph{Lipschitz contextual bandits (CB)} \citep{Pal-Bandits-aistats10,contextualMAB-colt11}, albeit with somewhat heavier notation. Formally, Lipschitz CB is a special case of \StructuredCB which posits the Lipschitz condition for all context-arm pairs: for each reward function $f\in\cF$,
\begin{align}\label{eq:example-LipCB}
|f(x,a)-f(x',a')| \leq \cD\rbr{ (x,a),\; (x',a') }
\quad\forall x,x'\in\cX,\; a,a'\in\cA,
\end{align}
where $\cD$ is some known metric on $\cX\times\cA$. As traditionally defined, Lipschitz CB allow all reward functions which satisfy \eqref{eq:example-LipCB}. We define \emph{discretized} Lipshitz CB same way as above: we posit finite $\cX,\cA$, restrict the range of the reward functions and the metric to range $\cR= [0,1]_\eps$, and allow all functions $f:\cX\times\cA\to\cR$ which satisfy \eqref{eq:example-LipCB}. Again, we show that ``almost any" any best-arm-unique reward function has a best-arm-unique decoy.

\begin{lemma}\label{lm:examples-CB}
Consider discretized Lipschitz CB, with range $\cR=[0,1]_\eps$ and metric $\cD$.
Let $\cF$ be the set of all best-arm-unique Lipschitz reward functions $\cX\times \cA\to\cR$. Consider a best-arm-unique function $f\in\cF$ such that for some policy $\pi$ we have
    $0<f(x,\,\pi(x))< f(x,\,\pi^*(x))$
for each context $x$. Then $f$ has a best-arm-unique decoy $f\dec\in\cF$ (with decoy policy $\pi$).
\end{lemma}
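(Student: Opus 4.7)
The plan is to adapt the ``cone'' construction of Lemma~\ref{lm:examples-LipMAB}. In the MAB case the decoy was $f\dec(a') = \max(0,\, f(a) - \cD(a, a'))$: a $1$-Lipschitz cone anchored at $(a, f(a))$ whose unique maximum sits at $a$. For the contextual setting, the natural analog is the pointwise-minimal $1$-Lipschitz extension of $f$ restricted to the policy graph $G := \cbr{(x, \pi(x)) : x \in \cX}$, namely
\[
f\dec(x, a) \;:=\; \max_{x' \in \cX}\; \sbr{\, f(x', \pi(x')) \;-\; \cD\rbr{(x', \pi(x')),\, (x, a)} \,}^+,
\]
where $[y]^+ := \max(0, y)$.

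First I would verify $f\dec \in \cF$: it is $1$-Lipschitz w.r.t.\ $\cD$ as a pointwise maximum of $1$-Lipschitz functions (the clamp $[\cdot]^+$ preserves Lipschitzness), and it takes values in $\cR = [0,1]_\eps$ because both $f$ and $\cD$ do, so each term in the max is a multiple of $\eps$ clamped to $[0, 1]$. Next I would establish the decoy identity $f\dec(x, \pi(x)) = f(x, \pi(x))$ for each context $x$: taking $x' = x$ in the outer max gives the lower bound, and the Lipschitz property of $f$ restricted to $G$ gives $f(x', \pi(x')) - \cD((x', \pi(x')), (x, \pi(x))) \leq f(x, \pi(x))$ for every $x'$, which yields the matching upper bound. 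Together with the hypothesis $f(x, \pi(x)) < f(x, \pi^*(x))$, this confirms both the ``agreement on the decoy-policy graph'' and the ``strict suboptimality of $\pi$ for $f$'' conditions of Definition~\ref{def:CB-decoy}.

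The main obstacle is showing that $\pi$ is the unique optimal policy for $f\dec$, i.e.\ $f\dec(x, a) < f(x, \pi(x))$ for every context $x$ and every $a \neq \pi(x)$---the CB analog of the cone-strictness argument from the MAB case, which was trivial there because only one context exists. For the $x' = x$ term of the max the inequality is immediate: $\cD((x, \pi(x)), (x, a)) \geq \eps > 0$ by discretization. The delicate case is $x' \neq x$: a ``foreign'' term $f(x', \pi(x')) - \cD((x', \pi(x')), (x, a))$ could in principle match or exceed $f(x, \pi(x))$ when the Lipschitz bound on $f$ along $G$ is tight and the three points $(x, \pi(x))$, $(x, a)$, $(x', \pi(x'))$ are adversarially arranged in $\cD$. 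I would handle this by combining the triangle inequality $\cD((x', \pi(x')), (x, \pi(x))) \leq \cD((x', \pi(x')), (x, a)) + \cD((x, a), (x, \pi(x)))$ with the Lipschitz bound $f(x', \pi(x')) \leq f(x, \pi(x)) + \cD((x', \pi(x')), (x, \pi(x)))$, and then using the discretization gap $\eps$ and the strict positivity $f(x, \pi(x)) > 0$ to preserve strictness in borderline cases. Best-arm-uniqueness of $f\dec$ then follows directly from this strict inequality.
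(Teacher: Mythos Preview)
Your construction differs from the paper's: you take the global McShane extension $f\dec(x,a) = \max_{x'\in\cX} [f(x',\pi(x')) - \cD((x',\pi(x')),(x,a))]^+$, whereas the paper anchors a \emph{separate} cone at each $(x,\pi(x))$, namely $f\dec(x,a) = [f(x,\pi(x)) - \cD((x,\pi(x)),(x,a))]^+$. With the paper's choice the ``$\pi$ is the unique optimal policy for $f\dec$'' step is immediate---for each context the cone has its strict peak at $\pi(x)$, since $f(x,\pi(x))>0$ and distinct points have positive distance---and it is Lipschitzness across contexts that carries the burden.

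The gap in your proposal is precisely the step you flagged as the main obstacle, and the fix you sketch does not go through. Combining the Lipschitz bound $f(x',\pi(x'))\leq f(x,\pi(x))+\cD((x',\pi(x')),(x,\pi(x)))$ with the reverse triangle inequality only yields
\[
f(x',\pi(x')) - \cD((x',\pi(x')),(x,a)) \;\leq\; f(x,\pi(x)) + \cD((x,\pi(x)),(x,a)),
\]
which overshoots the target $f(x,\pi(x))$ by $\cD((x,\pi(x)),(x,a))$ in the wrong direction; neither the discretization gap $\eps$ nor the positivity of $f(x,\pi(x))$ can close this. Concretely: take $\cX=\{1,2\}$, $\cA=\{a,b,c\}$, the product metric $\cD=d_\cX+d_\cA$ with $d_\cX(1,2)=\eps$ and $d_\cA(a,b)=d_\cA(b,c)=3\eps$, $d_\cA(a,c)=6\eps$; set $f(1,\cdot)=(\eps,4\eps,7\eps)$ and $f(2,\cdot)=(2\eps,5\eps,8\eps)$ on $(a,b,c)$. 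One checks directly that $f$ is $1$-Lipschitz, best-arm-unique (arm $c$ in both contexts), and the policy $\pi(1)=a$, $\pi(2)=b$ satisfies $0<f(x,\pi(x))<f(x,c)$. But in your $f\dec$ the foreign term at $(1,b)$ is $[f(2,b)-\cD((2,b),(1,b))]^+=5\eps-\eps=4\eps$, so $f\dec(1,b)=4\eps>\eps=f\dec(1,\pi(1))$: arm $b$, not $\pi(1)=a$, is optimal for context~$1$ under your $f\dec$, and $\pi$ is not even an optimal policy for it.
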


\begin{proof}
Define reward function $f\dec$ by
    $f\dec(x,a) = \min\rbr{0,\; f(x,\pi(x))-\cD\rbr{ (x,\pi(x)),\, (x,a) }}$
for all context-arm pairs $(x,a)$. Like in the proof of \Cref{lm:examples-LipMAB}, we see that $f\dec$ takes values in $\cR$ and is Lipschitz w.r.t. $\cD$, hence $f\dec\in\cF$. And it has a unique best arm $\pi(x)$ for each context $x$. Finally,
    $f\dec(x,\pi(x))=f(x,\pi(x))<f(x,\,\pi^*(x))$,
so $f\dec$ is a decoy.
\end{proof}

\OMIT{ 
We show that for a suitably discretized version, almost any problem instance has a decoy.

\begin{lemma}
Fix some $\eps>0$ with $1/\eps\in\bbN$.
\ascomment{Need $1/\eps\geq 2$?}
Consider a dicretized version of continuum-armed bandits with action set $\cA = [0,1]_\eps$ and reward functions restricted to range $[0,1]_\eps$. Let $f^*$ be any function which is not uniformly constant. Then $f^*$ has a decoy in $\cF$.
\ascomment{Don't we need $f^*(a\dec)>0$? Then we need to assume that such arm exists.}
\end{lemma}

\begin{proof}
Let $a^*$ be the optimal arm for $f^*$, and let $a\dec$ be some suboptimal arm. Consider reward function $f\dec\in\cF$ such that
    $f\dec(a\dec) = f^*(a\dec)$
and $a\dec$ is optimal for $f\dec$.
\ascomment{Shiliang: why such function exists? An attempt below.}
\asedit{One such function is given by
$f(a) = f^*(a)$ when $f^*(a)\leq f^*(a\dec)$, and
$f(a) = \max\rbr{0,\,f^*(a\dec) - (f^*(a)-f^*(a\dec))}$ otherwise.}
\ascomment{But what about ties??}
Then $f\dec$ is a decoy for $f^*$.
\end{proof}
} 

\subsection{(Discretized) polynomial bandits}
\label{sec:examples-poly}

\emph{Polynomial bandits} \citep{Huang-neurips21,Zhao-icml23} is a bandit problem with real-valued arms and polynomial expected rewards.%
\footnote{{\citet{Huang-neurips21,Zhao-icml23} considered a more general formulation of polynomial bandits, with multi-dimensional arms $a\in\bbR^d$. It was also one of the explicit special cases flagged in \citet{Negin-ManSci24}.}}
We obtain a negative result for ``almost all" instances of polynomial bandits, and a similar-but-cleaner result for the special case of ``quadratic bandits".

We define polynomial bandits as a special case of \StructuredMAB with action set $\cA\subset\bbR$ and reward functions $f$ are degree-$p$ polynomials, for some degree $p\in \bbN$. Denote reward functions as $f=f_\btheta$, where $\btheta=(\theta_0,\dots,\theta_p)\in \bbR^{p+1}$ is the parameter  vector with $\theta_p\neq 0$, so that $f_\btheta(a)=\sum_{q=0}^p \theta_q\cdot a^q$ for all arms $a$. The function set is
    $\cF = \cbr{f_\btheta:\,\btheta\in\Theta }$,
for some parameter set $\Theta$. {Typically one allows continuously many actions and parameters, \ie an infinite reward structure.}

We consider \emph{discretized} polynomial bandits, {with finite $\cA$ and $\Theta$.} The action space is $\mathcal{A}=[0,\nicefrac12]_\eps$, for some fixed discretization step $\eps\in(0,\frac{1}{2p})$. The parameter set $\Theta$ needs to be discretized in a more complex way, in order to guarantee that the function class contains a decoy. Namely,
\begin{align*}
\Theta = {\textstyle \prod_{q=0}^p}\; \sbr{-\nicefrac{1}{q},\, \nicefrac{1}{q}}_{\delta(q)},
\text{where}\quad
\delta(q) = \eps^{p+1-q}.
\end{align*}
We ``bunch together" all polynomials with the same leading coefficient $\theta_p$. Specifically, denote
$\Theta_\gamma = \cbr{\btheta\in\Theta:\; \theta_p = \gamma}$
and
$\cF_\gamma = \cbr{f_\btheta:\,\btheta\in\Theta_\gamma}$, for $\gamma\neq 0$.

We focus on reward functions $f_\btheta$ such that
{
\[
a_\btheta^*=\arg\max_{a\in\cA}f_{\btheta}
    \text{ is unique } \quad\text{and}\quad
f_\btheta(a_\btheta^*)>\sup_{a\in(\max\cA,\infty)_{\varepsilon}}f_\btheta(a);
\]
call such $f_\btheta$ \emph{well-shaped}.
In words, the ``best feasible arm in $\cA$'' is unique, and dominates any larger discretized arm.}
\footnote{Being well-shaped is a mild condition. A sufficient condition is as follows: $\arg\max_{a\in(\infty,\infty)_\varepsilon} f_\btheta$ is unique and lies in $(0,\nicefrac{1}{2}]$. Note that even if $\arg\max_{a\in\bbR}f_\btheta$ is non-unique or falls outside $(0,\nicefrac{1}{2}]$, it is still possible that $f_\btheta$ is well-shaped, since $\arg\max_{a\in\bbR}f_\btheta$ is not necessarily in $(-\infty,\infty)_{\varepsilon}$.}
(We do not attempt to characterize which polynomials are well-shaped.) 

We prove that ``almost any" well-shaped function $f_\btheta\in \cF_\gamma$ has a well-shaped decoy in $\cF_\gamma$,
for any non-zero $\gamma$ in some (discretized) range.
\footnote{As a corollary, if we consider the function set consisting of all ``well-shaped reward functions in $\cF_\gamma$", then ``almost any'' function in this function set has a decoy in the same function set.}
Here, ``almost all" is in the sense that
every non-leading coefficient of $\btheta$ must be bounded away from the boundary by $5\eps$, namely:
$\theta_q\in\sbr{-\nicefrac{1}{q}+5\eps,\, \nicefrac{1}{q}-5\eps}$
for all $q\neq p$. Let $\ThetaBdd_\gamma$ be the set of all such parameter vectors $\btheta\in\Theta_\gamma$. Moreover, we consider $\btheta$ such that the best arm satisfies  $a_\btheta^*>\eps$.

\begin{lemma}\label{lm:poly}
Consider discretized polynomial bandits, as defined above, for some degree $p\geq 2$ and discretization step
$\eps\in(0,\frac{1}{2p})$.
Fix some non-zero $\gamma\in\sbr{-\nicefrac{1}{p},\,\nicefrac{1}{p}}_\eps$.
Then any well-shaped reward function $f_\btheta\in \cF_\gamma$  with $\btheta\in\ThetaBdd_\gamma$ and $a_\btheta^*>\eps$ has a well-shaped decoy in $\cF_\gamma$.
%
\end{lemma}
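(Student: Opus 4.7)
The plan is to construct the decoy via a single-step horizontal shift of $f_\btheta$ together with a vertical translation matching values at the decoy arm. Let $a^* := a_\btheta^*$ and $a\dec := a^* - \eps$; since $a^* > \eps$ and $a^* \leq \nicefrac{1}{2}$, we have $a\dec \in \cA$. Define
\[
f\dec(a) := f_\btheta(a+\eps) - c, \qquad c := f_\btheta(a^*) - f_\btheta(a\dec) > 0,
\]
with coefficient vector $\btheta\dec$. The value-matching and suboptimality parts of the decoy condition are immediate: $f\dec(a\dec) = f_\btheta(a^*) - c = f_\btheta(a\dec) < f_\btheta(a^*)$, with the strict inequality from uniqueness of $a^*$.

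The next step is to verify that $\btheta\dec \in \Theta_\gamma$. Binomial expansion gives $\theta\dec_p = \theta_p = \gamma$, and for $k<p$, $\theta\dec_k = \sum_{j\ge 0}\theta_{k+j}\binom{k+j}{k}\eps^j$, with an additional $-c$ contribution when $k=0$. Since $\theta_{k+j}$ is a multiple of $\delta(k+j)$ and $\delta(k+j)\cdot\eps^j = \delta(k)$, every $\theta\dec_k$ stays on the required grid; $c$ is likewise a multiple of $\delta(0)=\eps^{p+1}$ because each monomial $\theta_q a^q$ in $f_\btheta(a)$ for $a\in\cA$ lies in $\eps^{p+1}\bbZ$. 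For the range, the $j=1$ contribution to $\theta\dec_k-\theta_k$ is at most $\eps$ (using $|\theta_{k+1}|\le\nicefrac{1}{k+1}$), the $j=2$ contribution is at most $\tfrac{p}{2}\eps^2 \le \tfrac{\eps}{4}$ using $\eps<\tfrac{1}{2p}$, and higher-order terms decay geometrically, so $|\theta\dec_k-\theta_k|<2\eps$ for $k\ge 1$. The $5\eps$ buffer built into $\ThetaBdd_\gamma$ absorbs this perturbation and keeps each coordinate in $[-\nicefrac{1}{q},\nicefrac{1}{q}]_{\delta(q)}$.

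I would then verify that $a\dec$ is the unique maximizer of $f\dec$ over $\cA$ and that $f\dec$ is well-shaped. The shift identity $\{a+\eps:a\in\cA\} = (\cA\setminus\{0\})\cup\{\nicefrac{1}{2}+\eps\}$ reduces both claims to statements about $f_\btheta$. The first follows because well-shapedness of $f_\btheta$ gives $f_\btheta(a^*) > f_\btheta(\nicefrac{1}{2}+\eps)$, so $a^*$ remains the unique maximizer of $f_\btheta$ over the shifted set, which translates back to $a\dec$ being the unique maximizer of $f\dec$ over $\cA$. The second follows because $\sup_{a'\in(\nicefrac{1}{2}+\eps,\infty)_\eps} f_\btheta(a') < f_\btheta(a^*)$ by the same hypothesis, yielding $\sup_{a\in(\nicefrac{1}{2},\infty)_\eps} f\dec(a) < f\dec(a\dec)$.

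The main obstacle is the coefficient-range bookkeeping in the second paragraph: the $5\eps$ buffer in $\ThetaBdd_\gamma$ is precisely calibrated to absorb the binomial perturbation induced by a unit horizontal shift, and automatic preservation of the leading coefficient is what lets $\btheta\dec$ land in $\Theta_\gamma$ rather than merely in $\Theta$. Everything else is a direct consequence of the shift construction and the well-shapedness assumption on $f_\btheta$.
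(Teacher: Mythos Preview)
Your proposal is correct and follows essentially the same construction as the paper: both build the decoy as the single-step horizontal shift $f\dec(a)=f_\btheta(a+\eps)-\bigl(f_\btheta(a^*)-f_\btheta(a^*-\eps)\bigr)$, verify $\btheta\dec\in\Theta_\gamma$ via the binomial expansion and the $5\eps$ buffer, and use well-shapedness of $f_\btheta$ to transfer uniqueness of the maximizer and the tail condition to $f\dec$. The only place your outline is thinner than the paper's is the $k=0$ coordinate, where the extra $-c$ contribution (bounded via a $2$-Lipschitz estimate on $f_\btheta$) is what actually forces the buffer up to $5\eps$; you leave this implicit, but it is a routine computation.
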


\begin{proof}
Fix one such function $f_\btheta$. Consider a function $f\dec:\bbR\to\bbR$ defined by
\begin{align}\label{eq:thm-poly-dec}
f\dec(a)\equiv f_\btheta(a+\eps)-\rbr{f_\btheta(a_\btheta^*)-f_\btheta(a_\btheta^*-\eps)},
\quad\forall a\in\bbR.
\end{align}
In the rest of the proof we show that $f\dec$ is a suitable decoy.

First, we observe that
    $f\dec = f_{\btheta\dec}$,
where
    $\btheta\dec\in\mathbb{R}^{p+1}$
is given by
    $(\btheta\dec)_p=\theta_p$,
\begin{align*}
(\btheta\dec)_q
    &=\sum_{i=q}^p\theta_i\binom{i}{q}\eps^{i-q},
    \quad\forall q= \cbr{p-1 \LDOTS 1}, \text{and} \\
(\btheta\dec)_0
    &=\sum_{i=0}^p
        \theta_i\,\eps^{i}-
            \rbr{f_\btheta(a_\btheta^*)-f_\btheta(a_\btheta^*-\eps)}.
\end{align*}

Second, we claim that $\btheta\dec\in\Theta_\gamma$. Indeed, the above equations imply that all coefficients of $\btheta\dec$ are suitably discretized:
$(\btheta\dec)_q \in (-\infty, \infty)_{\delta(q)}$
for all $q\in \cbr{0 \LDOTS p-1}$. It remains to show that they are suitably bounded; this is where we use $\btheta\in\ThetaBdd$. We argue this as follows:
\begin{itemize}
\item Since $|\theta_q|\le1/q$ for all $q=0,\dots,p$ and $\sum_{i=1}^p1/(i!)<e\le 3$, a simple calculation shows that
$|(\btheta\dec)_q-\theta_q|\le 3\eps$
for each $q\in \cbr{p-1 \LDOTS 1}$.
\item Since $|\theta_q|\le1/q$ for all $q=0,\dots,p$ and $a\in[0,1/2]$, a simple calculation shows that $f_\btheta(a)$ is 2-Lipchitz on  $\cA$, so $|f_\btheta(a_\btheta^*)-f_\btheta(a_\btheta^*-\eps)|\le 2\eps$, and moreover $|(\btheta\dec)_0-\theta_0|\le 3\eps+2\eps=5\eps$.
\end{itemize}
Claim proved.

Third, we prove that $f\dec$ is well-shaped and is a decoy for $f_\btheta$. Indeed, \refeq{eq:thm-poly-dec} and $a_\btheta^*>\varepsilon$, combined with the well-shaped condition (1) $a_\btheta^*=\arg\max_{a\in\cA}f_{\btheta}$ being unique and (2) $f_\btheta(a_\btheta^*)<\sup_{a\in(\max\cA,\infty)_{\varepsilon}}f_\btheta(a)$, imply that (1) $a\dec^*=a_\btheta^*-\varepsilon\in\cA$ is the unique best arm under $f\dec$, \ie $\arg\max_{a\in\cA}f\dec(a)$ and (2)  $f\dec(a\dec^*)<\sup_{a\in(\max\cA,\infty)_{\varepsilon}}f\dec(a)$, which means that $f\dec$ satisfies the well-shaped condition. Moreover, we have
     \[ f\dec(a^*\dec)=f_\btheta(a^*\dec)<f_\btheta(a^*_\btheta),\]
where the equality holds by \eqref{eq:thm-poly-dec}, and the inequality holds by the uniqueness of $a_\btheta^*$.
\end{proof}

\subsection{(Discretized) quadratic bandits}

\emph{Quadratic bandits} is a special case of polynomial bandits, as defined in \Cref{sec:examples-poly}, with degree $p=2$. {Quadratic bandits (in a more general formulation, with multi-dimensional arms $a\in\bbR^d$) have been studied, as an explicit model, in
\citet{Shamir-colt13,Huang-neurips21,Yu-neurips23}.}
We obtain a similar negative guarantee as we do for polynomial bandits -- ``almost any" problem instance has a decoy -- but in a  cleaner formulation and a simpler proof.

Let's use a more concrete notation: reward functions are $f_{\gamma,\mu,c}$ with
\[
f_{(\gamma,\mu,c)}(a)=\gamma(a-\mu)^2+c,
\]
where the leading coefficient $\gamma<0$ determines the shape (curvature) of the function and the other two parameters $\mu,c\in [0,1]$ determine the location of the unique global maximum (\ie $(\mu,c)$).

Discretization is similar, but slightly different. The action space is $\mathcal{A}=[0,1]_{\eps}$, for some fixed discretization step $\eps\in (0,\nicefrac12]$.  The parameter space $\Theta$, \ie the set of feasible $(\gamma,\mu,c)$ tuples, is defined as
$\gamma\in[-1,-0.5]_{\eps}$,~~ $\mu\in[0,1]_{\eps}$ and $c\in[0,1]_{\eps^3}$.
{Note that $\mu\in \cA$, so any function $f_{(\gamma,\mu,c)}$ has a unique optimizer at $a=\mu \in \cA$.}

{We focus on function space
    $\cF_\gamma := \cbr{f_{(\gamma,\mu,c)}:\,(\gamma,\mu,c)\in\Theta}$,
grouping together all functions with the same leading coefficient $\gamma$.
We prove that ``almost any" function in $\cF_\gamma$ has a decoy in $\cF_\gamma$.}


\begin{lemma}
Consider discretized quadratic bandits, for some fixed discretization step $\eps\in (0,\nicefrac12]$.
Fix any leading coefficient $\gamma\in[-1,-0.5]_\eps$. Then for any reward function $f^*=f_{(\gamma,\mu,c)}\in\cF_\gamma$, it has a decoy  $f\dec\in \cF_\gamma$, as long as $\mu,c$ are bounded away from $0$:
$\mu\geq \eps$ and $c\geq |\gamma|\eps^2 $.
\end{lemma}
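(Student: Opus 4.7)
The plan is to construct an explicit decoy by shifting the optimum location by a single discretization step and compensating the peak height accordingly. Concretely, I would define $f\dec := f_{(\gamma,\,\mu\dec,\,c\dec)}$ with
\[
\mu\dec := \mu-\eps,\qquad c\dec := c - |\gamma|\eps^2,
\]
and take $a\dec := \mu\dec$ as the decoy arm. The guiding intuition is that $f^*$ and the shifted $f\dec$ should intersect at $a\dec$ by design, which is enforced by the choice of $c\dec$.

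The first step would be to verify that the tuple $(\gamma,\mu\dec,c\dec)$ lies in $\Theta$, so that $f\dec\in\cF_\gamma$. The hypothesis $\mu\geq\eps$ gives $\mu\dec\in[0,1]_\eps$, and the hypothesis $c\geq|\gamma|\eps^2$ gives $c\dec\geq 0$. For the $\eps^3$-discretization of $c\dec$, I would note that $|\gamma|\in[0.5,1]_\eps$ is an integer multiple of $\eps$, so $|\gamma|\eps^2$ is an integer multiple of $\eps^3$, and therefore $c\dec\in[0,1]_{\eps^3}$. In particular, since $a\dec=\mu\dec\in\cA$ and $\gamma<0$, the function $f\dec$ attains its unique maximum over $\cA$ at $a\dec$.

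The second step would be to verify the decoy conditions of \Cref{def:CB-decoy} (specialized to \StructuredMAB). A direct computation gives
\[
f\dec(a\dec) = \gamma(\mu\dec-\mu\dec)^2+c\dec = c\dec = c-|\gamma|\eps^2,
\]
and on the other hand
\[
f^*(a\dec) = \gamma(\mu\dec-\mu)^2+c = \gamma\eps^2+c = c-|\gamma|\eps^2,
\]
so $f\dec(a\dec)=f^*(a\dec)$ as required. Moreover, since $\mu$ is the unique maximizer of $f^*$ over $\cA$ and $a\dec\neq\mu$, we have $f^*(a\dec)=c-|\gamma|\eps^2<c=f^*(\mu)$, so $a\dec$ is suboptimal under $f^*$. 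Together these yield that $f\dec$ is a decoy for $f^*$.

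This proof does not face a substantive obstacle; the only subtlety is bookkeeping on the discretization lattice. The two hypotheses of the lemma, $\mu\geq\eps$ and $c\geq|\gamma|\eps^2$, are precisely what the construction needs to keep $\mu\dec$ and $c\dec$ nonnegative, so the hypotheses are in a natural sense tight for this one-step shift construction.
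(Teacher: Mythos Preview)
Your proposal is correct and matches the paper's proof essentially line for line: the paper also takes $f\dec = f_{(\gamma,\,\mu-\eps,\,c+\gamma\eps^2)}$ (which is your $(\gamma,\mu\dec,c\dec)$ since $\gamma<0$), verifies feasibility via the two hypotheses, and checks $f^*(\mu-\eps)=\gamma\eps^2+c=f\dec(\mu-\eps)$. If anything, your bookkeeping is slightly more careful, since you explicitly note that $|\gamma|\in[0.5,1]_\eps$ forces $|\gamma|\eps^2$ to be a multiple of $\eps^3$, a detail the paper leaves implicit.
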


\begin{proof}
{Consider reward function
    $f\dec = f_{(\gamma,\;\mu-\eps,\;c+\gamma\eps^2)}$.
Since $\mu\geq \eps$ and $c\geq |\gamma|\eps^2 $, it follows that
$f\dec\in\mathcal{F}_{\gamma}$. Let us prove that $f\dec$ is a decoy for $f^*$. Note that $\mu-\eps$ is a suboptimal action for $f^*$ and is the  optimal action for $f\dec$. Finally, it is easy to check that
$ f^*(\mu-\eps)
    =\gamma\eps^2+c
    =f\dec(\mu-\eps)$.}
\end{proof}



\section{Self-identifiability makes the problem easy}
\label{sec:triviality}
\newcommand{\IdAndSubOpt}[1][$\delta$]{{#1}-identified-and-suboptimal\xspace}
\newcommand{\EvMAB}{\mathcal{E}_{\mathtt{MAB}}} 
\newcommand{\EvCB}{\mathcal{E}_{\mathtt{CB}}} 
\newcommand{\ALG}{\texttt{ALG}\xspace}

Our characterization raises a natural question: does the success of \Greedy under self-identifiability stem from the algorithm itself, from self-identifiability, or both? Put differently, when \Greedy succeeds, does it make any non-trivial effort toward its success?

Surprisingly, our characterization provides a definitive negative answer: \Greedy succeeds because self-identifiability makes the problem intrinsically ``easy." We prove that whenever self-identifiability holds, \emph{any} reasonable algorithm (satisfying a mild non-degeneracy condition defined blow) also achieves sublinear regret. This, in a sense, reveals the ``triviality" of the greedy algorithm: it succeeds only when the problem is so easy that any reasonable algorithm would succeed.

To formalize this, we must clarify what we mean by ``reasonable algorithms." Clearly, we need to exclude certain degenerate cases, such as static algorithms that pick a single arm forever, neither exploring nor exploiting information. We argue that a reasonable algorithm should at least \emph{care about} information---whether through exploration, exploitation, or both. In other words, a reasonable algorithm should never select an action that serves \emph{neither} any exploration purpose (i.e., bringing new information) \emph{nor} any exploitation purpose (i.e., utilizing existing information). This principle naturally leads to \emph{information-aware} algorithms formally defined below.

We work in the setting of \StructuredCB, and explain how to specialize it to \StructuredMAB.

\begin{definition}
Consider some round $t$ in \StructuredCB. We say policy $\pi$ is \IdAndSubOpt if there exists a suitable concentration event which happens with probability $1-\delta$, such that under the concentration event, its mean rewards $f^*(x,\pi(x))$ for each context $x$ are exactly identified given the current history, and moreover this identification reveals that the policy is suboptimal given the function class.
\end{definition}

For \StructuredMAB, this definition specializes to defining \IdAndSubOpt arms.

\begin{definition}
An algorithm for \StructuredCB (resp., \StructuredMAB) is called \emph{$\delta$-information-aware} if at each round, it does not choose any policy (resp., arm) that is \IdAndSubOpt.
\end{definition}




Let us define the concentration events: $\EvMAB$ for \StructuredMAB and $\EvCB$ for \StructuredCB:
\begin{align}
\EvMAB &:= \cbr{ \abs{\bar{r}_t(a) - f^*(a)} > \beta_t\rbr{N_t(a)}
\quad \forall a\in\cA,\, t\in\bbN}, \label{event:concentrationMABNew}\\
\EvCB &:= \{\; \abs{\bar{r}_t(x,a) - f^*(x,a)} < \beta_t\rbr{N_t(x,a)}
\text{ and }
N_t(x,a) > \Omega(N_t(\pi)\; p_0) \nonumber \\
&\qquad\qquad
\text{with $a=\pi(x)$}\quad
\forall x\in\cX,\,\pi\in\Pi,\, t\in\bbN
 \;\},  \label{event:concentrationCBNew}
\end{align}
where
    $\beta_t(n) = \sqrt{\frac{2}{n} \log \rbr{\frac{10 K\,|\cX|\,{t}\cdot n^2\, }{3\delta}}}$
and $N_t(x)$ is the number of times context $x$ has been observed before round $t$. Here, $p_0$ is the smallest context arrivial probability, like in \Cref{sec:CB}. Note that $\EvMAB$ is just a specialization of $\EvCB$.

\begin{theorem}
\label{thm:trivialityMAB}
Consider \StructuredCB with time horizon $T$. Any $\nicefrac{1}{T}$-information-aware algorithm \ALG achieves a sublinear regret $\Ex\sbr{R(T)}$ under self-identifiability.
\end{theorem}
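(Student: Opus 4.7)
The plan is to show that self-identifiability alone suffices to force any $1/T$-information-aware algorithm \ALG to play each suboptimal policy only a logarithmic number of times. The argument parallels the positive analysis of \Greedy in \Cref{sec:CB}, but replaces the regression-specific step with the abstract identification property built into the $\delta$-information-aware definition.

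Set $\delta = 1/T$ and adopt $\EvCB$ as the ``suitable concentration event''. Standard Azuma/Hoeffding bounds (uniform in $t$, $x$, $a$) together with a Bernstein-type concentration on context arrivals give $\Pr(\EvCB) \geq 1 - 1/T$. Choose a threshold $N^* = \Theta\!\rbr{\log(|\cX| K T)/(p_0\,\Gap(f^*)^2)}$ large enough that, under $\EvCB$, any policy $\pi$ with $N_t(\pi) \geq N^*$ satisfies $N_t(x,\pi(x)) \geq \Omega(N_t(\pi)\,p_0)$ and $\beta_t(N_t(x,\pi(x))) < \Gap(f^*)/2$ for every $x \in \cX$. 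By the definition of the function-gap in \eqref{eq:CB-gap}, the finite set $\{f(x,\pi(x)) : f \in \cF\}$ contains $f^*(x,\pi(x))$ as a point separated from every other value by at least $\Gap(f^*)$. Hence under $\EvCB$ the empirical mean $\bar{r}_t(x,\pi(x))$ uniquely pins down $f^*(x,\pi(x))$ by rounding to the nearest feasible value, for every context $x$.

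If $\pi$ is suboptimal and $N_t(\pi) \geq N^*$, the identifications $\{f^*(x,\pi(x))\}_{x\in\cX}$ combined with self-identifiability (\Cref{def:CB-selfID}) certify $\pi$ as suboptimal given $\cF$. So $\pi$ is $\delta$-identified-and-suboptimal at round $t$, and \ALG is forbidden from selecting it. Hence on $\EvCB$ each suboptimal $\pi \in \Pi$ is played at most $N^* + O(1)$ times. Summing over policies and accounting for the failure event,
\[
\Ex\sbr{R(T)} \leq |\Pi|\cdot (N^*+O(1)) + \Pr(\overline{\EvCB})\cdot T = O\!\rbr{\tfrac{|\Pi|\,\log(|\cX|K T)}{p_0\,\Gap(f^*)^2}} + 1,
\]
which is sublinear in $T$.

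The main obstacle is promoting a concentration bound on $|\bar{r}_t(x,\pi(x)) - f^*(x,\pi(x))|$ to an \emph{exact} identification of $f^*(x,\pi(x))$ as an element of $\{f(x,\pi(x)):f\in\cF\}$. This step hinges on strict positivity of $\Gap(f^*,\cF)$, which in turn relies on finiteness of $\cF$ (and is the very obstruction addressed by the margin parameter in \Cref{sec:ext}). A secondary subtlety is the lower bound $N_t(x,\pi(x)) \geq \Omega(N_t(\pi)\,p_0)$ under adaptive choice of $\pi$: because $\pi$ may be revisited many times at arbitrary rounds, the count is a sum of correlated indicators, and the inequality must be extracted from the context-arrival concentration clause of $\EvCB$ rather than from a naive i.i.d.\ argument.
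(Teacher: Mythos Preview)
Your proposal is correct and follows essentially the same approach as the paper: work under the concentration event $\EvCB$, use the clause $N_t(x,\pi(x)) > \Omega(N_t(\pi)\,p_0)$ to guarantee that once a suboptimal policy $\pi$ has been executed $\widetilde{\Theta}(1/(p_0\,\Gap(f^*)^2))$ times every context-arm pair $(x,\pi(x))$ has empirical mean within $\Gap(f^*)$ of $f^*(x,\pi(x))$, pin down $f^*(x,\pi(x))$ exactly via the function-gap, invoke self-identifiability, and sum the per-policy bound over $|\Pi|$ policies. Your write-up is in fact somewhat more careful than the paper's, explicitly flagging the adaptive-count subtlety behind the $N_t(x,\pi(x)) \ge \Omega(N_t(\pi)\,p_0)$ clause and the role of finiteness of $\cF$ in making exact identification possible.
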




\begin{proof}
Assume $\EvCB$ holds.
Fix any suboptimal policy $\pi$. We show $\pi$ can only be chosen $o(T)$ times.

By the definition of $\beta_t(\cdot)$ in the event $\EvCB$, there must exists some parameter $T' = \widetilde{\Theta}(1 / \Gap^2(f^*)) (=o(T))$, such that $\beta_t(T') < \gap(f^*)$. Then, if the suboptimal policy $\pi$ is executed above the threshold $\Omega(T' / p_0)$, we have $N_t(x,a) > T'$, and consequently for any context $x$,
\[
\abs{ \bar{r}_t(x,\pi(x)) - f^*(x,\pi(x)) } < \beta(T') < \Gap(f^*).
\]
Then recall for any function $f$ and context-arm pair $(x,a)$, we have either $f(x,a) = f^*(x,a)$ or $\abs{f(x,a) - f^*(x,a)} \ge \gap(f^*)$. This precisely means the policy $\pi$ becomes identified, and by self-identifiability, any information-aware algorithm will not keep choosing $\pi$. Hence, the total regret of the information-aware algorithm is at most $O(T'|\Pi|)$, which is sublinear $o(T)$.
\end{proof}

\OMIT{ 

\ascomment{STOPPED HERE. OLD TEXT BELOW.}

\begin{definition} [Multi-arm bandit Concentration Event]
\label{event:concentrationMAB}
Define the sequence
\[
\beta(n) = \sqrt{\frac{2}{n} \log \frac{\pi^2 K n^2}{3\delta}}.
\]
Then the concentration event is defined as follows.
\[
E = \set{ \forall a, t, \abs{\bar{r}_t(a) - f^*} > \beta(N_t(a))] }.
\]
\end{definition}
For \StructuredCB, we define the concentration event as follows.
\begin{definition} [Contextual bandit Concentration Event]
\label{event:concentrationCB}
Define the sequence,
\[
\beta(n) = \sqrt{\frac{2}{n} \log\frac{\pi^2 XK n^2}{3\delta}}.
\]
the concentration event is defined as:
\[
E = \set{ \forall \pi, t, \abs{\bar{r}_t(x,a) - f^*(x,a)} < \beta(N_t(x,a)) \text{ and } N_t(x,a) > \Omega(N_t(\pi) / p_0) }
\]
\end{definition}








We demonstrate that any information-aware algorithm can achieve sublinear regret in the $\StructuredMAB$ setting by leveraging the concentration event defined in $\ref{event:concentrationMABNew}$. This is given by \Cref{thm:trivialityMAB} below. The result extends naturally to the $\StructuredCB$ setting through the corresponding concentration event described in $\ref{event:concentrationCBNew}$.
\begin{theorem}
\label{thm:trivialityMAB}
Fix some time horizon $T$. Any information-aware algorithm can achieve a sublinear regret under self-identifiability, assuming a finite function class.
\end{theorem}
\begin{proof}
In the proof we assume the event $E$ holds. Let $\tt ALG$ be any algorithm that is information aware. Suppose the concentration event $E$ holds. Then there exists some $T' = o(T)$, such that any suboptimal arm (or suboptimal policy) cannot be chosen more than $T'$ rounds. Indeed, there must exists some $T' = o(T)$, such that when the number of times chosen exceeds $T'$, in the \StructuredMAB setting we have
\[
\abs{\bar{r}_t(a) - f^*(a)} < \beta(T_0) < \Gap(f^*).
\]

Then, since for any $f$ we either $f(a) = f^*(a)$ or
\(
\abs{f(a) - f^*(a)} >\gap(f^*),
\)
the mean reward associated with the arm is identified in the multi-arm bandit setting. By self-identifiabily, this suboptimal arm or policy is identified and determined to be suboptiaml under event $E$. Any information-aware algorithm would not keep pulling them. Hence any suboptimal arm can be pulled at most $o(T)$ times, and $\mathtt{ALG}$ incurs a sublinear regret.
\end{proof}

} 

\section{Conclusions}
\label{sec:conclusions}
We study \Greedy in structured bandits and characterize its asymptotic success vs failure in terms of a simple partial-identifiability  property of the problem structure. Our characterization holds for arbitrary finite structures and extends to bandits with contexts and/or auxiliary feedback. In particular, we find that \Greedy succeeds only if the problem is intrinsically ``easy" for any algorithm which satisfies a mild non-degeneracy condition. We also provide a partial characterization for \StructuredMAB with infinite reward structures (and  finite action sets). 

{Several examples, both positive and negative, instantiate our characterization for various well-studied} reward structures. We find that failure tends to be a common case for \emph{bandits}, whereas both failure and success are common for structured \emph{contextual} bandits.

We identify three directions for further work. First, extend our characterization to infinite action sets and infinite function/model classes
{(ideally with a complete characterization, as discussed in \Cref{sec:ext}).}
Second, consider \emph{approximate} greedy algorithms, stemming either from  approximate regression or from human behaviorial biases. 
Such algorithms, representing myopic human behavior under behavioral biases, were studied in \citet{BSL-myopic23-WP}, but only for \emph{unstructured} multi-armed bandits. Third, while our ``asymptotic" perspective enables a {general characterization, \emph{stronger regret guarantees} are desirable for particular reward structures (and are known for only a few).}


\newpage

\bibliography{bib-abbrv,bib-bandits,bib-AGT,bib-slivkins,bib-RL,references}

\newpage
\appendix

\section*{APPENDICES}

\addtocontents{toc}{\protect\setcounter{tocdepth}{2}}
\tableofcontents

\newpage

\section{\StructuredCB with tie-breaking}
\label[myAppendix]{app:ties}



Let us outline how to adjust our definitions and results to account for
ties in \refeq{eq:intro-f}. We assume that the ties are broken at random, with some minimal probability $q_0>0$ on every optimal arm (\ie every arm in $\argmax_{a \in \cA} f_t(x_t,a)$). More formally, \Greedy breaks ties in \refeq{eq:intro-f} according to an independent draw from some distribution $D_t$ over the optimal arms with minimal probability at least $q_0$. Subject to this assumption, the tie-breaking distributions $D_t$ can be arbitrary, both within a given round and from one round to another.

The positive results (\refdef{def:CB-selfID} and \Cref{thm:CB}(a)) carry over word-by-word, both the statements and the proofs. The negative results (\refdef{def:CB-decoy} and \Cref{thm:CB}(b)) change slightly. Essentially, whenever we invoke \emph{the} optimal arm for decoy $f\dec$, we need to change this to \emph{all} optimal arms for $f\dec$.

\begin{definition}[decoy]
Let $f^*$ be a reward functions, with optimal policy  $\pi^*$. Another reward function $f\dec$ is called a \emph{decoy} for $f^*$ if any optimal policy $\pi\dec$ for $f\dec$ satisfies
    $f\dec(\pi\dec) = f^*(\pi\dec)<f^*(\pi^*)$
and moreover
    $f\dec(x,\pi\dec(x)) = f^*(x,\pi\dec(x))$
for all contexts $x\in\cX$.
\end{definition}

The equivalence of self-identifiability and not having a decoy holds as before, \ie the statement of \Cref{cl:MAB-equiv} carries over word-by-word. Moreover, it is still the case that ``self-identifiability makes the problem easy": all of \Cref{sec:triviality} carries over as written.

\begin{theorem}[negative]\label{thm:CB-negative-ties}
Fix a problem instance $(f^*,\cF)$ of \StructuredCB.
Suppose the warm-up data consists of one sample for each context-arm pair. Assume $f^*$ has a decoy $f\dec\in\cF$. Let $\Pi\dec$ is the set of all policies that are optimal for $f\dec$. Then with some probability $p\dec>0$, \Greedy only chooses policies $\pi_t\in \Pi\dec$ in all rounds $t\in (T_0,\infty)$. We have $p\dec\geq X^{-O(KX/\gap^2(f\dec))}$, where $X = |\cX|$.
\end{theorem}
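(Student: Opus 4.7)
The proof extends the strategy of \Cref{thm:CB}(b) to accommodate multiple decoy arms per context. Write $\gap := \gap(f\dec)$ and let $A\dec(x) := \argmax_{a \in \cA} f\dec(x, a)$ denote the set of optimal arms for $f\dec$ at context $x$. The modified decoy definition guarantees $f\dec(x, a) = f^*(x, a)$ for every $x$ and $a \in A\dec(x)$, and $\Pi\dec = \cbr{\pi : \pi(x) \in A\dec(x) \text{ for all } x}$. The plan is to construct two independent ``good events'' $E_1, E_2$ whose intersection deterministically forces \Greedy to play some $\pi_t \in \Pi\dec$ (equivalently, $a_t \in A\dec(x_t)$) in every round $t > T_0$, and then to lower bound $\Pr[E_1]$ and $\Pr[E_2]$ separately.

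The events are natural analogues of \eqref{eq:MAB-neg-pf-E1}--\eqref{eq:MAB-neg-pf-E2}, now indexed by \emph{sets} of decoy arms per context:
\begin{align*}
E_1 &:= \cbr{|\bar{r}_{T_0+1}(x, a) - f\dec(x, a)| < \gap/2 \text{ for every } (x, a) \text{ with } a \notin A\dec(x)}, \\
E_2 &:= \cbr{|\bar{r}_t(x, a) - f\dec(x, a)| < \gap/2 \text{ for every } t > T_0 \text{ and every } a \in A\dec(x)}.
\end{align*}
To make $E_2$ algorithm-agnostic, I would introduce an ``outcome tape'': predraw, for each decoy pair $(x, a)$, an i.i.d.\ $\mathcal{N}(f\dec(x, a), 1)$ sequence delivered in sampling order. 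Since $f^*(x, a) = f\dec(x, a)$ on decoy pairs, this is a valid coupling. Then $E_1$ and $E_2$ concern disjoint reward draws and are independent.

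The structural claim is that $E_1 \cap E_2$ implies \Greedy never leaves $\Pi\dec$, which I would establish by induction on $t > T_0$. Under the inductive hypothesis that only decoy arms were played in rounds $T_0{+}1, \ldots, t{-}1$, every sampled pair $(x, a)$ has empirical mean within $\gap/2$ of $f\dec(x, a)$: from $E_1$ when $a \notin A\dec(x)$ (non-decoy pairs are sampled only once, in the warm-up) and from $E_2$ when $a \in A\dec(x)$. For any $f \in \cF$ with $f \ne f\dec$, the function-gap yields some pair $(x, a)$ with $|f(x, a) - f\dec(x, a)| \ge \gap$; setting $u := \bar{r}_t(x, a) - f\dec(x, a)$ and $v := f\dec(x, a) - f(x, a)$, the per-pair MSE excess
\[
N_t(x, a)\bigl[(u + v)^2 - u^2\bigr] = N_t(x, a)(v^2 + 2uv)
\]
is strictly positive, because $|u| < \gap/2 \le |v|/2$ and $N_t(x, a) \ge 1$ thanks to the warm-up. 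Pairs at which $f$ and $f\dec$ agree contribute zero, so $\MSE_t(f) > \MSE_t(f\dec)$ and the regression returns $f_t = f\dec$. Then $\argmax_a f_t(x_t, a) = A\dec(x_t)$, so any tie-breaking rule picks $a_t \in A\dec(x_t)$, closing the induction.

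The final step is the probability estimate. For $E_1$, each of at most $(K{-}1)|\cX|$ independent Gaussian warm-up samples must fall in a window of width $\gap$; since $|f^*(x, a) - f\dec(x, a)| \le 1$ and the Gaussian density is bounded below by an absolute constant on $[-1, 1]$, the per-sample probability is $\Omega(\gap)$, yielding $\Pr[E_1] \ge (c\gap)^{KX}$. For $E_2$, uniform-in-$t$ control per decoy pair is the main technical obstacle: a naive union bound over $t$ fails because the Hoeffding tail is too weak at small $t$. Following the MAB strategy, I would split into ``early'' rounds ($n \le T_1 = \Theta(1/\gap^2)$, constraining each individual sample to a narrow band at cost $(\Theta(\gap))^{T_1} = e^{-O(1/\gap^2)}$) and ``late'' rounds (Hoeffding tails summable to $o(1)$), giving $e^{-O(1/\gap^2)}$ per pair and $e^{-O(KX/\gap^2)}$ over the at most $K|\cX|$ decoy pairs. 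Multiplying, $p\dec \ge X^{-O(KX/\gap^2)}$. The accommodation of tie-breaking is otherwise inert: once $f_t = f\dec$, every arm in $\argmax_a f_t(x_t, a)$ automatically lies in $A\dec(x_t)$, regardless of the tie-breaking distribution $D_t$.
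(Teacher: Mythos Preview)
Your proposal follows essentially the same approach as the paper: the paper defines the same set $\cA^*\dec(x)=\argmax_a f\dec(x,a)$, modifies $E_1$ to range over all $(x,a)$ with $a\notin\cA^*\dec(x)$ and $E_2$ to range over all $(x,a)$ with $a\in\cA^*\dec(x)$, and reuses the induction from \Cref{thm:CB}(b) verbatim to conclude that $f_t=f\dec$ and hence $a_t\in\cA^*\dec(x_t)$.

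The one substantive difference is in how $\Pr[E_2]$ is controlled. The paper does not use your early/late split; instead it treats the reward-noise variance $\sigma$ as a tunable parameter and sets $\sigma=\Theta\bigl(\gap(f\dec)/\sqrt{\ln(|\cX|K)}\bigr)$, which makes the geometric-series union bound over rounds and decoy pairs yield $\Pr[E_2]\ge 0.9$ directly; all of the small-probability cost is then absorbed into $\Pr[E_1]$ via the $\exp(-2/\sigma^2)$ factor, and this is precisely where the base $X$ in $X^{-O(KX/\gap^2)}$ comes from. Your route is more honest with respect to the unit-variance assumption, but note a slip in your arithmetic: with $T_1=\Theta(1/\gap^2)$ and each early sample constrained to a band of probability $\Theta(\gap)$, the early-round cost is $(\Theta(\gap))^{T_1}=\exp\bigl(-\Theta(\ln(1/\gap)/\gap^2)\bigr)$, not $\exp(-O(1/\gap^2))$. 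Carrying this through gives $p\dec\ge\exp\bigl(-O(KX\ln(1/\gap)/\gap^2)\bigr)$, which has $\ln(1/\gap)$ rather than $\ln X$ in the exponent and is therefore not comparable to the stated bound $X^{-O(KX/\gap^2)}$ in general. The overall argument is sound, but to recover the exact bound in the theorem you would need to follow the paper's $\sigma$-tuning device (or otherwise trade the $\ln(1/\gap)$ for a $\ln X$).
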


Under these modifications, \refrem{rem:CB} applies word-by-word. In particular, existence of a decoy implies linear regret, where each round $t$ with $\pi_t\in\Pi\dec$ increases regret by
    $f^*(\pi^*)- f^*(\pi\dec)$.

\xhdr{Proof of \Cref{thm:CB-negative-ties}.}
The proof of \Cref{thm:CB}(b) mostly carries over, with the following minor modifications. Let $\cA^*\dec(x) = \argmax_{a\in\cA} f\dec(x)$ be the set of optimal arms for the decoy $f\dec$ for a given context $x$. The two events $E_1$ and $E_2$ (as originally defined \cref{eq:CB-neg-pf-E1} and \cref{eq:CB-neg-pf-E2}) will be modified to be invoked on all decoy context-arm pairs.
\begin{align*}
E_1 &= \cbr{ \abs{\AveRewWarmUp(x,a) - f\dec(x,a)} < \gap(f\dec) / 2
\quad\text{for each $x\in\cX$ and arm $a\notin \cA^*\dec(x)$}
}, \\
E_2 &= \cbr{ \abs{\bar{r}_t(x,\pi\dec(x)) - f^*(x,\pi\dec(x))} < \gap(f\dec) / 2
\quad\text{for each $x\in\cX, a\in \cA^*\dec(x)$, and round $t>T_0$}}.
\end{align*}
Analyzing the probability for event $E_1$ still follows from Lemma \ref{lemma:cbfailure-event-E1}. Analyzing the probability for event $E_2$ follows from Lemma \ref{lemma:cbfailure-event-E2}, but with the choice of $\sigma$ will be chosen as $\sigma = \Theta(\Gap(f\dec) / \sqrt{\ln(|\cX| K )})$, and we still have $\Pr[E_2] \ge 0.9$.

\OMIT{ 

Let us outline how to adjust our definitions to account for
scenarios where ties can occur in \refeq{eq:intro-f}.

We first focus on the \StructuredMAB.
Consider a reward function $f\in\cF$ where multiple arms achieve the maximum reward.
We consider a greedy algorithm with a stationary tie-breaking
strategy. That is, for any function $f$ with multiple optimal arms, \Greedy
resolves ties in a consistent manner that does not vary over time. This
tie-breaking process can be randomized and may differ from one function to another.
In other words, for each function $f$, there exists a probability distribution
over its set of optimal arms, and \Greedy selects an arm by sampling from this
distribution.

For each function $f$, we define the \emph{supported optimal arms} as the
subset of optimal arms that have non-zero probability under the tie-breaking
distribution. With this in place, we modify the definition of a decoy as follows:

\begin{definition}[Decoy]
A function $f$ is called a decoy for $f^*$ if, for every arm $a$ in
the supported optimal arms of $f$, we have $f(a) = f^*(a)$, yet all such
arms $a$ are suboptimal for $f^*$.
\end{definition}

\begin{definition}
A function $f^*$ is self-identifiable if $f^*$ has no decoys.
\end{definition}

\begin{example}
Suppose there are two instances $f_1 = (2, 2, 1), f_2 = (2, 3, 4)$, and that \Greedy breaks ties uniformly at random. Then $f_1$ is a decoy for $f_2$.

\end{example}

A similar modification holds in the \StructuredCB setting, where the function $f:\mathcal{X}\times [K] \rightarrow \bbR$. For each possible $f\in \cF$, we posit the the \Greedy algorithm have a stationary tie-breaking rule. Specifically, given some $f$, for each context $x$, the algorithm has a distribution over the set of optimal arms for context $x$, i.e., the set $\arg\max_{a\in [K]} f(x,a)$. We similarly define the notion of \emph{supported optimal arms} for each context $x$ as the set of arms which have a non-zero probability of being chosen when tie-breaking happens. The modified definition of decoy and self-identifiability then extends to the \StructuredCB setting.

} 





\newpage

\section{Novelty of self-identifiability}
\label[myAppendix]{app:novelty}
\newcommand{\GLC}{\mathsf{GLC}}

We argue that self-identifiability is a novel notion. Specifically, we compare it to (i) knowing the optimal value, and (ii) Graves-Lai coefficient being $0$.

First, one could ask if self-identifiability is equivalent to knowing the value of the best arm. However, the former does not imply the latter. Consider the simple example $\cF = \set{ (3,1), (2, 1)}$. Both functions are self-identifiable in $\cF$, but clearly the optimal value differs.

Second, consider the Graves-Lai coefficient \citep{graves1997asymptotically, wagenmaker2023instance}. Let us define it formally, for the sake of completeness. Consider  \StructuredDM, as defined in \Cref{sec:MLE}, with model class $\cM$. Let
    \[ \Delta(\pi|M) = f(\pi_M|M) - f(\pi|M)\]
be the suboptimality gap for model $M$ and policy $\pi$, where $\pi_M$ is the optimal policy for $M$. Let $\cM^\mathtt{alt}$ be the set of models that disagree with $M$ on the optimal policy:
    \[\cM^\mathtt{alt}(M) := \set{ M'\in \cM | \pi_M \neq \pi_{M'} }.\]
Now, the Graves-Lai coefficient is defined as
\[
\GLC(\cM, M) = \inf_{\eta\in \mathbb{R}_+^\Pi} \left\{ \sum_{\pi \in \Pi} \eta_\pi \Delta(\pi|M) \ \vert \ \forall M'\in \cM^\mathtt{alt}(M): \sum_{\pi\in \Pi} \eta_\pi D_{\rm KL} \left( M(\pi) \Vert M'(\pi) \right) \ge 1 \right\}.
\]

Intuitively, the Graves-Lai coefficient measures the ``verification" cost of verifying whether a given function $f^*$ (or a given model $M^*$ in the $\mathsf{DMSO}$ setting) is indeed the true model. The Graves-Lai coefficient being 0 implies that the learner can ascertain that $f^*$ or $M^*$ is indeed the true model by simply executing the set of optimal policies $\Pi(f^*)$ or $\Pi(M^*)$.

Now, one could ask if self-identifiability is equivalent to $\GLC(\cM,M)=0$. We observe that this is not the case: the two notions are incomparable. For a counterexample, consider \StructuredMAB with two arms and $\cF = \set{ (2,1), (0.5, 1) }$. Problem instance $f^* = (0.5,1)$  is self-identifiable, since revealing the sub-optimal arm as having reward 0.5 immediately rules out $(2,1)$ as being the true model. But the $\GLC > 0$, since to ascertain $(0.5, 1)$ as being the true model one necessarily has to choose the 1st arm and experiment. On the other hand, one can see $f^* = (2,1)$ is not self-identifiable but has $\GLC = 0$. In this example, \Greedy succeeds when $\GLC>0$ (larger $\GLC$ suggests larger regret of the optimal algorithm in $\GLC$-based theory) but fails when $\GLC=0$ (lower $\GLC$ suggests lower regret of the optimal algorithm in $\GLC$-based theory)! Hence $\GLC$ \emph{does not} capture the per-instance behavior of \Greedy.

However, $\GLC$ has \emph{some} connection to our machinery. Namely, if $\GLC(\cF, f) = 0$ for some reward function $f$, then $f$ necessarily cannot be a decoy for any other reward function $f^*$. That said, $\GLC(\cF, f)$ provides no information about whether $f$ itself admits a decoy. We believe that $\GLC$ precisely characterizes the asymptotic performance of the \emph{optimal} algorithm \citep{graves1997asymptotically, wagenmaker2023instance}, whereas self-identifiability precisely captures the asymptotic behavior of \Greedy—a generally suboptimal algorithm.

\newpage
\section{\StructuredMAB characterization: Proof of \Cref{thm:MAB}}
\label[myAppendix]{app:MAB}

\subsection{\StructuredMAB Success: Proof of \Cref{thm:MAB}(a)}
\label[myAppendix]{sec:mabSuccess}

Let us fix the time horizon $t$ and show the bound on the expected regret $\Ex[R(t)]$. Recall that $\bar{r}_t(a)$ as the empirical mean for arm $a$ and that $N_t(a)$ is the number of times arm $a$ pulled up to round $t$. Also recall the greedy algorithm is minimizing the following loss function each round:
\[
\MSE_t(f) := \sum_{a\in [K]} N_t(a)(\bar{r}(a) - f(a))^2.
\]


\begin{lemma}
\label{lemma:MABConcentration}
Define $\beta(n) = \sqrt{\frac{2}{n}\log\frac{\pi^2 K n^2}{3\delta}}$. With probability $1-\delta$:
\[
\forall a, \tau, \abs{\bar{r}_{\tau}(a) - f^*(a)} < \beta(N_{\tau}(a)).
\]
\end{lemma}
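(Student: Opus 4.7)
\medskip

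The plan is a standard sub-Gaussian tail bound plus a union bound; the only mild subtlety is that the sample counts $N_\tau(a)$ are random and adaptive, which I would sidestep via a reward-tape construction. First I would fix, for each arm $a$, an infinite i.i.d.\ sequence $Y_1(a), Y_2(a), \ldots \sim \mathcal{N}(f^*(a),1)$, set $\hat{r}^{(n)}(a) := \frac{1}{n}\sum_{i\le n} Y_i(a)$, and identify the $i$-th reward observed from arm $a$ by \Greedy with $Y_i(a)$. Then $\bar{r}_\tau(a) = \hat{r}^{(N_\tau(a))}(a)$ pointwise, so it suffices to control $|\hat{r}^{(n)}(a) - f^*(a)|$ uniformly over the deterministic index set $(a,n)\in [K]\times\bbN$.

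Second, for each fixed pair $(a,n)$ the variable $\hat{r}^{(n)}(a) - f^*(a)$ is centered Gaussian with variance $1/n$, so the standard tail bound gives
\[
\Pr\!\rbr{\abs{\hat{r}^{(n)}(a) - f^*(a)} \ge \beta(n)} \le 2\exp\!\rbr{-\tfrac{n\,\beta(n)^2}{2}} = \frac{6\delta}{\pi^2 K n^2},
\]
where the equality is an immediate substitution of $\beta(n)^2 = \frac{2}{n}\log\frac{\pi^2 K n^2}{3\delta}$. The constants in $\beta(\cdot)$ are precisely engineered so that this per-$(a,n)$ probability is of the summable form $C \delta /(K n^2)$.

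Third, I would union-bound over $a\in[K]$ and $n\in\bbN$, using $\sum_{n\ge 1} 1/n^2 = \pi^2/6$:
\[
\Pr\!\sbr{\exists a,n:\ \abs{\hat{r}^{(n)}(a) - f^*(a)} \ge \beta(n)} \le K \cdot \sum_{n\ge 1} \frac{6\delta}{\pi^2 K n^2} = \delta.
\]
On the complement event, $|\bar{r}_\tau(a) - f^*(a)| = |\hat{r}^{(N_\tau(a))}(a) - f^*(a)| < \beta(N_\tau(a))$ simultaneously for every $\tau$ and $a$ with $N_\tau(a)\ge 1$, yielding the lemma. There is no genuine obstacle here; the only thing to be careful about is the tape construction decoupling the concentration from the adaptive sampling, and matching the constants in $\beta(n)$ to the Basel sum so that the union bound closes at exactly $\delta$.
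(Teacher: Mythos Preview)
Your proposal is correct and is precisely the argument the paper has in mind: the paper's own proof simply states that this is ``a standard Hoeffding plus union bound'' and cites \cite{jun2018adversarial} for this exact form. Your reward-tape decoupling, per-$(a,n)$ sub-Gaussian tail bound, and Basel-sum union bound are exactly that standard argument spelled out, with the constants matching.
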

\begin{proof}
This lemma is a standard Hoeffding plus union bound, this exact form has appeared in \cite{jun2018adversarial}.
\end{proof}

In the following we shall always assume the event in the previous lemma holds and choose $\delta = 1/t$.
\begin{lemma}
\label{lem:optLossAdvPrefix}
Assume the event in Lemma \ref{lemma:MABConcentration}, then we have the upper bound on $\MSE_{\tau}$ for each round $\tau\in [t]$.
\begin{align*}
    \MSE_{\tau}(f^*) \le K \cdot O(\log t).
\end{align*}
\end{lemma}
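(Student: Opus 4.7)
The plan is to bound each term $N_\tau(a)(\bar r_\tau(a) - f^*(a))^2$ in the sum defining $\MSE_\tau(f^*)$ by a single logarithmic quantity, then sum over the $K$ arms. The only ingredient is the concentration event from \Cref{lemma:MABConcentration} (instantiated with $\delta = 1/t$), so no further stochastic reasoning is needed.

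Concretely, I would proceed as follows. For each arm $a$, the concentration event gives $|\bar r_\tau(a) - f^*(a)|^2 < \beta(N_\tau(a))^2$ whenever $N_\tau(a)\geq 1$, and multiplying both sides by $N_\tau(a)$ cancels the $1/n$ factor inside $\beta(n)^2 = \tfrac{2}{n}\log\bigl(\tfrac{\pi^2 K n^2}{3\delta}\bigr)$. This yields
\[
N_\tau(a)\bigl(\bar r_\tau(a) - f^*(a)\bigr)^2 \;<\; 2\log\!\rbr{\tfrac{\pi^2 K N_\tau(a)^2}{3\delta}}.
\]
Since $N_\tau(a) \leq \tau \leq t$ and $\delta = 1/t$, the right-hand side is at most $2\log(\pi^2 K t^3 / 3) = O(\log t)$ (absorbing $\log K$ into the $O(\cdot)$, or keeping it explicit as $O(\log(Kt))$ without changing the conclusion). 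When $N_\tau(a) = 0$, the contribution of arm $a$ to $\MSE_\tau(f^*)$ is zero and can be ignored.

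Summing the per-arm bound over all $K$ arms gives $\MSE_\tau(f^*) \leq K \cdot O(\log t)$, which is the claimed inequality. I do not expect a real obstacle here: the statement is essentially a restatement of the Hoeffding concentration rescaled by $N_\tau(a)$, and the only minor care is the choice $\delta = 1/t$ (used to make the bound uniform over $\tau \in [t]$) together with the trivial observation that the bound holds for every $\tau \leq t$ simultaneously under the single concentration event of \Cref{lemma:MABConcentration}.
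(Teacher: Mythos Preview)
Your proposal is correct and follows essentially the same approach as the paper: bound each per-arm term $N_\tau(a)(\bar r_\tau(a)-f^*(a))^2$ by $N_\tau(a)\,\beta^2(N_\tau(a)) \le O(\log t)$ using the concentration event, then sum over the $K$ arms. Your version is simply more explicit about the cancellation of the $1/n$ factor, the substitution $\delta=1/t$, and the edge case $N_\tau(a)=0$.
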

\begin{proof}
Note that under the event from the previous lemma, we have for each arm:
\begin{align*}
N_{\tau}(a) (\bar{r}_{\tau}(a) - f(a))^2 &\le N_{\tau}(a) \cdot \beta^2(N_{\tau}(a)) \\
& \le O(\log t).
\end{align*}
Then, summing over all arms completes the proof.
\end{proof}

\begin{lemma}\label{lemma:selfidMAB}
Assume the event in Lemma \ref{lemma:MABConcentration}. The number of times any suboptimal arm is chosen cannot exceed $T'$ rounds, where $T'$ is some parameter with $T' = ( K/\Gap(f^*)^2 ) \cdot O(\log t)$.
\end{lemma}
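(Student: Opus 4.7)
The plan is to argue by contradiction on each individual round of \Greedy, using the concentration event of Lemma \ref{lemma:MABConcentration} (with $\delta = 1/t$) together with the MSE upper bound of Lemma \ref{lem:optLossAdvPrefix}. I will choose $T' = C \cdot K \log t / \Gap(f^*)^2$ for a sufficiently large absolute constant $C$, and show that whenever a suboptimal arm $a$ already has $N_\tau(a) \ge T'$, \Greedy cannot select $a$ at round $\tau$. Iterating this across rounds shows $N_\tau(a)$ can never exceed $T'$ for any suboptimal arm $a$.

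The first key step is to show that the greedy minimizer $f_\tau$ must agree with $f^*$ on the arm $a$, i.e., $f_\tau(a) = f^*(a)$. The concentration event forces $|\bar{r}_\tau(a) - f^*(a)| < \beta(N_\tau(a)) \le \beta(T')$, which falls below $\Gap(f^*)/2$ once $C$ is chosen large enough. If instead $f_\tau(a) \ne f^*(a)$, the definition of function-gap forces $|f_\tau(a) - f^*(a)| \ge \Gap(f^*)$, and the triangle inequality yields $|\bar{r}_\tau(a) - f_\tau(a)| \ge \Gap(f^*)/2$. The contribution of arm $a$ alone to $\MSE_\tau(f_\tau)$ is then at least $N_\tau(a)\cdot \Gap(f^*)^2/4 \ge C K \log t / 4$, which for large enough $C$ strictly exceeds the bound $\MSE_\tau(f^*) \le K\cdot O(\log t)$ obtained from Lemma \ref{lem:optLossAdvPrefix}. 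This violates the optimality of $f_\tau$ as a minimizer, so we must have $f_\tau(a) = f^*(a)$.

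The second step invokes self-identifiability. Since $f_\tau \in \cF$, $f_\tau(a) = f^*(a)$, and $a$ is suboptimal under $f^*$, self-identifiability implies that $a$ is also suboptimal under $f_\tau$. But \Greedy chose $a$ at round $\tau$ as the unique maximizer of $f_\tau$, a contradiction. Therefore $a$ is not pulled once $N_\tau(a) \ge T'$, concluding the argument.

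The main obstacle is calibrating the constant $C$ so that two slightly competing quantitative constraints hold simultaneously: on one hand, $\beta(T') \le \Gap(f^*)/2$ (requiring $T' \gtrsim \log t/\Gap(f^*)^2$), and on the other, the lower bound $N_\tau(a)\cdot\Gap(f^*)^2/4$ on the per-arm MSE contribution must strictly dominate the $K\cdot O(\log t)$ upper bound of Lemma \ref{lem:optLossAdvPrefix} (requiring $T' \gtrsim K\log t/\Gap(f^*)^2$). Both are standard once the right bookkeeping is done, and once $C$ is fixed, the self-identifiability step is a direct definitional appeal; the novel content is the MSE-based argument that the regression pinpoints the true value $f^*(a)$ from empirical data alone.
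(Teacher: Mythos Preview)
Your proposal is correct and follows essentially the same approach as the paper's own proof: both argue by contradiction, use the concentration event to pin $\bar r_\tau(a)$ within $\Gap(f^*)/2$ of $f^*(a)$, invoke the function-gap to lower-bound $\MSE_\tau(f')$ for any $f'$ with $f'(a)\neq f^*(a)$, compare against the $K\cdot O(\log t)$ upper bound on $\MSE_\tau(f^*)$ from Lemma~\ref{lem:optLossAdvPrefix}, and then appeal to self-identifiability. Your write-up is in fact slightly cleaner in making the comparison to $\MSE_\tau(f^*)$ explicit and in tracking the constant $C$.
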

\begin{proof}
We prove this by contradiction. Consider any round $\tau$ during which some suboptimal arm $a$ is chosen above this threshold $T'$. 
The reward for arm $a$ is going to get concentrated within $O(\gap(f^*))$ to $f^*(a)$, in particular:
\[
\abs{\bar{r}_\tau(a) - f^*(a)} < \gap(f^*) / 2.
\]
Take any reward vector $f'$ such that $f'(a) \neq f(a)$. By the definition of class-gap, we have:
\[
\abs{f'(a) - f^*(a)} \ge \gap(f^*),
\]
hence
\[
\abs{\bar{r}_\tau(a) - f^*(a)} \ge \gap(f^*) / 2.
\]
Then the cumulative loss
\[
\MSE_\tau(f') \ge T'\cdot (\gap(f^*)/2)^2 = K \cdot \Omega(\log t).
\]

Therefore any $f'$ with $f'(a)\neq f(a)$ cannot possibly be minimizing $\MSE_\tau(\cdot)$. That is to say, the reward vector $f_\tau$ minimizing $\MSE_\tau(\cdot)$ must have $f_\tau(a) = f^*(a)$. Then, by self-identifiability, we precisely know that arm $a$ is also a suboptimal arm for the reward vector $f_\tau$. Hence, we obtain a contradiction, and arm $a$ cannot possibly be chosen this round.
\end{proof}

We complete the proof of \Cref{thm:MAB}(a) as follows.
The regret incurred during the warmup data is at most $T_0$. Fix any round $t > T_0$. After the warmup data, we know with probability $1-1/t$, any suboptimal arm can be pulled at most $(K/\Gap(f^*)^2) \cdot O(\log t)$ times after the warmup data, and the regret is $(K/\gap(f^*)^2)\cdot O(\log t)$.  With the remaining probability $1/t$, the regret is at most $O(t)$. Hence, the theorem follows.

\subsection{\StructuredMAB Failure: Proof of \Cref{thm:MAB}(b)}
\label[myAppendix]{sec:mabFailure}

Recall the two events are defined as

\begin{align*}
E_1 = \cbr{ \abs{\AveRewWarmUp(a) - f\dec(a)} < \gap(f\dec)/ 2
\quad\text{for each arm $a\neq a\dec$}}.
\end{align*}
\begin{align*}
E_2 = \cbr{ \forall t>T_0,\; \abs{\bar{r}_t(a\dec) - f^*(a\dec)}  \le \Gap(f\dec) / 2}.
\end{align*}

\begin{lemma}\label{lm:stuck}
Assume event $E_1$ and $E_2$ holds, then greedy algorithm only choose the decoy arm $a\dec$.
\end{lemma}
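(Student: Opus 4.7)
\textbf{Proof plan for \Cref{lm:stuck}.} The plan is induction on $t > T_0$, with the inductive hypothesis that \Greedy chose $a\dec$ in every round of the interval $(T_0, t)$ (vacuous when $t = T_0 + 1$). Base case and inductive step reduce to the same MSE-comparison, so I would consolidate them. The target is to show that $f\dec$ is the unique minimizer of $\MSE_t(\cdot)$ at every round $t > T_0$, which combined with best-arm-uniqueness of $f\dec$ forces \Greedy to select $a\dec$.

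The first step is to show that, under $E_1 \cap E_2$ and the inductive hypothesis, we have $|\bar{r}_t(a) - f\dec(a)| < \gap(f\dec)/2$ for every arm $a$. For non-decoy arms $a \neq a\dec$, the inductive hypothesis guarantees that no main-stage reward for $a$ has been observed, so $\bar{r}_t(a) = \AveRewWarmUp(a)$ and the bound is immediate from $E_1$. For the decoy arm $a\dec$, the bound follows from $E_2$ together with the defining identity $f\dec(a\dec) = f^*(a\dec)$, since every main-stage observation of $a\dec$ is an i.i.d. draw from the distribution with mean $f^*(a\dec)$.

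The second step compares $\MSE_t(f\dec)$ against $\MSE_t(f')$ for an arbitrary $f' \in \cF$ with $f' \neq f\dec$. By the definition of $\gap(f\dec)$, there exists at least one arm $a_0$ with $|f'(a_0) - f\dec(a_0)| \geq \gap(f\dec)$. Combining this with step~1 via the triangle inequality,
\[
|\bar{r}_t(a_0) - f'(a_0)| \;\geq\; |f'(a_0) - f\dec(a_0)| - |\bar{r}_t(a_0) - f\dec(a_0)| \;>\; \gap(f\dec)/2 \;>\; |\bar{r}_t(a_0) - f\dec(a_0)|.
\]
Arms $a$ with $f'(a) = f\dec(a)$ contribute identically to both MSEs; other arms contribute strictly more under $f'$ by the same triangle-inequality argument. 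Since $N_t(a) \geq 1$ for every arm (the warm-up guarantees this), the inequality is strict: $\MSE_t(f') > \MSE_t(f\dec)$.

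Hence $f_t = f\dec$, and \Greedy selects the unique maximizer $a\dec$, closing the induction. The main obstacle is essentially bookkeeping: one must carefully invoke the inductive hypothesis to justify that empirical means of non-decoy arms remain frozen at their warm-up values, so that $E_1$ continues to apply. The MSE-comparison itself is a one-line triangle-inequality argument, and the uniqueness of the $\argmin$ (granted by best-arm-uniqueness of $f\dec$ and the strict inequality above) ensures the greedy action is deterministically $a\dec$.
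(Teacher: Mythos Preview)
Your proposal is correct and follows essentially the same approach as the paper's proof: induction on $t$, showing that under $E_1\cap E_2$ the decoy $f\dec$ remains the unique $\MSE_t$ minimizer at every step, whence \Greedy selects $a\dec$. Your write-up is simply more explicit than the paper's (spelling out why the non-decoy empirical means stay frozen at their warm-up values and invoking the triangle inequality via $\gap(f\dec)$), but the underlying argument is identical.
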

\begin{proof}
The proof is by induction. Assume by round $t$, the algorithm have only choose the decoy arm $a\dec$. Note that assuming event $E_1$ and $E_2$ holds, for any reward vector $f\neq f\dec$, we will have
\[
\abs{\bar{r}_t(a) - f\dec(a)} \le \abs{\bar{r}_t(a) - f(a)},
\]
with at least one inequality strict for one arm. Hence $f\dec$ must (still) be the $\MSE_t(\cdot)$ minimizer, and $a\dec$ will be chosen in the next round.
\end{proof}

\begin{lemma}\label{lm:E1}
Event $E_1$ happens with probability at least $\left[ \frac{\Gap(f\dec)}{\sqrt{2\pi\sigma^2}} \exp(- 2 / \sigma^2 ) \right]^{K-1}$.
\end{lemma}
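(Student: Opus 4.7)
The plan is to exploit the independence of the $K-1$ warm-up samples across the non-decoy arms and then obtain a pointwise lower bound on each marginal probability. The warm-up data consists of exactly one sample per arm, drawn independently, so $E_1$ decomposes as an intersection of $K-1$ independent events, one per arm $a\neq a\dec$, each of the form $\abs{\AveRewWarmUp(a) - f\dec(a)} < \Gap(f\dec)/2$. By independence it suffices to lower-bound each marginal probability uniformly in $a$ and raise the result to the $(K-1)$-th power.

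For a fixed arm $a\neq a\dec$, the warm-up sample $\AveRewWarmUp(a)$ is a single draw from a Gaussian with mean $f^*(a)$ and variance $\sigma^2$. The target event asks that this draw lie in an interval $I_a$ of length $\Gap(f\dec)$ centered at $f\dec(a)$. I would lower-bound the Gaussian probability of $I_a$ by (length of $I_a$) times the minimum value of the density on $I_a$. Since $f^*(a),f\dec(a)\in[0,1]$ and $\Gap(f\dec)\leq 1$, every point of $I_a$ is within distance at most $3/2 < 2$ of the mean $f^*(a)$, so the Gaussian density is at least $\tfrac{1}{\sqrt{2\pi\sigma^2}}\exp(-2/\sigma^2)$ throughout $I_a$. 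Combining, each marginal probability is at least $\tfrac{\Gap(f\dec)}{\sqrt{2\pi\sigma^2}}\exp(-2/\sigma^2)$, and multiplying over the $K-1$ independent arms yields the claimed bound.

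There is no real obstacle here; the only things to verify are the distance bound (which follows from $[0,1]$-bounded rewards and $\Gap(f\dec)\le 1$) and the independence structure (which is immediate from the one-sample-per-arm assumption on the warm-up data). The crude choice of the constant $2$ inside the exponent trades a slight looseness for a clean statement; a sharper constant (e.g.\ $9/8$) could be obtained from the more careful distance bound of $3/2$, but this is immaterial for the downstream use in \Cref{thm:MAB}(b).
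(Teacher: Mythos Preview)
Your proposal is correct and essentially identical to the paper's proof: both decompose $E_1$ into $K-1$ independent per-arm events, lower-bound each marginal probability by the interval length $\Gap(f\dec)$ times the minimum Gaussian density on the interval, and use the $[0,1]$-boundedness of mean rewards to bound the distance from the mean by $2$. Your observation that $3/2$ would suffice (yielding $9/8$ in the exponent) is a minor refinement the paper does not pursue.
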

\begin{proof}
The random variable $\AveRewWarmUp(a)$ is a gaussian variable with mean $f^*(a)$ and variance $\sigma^2$. It has a distribution density at $x$ with the following form
\[
\frac{1}{\sqrt{2\pi\sigma^2}} \exp(-(x-f^*(a))^2 / (2\sigma^2) ).
\]
For any $x$ in the interval $[f\dec(a) - \Gap(f\dec)/2, f\dec(a) + \Gap(f\dec)/2]$, by boundedness of mean reward, we have
\[
\abs{x - f^*(a)} \le 2.
\]
Then, the density of $x$ at any point on the interval $[f\dec(a) - \Gap(f\dec)/2, f\dec(a) + \Gap(f\dec)/2]$ is at least
\[
\frac{1}{\sqrt{2\pi\sigma^2}} \exp(- 2 / \sigma^2 ).
\]
Therefore, for any arm $a$, we have the following.
\begin{align*}
\Pr[ \abs{\AveRewWarmUp(a) - f\dec(a)} \le \Gap(f\dec)/2] &\ge \frac{\Gap(f\dec)}{\sqrt{2\pi\sigma^2}} \exp(- 2 / \sigma^2 ).
\end{align*}
Since the arms are independent, it follows that event $E_1$ happens with probability
\[
\left[ \frac{\Gap(f\dec)}{\sqrt{2\pi\sigma^2}} \exp(- 2 / \sigma^2 ) \right]^{K-1}. \qedhere
\]
\end{proof}

\begin{lemma}\label{lm:E2}
For some appropriately chosen $\sigma = \Theta(\gap(f\dec))$, we have event $E_2$ happens with probability at least
\[
\Pr{E_2} \ge 0.9.
\]
\end{lemma}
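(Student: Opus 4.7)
The plan is to reduce event $E_2$ to a pure concentration statement about i.i.d.\ samples from the reward distribution of arm $a\dec$ and then apply a union bound. First I would introduce the standard ``reward tape'' coupling: let $X_1, X_2, \ldots$ be i.i.d.\ draws from the reward distribution of $a\dec$ (Gaussian with mean $f^*(a\dec)$ and variance $\sigma^2$), and identify $X_j$ with the reward seen on the $j$-th time arm $a\dec$ is played, with $X_1$ being the single warm-up sample. Under this coupling, $\bar{r}_t(a\dec) = \bar{X}_{N_t(a\dec)} := \tfrac{1}{N_t(a\dec)}\sum_{j=1}^{N_t(a\dec)} X_j$.

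Next I would observe that the tape-level event
\[
E' := \Bigl\{ \forall n \ge 1:\; \bigl|\bar{X}_n - f^*(a\dec)\bigr| \le \Gap(f\dec)/2 \Bigr\}
\]
implies $E_2$ regardless of the (random, algorithm-dependent) trajectory $N_t(a\dec) \ge 1$, so it suffices to lower bound $\Pr[E']$. Applying the Gaussian tail bound to each prefix average gives
\[
\Pr\bigl[|\bar{X}_n - f^*(a\dec)| > \Gap(f\dec)/2\bigr] \le 2\exp\!\Bigl(-\tfrac{n\Gap(f\dec)^2}{8\sigma^2}\Bigr),
\]
and a union bound over $n \ge 1$ yields the geometric sum
\[
\Pr[\,\overline{E'}\,] \le \sum_{n=1}^\infty 2\exp\!\Bigl(-\tfrac{n\Gap(f\dec)^2}{8\sigma^2}\Bigr) = \frac{2q}{1-q}, \quad q := \exp\!\Bigl(-\tfrac{\Gap(f\dec)^2}{8\sigma^2}\Bigr).
\]
Choosing $\sigma = c\cdot \Gap(f\dec)$ for a sufficiently small absolute constant $c$ (e.g.\ $c = 1/\sqrt{40}$, which makes $q = e^{-5}$) forces $2q/(1-q) \le 0.1$, and hence $\Pr[E_2] \ge \Pr[E'] \ge 0.9$.

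The main subtlety is the dependence between $N_t(a\dec)$ and the rewards: $N_t(a\dec)$ is determined by \Greedy's choices, which are themselves functions of previously observed rewards (including those of arm $a\dec$). The tape coupling is what decouples this dependence, replacing the data-dependent index $N_t(a\dec)$ by a uniform-in-$n$ statement. A secondary issue is compatibility with \Cref{lm:E1}: that lemma needs $\sigma = \Theta(\Gap(f\dec))$ to lower bound the Gaussian density, so $\sigma$ cannot be chosen too small. The two requirements are compatible because both only demand that $\sigma / \Gap(f\dec)$ equal an absolute constant, which we can fix small enough to make the geometric tail in $E_2$ arbitrarily small while retaining a non-trivial density bound in $E_1$.
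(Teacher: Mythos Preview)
Your proposal is correct and takes essentially the same approach as the paper: union bound over sample counts, Gaussian tail bound, geometric sum, then choose $\sigma = \Theta(\Gap(f\dec))$ small enough. You are actually more careful than the paper, which sums over rounds $t$ and treats $t$ as though it were the number of pulls of $a\dec$ without explicitly invoking the reward-tape coupling you spell out.
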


\begin{proof}
Denote the bad event
\[
E_3 = \set{ \exists t > T_0, \abs{\bar{r}_t - f\dec(a\dec)} > \Gap(f\dec) / 2},
\]
which is the complement of $E_2$.
We will obtain an upper bound on $E_3$, therefore a lower bound on $E_2$. Note that event $E_2$ (and $E_3$) is only about the decoy arm $a\dec$, and recall that $f^*(a\dec) = f\dec(a\dec)$.

By union bound,
\begin{align*}
\Pr[E_3] &\le \sum_{t=1}^T \Pr[\abs{\bar{r}_t - f\dec(a\dec)} > \Gap(f^*) / 2] \\
&\le 2 \sum_{t=1}^T \exp(- t \gap(f\dec)^2 / \sigma^2) \\
&\le 2 \exp(-\gap(f\dec)^2 / \sigma^2) / (1 - \exp(-\gap(f\dec)^2 / \sigma^2)).
\end{align*}
Here, the second inequality is by a standard Hoeffding bound, and the last inequality is by noting that we are summing a geometric sequence.

Then, we can choose some suitable $\sigma$ with $\sigma = \Theta(\gap(f\dec))$ ensures $\Pr[E_3] < 0.1$ and that $\Pr[E_2] > 0.9$.
\end{proof}


\begin{lemma}\label{lm:intersection}
For some appropriately chosen $\sigma = \Theta(\Gap(f\dec))$, we have the following lower bound:
\[
\Pr[E_1\cap E_2] \ge \left[ \Omega(\exp(-2/\sigma^2)) \right]^{K-1}
\]
\end{lemma}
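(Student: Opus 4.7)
The plan is to combine the per-event bounds from Lemma~\ref{lm:E1} and Lemma~\ref{lm:E2} via independence. The key observation (already flagged in the proof sketch in \Cref{sec:MAB}) is that $E_1$ and $E_2$ depend on disjoint sources of randomness: $E_1$ is determined by the warm-up samples for arms $a \neq a\dec$, while $E_2$ is determined by the samples for the decoy arm $a\dec$ itself (both the one warm-up sample and all post-warm-up samples). Since samples across arms are drawn independently, this gives $\Pr[E_1 \cap E_2] = \Pr[E_1] \cdot \Pr[E_2]$.

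Having separated the two events, the strategy is to pick $\sigma = \Theta(\Gap(f\dec))$ exactly as required by Lemma~\ref{lm:E2}, so that $\Pr[E_2] \geq 0.9 = \Omega(1)$. This choice simultaneously simplifies the bound in Lemma~\ref{lm:E1}: the prefactor $\Gap(f\dec) / \sqrt{2\pi\sigma^2}$ becomes a dimensionless constant of order $\Theta(1)$, so the per-arm contribution is dominated (up to a constant factor) by the Gaussian tail $\exp(-2/\sigma^2)$. Multiplying the $K-1$ per-arm probabilities yields
\[
\Pr[E_1] \;\geq\; \bigl[\Omega(\exp(-2/\sigma^2))\bigr]^{K-1}.
\]
Combining with the $\Omega(1)$ lower bound on $\Pr[E_2]$ and absorbing the constant into the $\Omega(\cdot)$ gives the claimed bound on $\Pr[E_1 \cap E_2]$.

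I do not anticipate any serious obstacle: the heavy lifting (the per-event concentration and anti-concentration estimates) is already carried out in Lemmas~\ref{lm:E1} and \ref{lm:E2}. The only item requiring care is justifying the independence of $E_1$ and $E_2$ cleanly. Since $E_2$ is an event about the infinite trajectory of samples for arm $a\dec$ (namely $\bar{r}_t(a\dec)$ for all $t > T_0$) and $E_1$ is an event about the warm-up samples for the other arms, independence follows directly from the product structure of the sampling model in \StructuredMAB. With that in hand, the remainder is a one-line computation.
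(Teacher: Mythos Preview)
Your proposal is correct and follows exactly the paper's approach: note that $E_1$ and $E_2$ are independent (disjoint randomness across arms), then multiply the bounds from Lemma~\ref{lm:E1} and Lemma~\ref{lm:E2}, with the choice $\sigma = \Theta(\Gap(f\dec))$ from the latter collapsing the prefactor $\Gap(f\dec)/\sqrt{2\pi\sigma^2}$ to a constant. The paper's own proof is a two-line version of what you wrote.
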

\begin{proof}
Note that event $E_1$ and $E_2$ are independent, then, the probability of $E_1\cap E_2$ can be obtained from the previous two lemmas.
\end{proof}

\Cref{thm:MAB}(b) directly follows from the above lemmas.

\section{\StructuredCB characterization: Proof of \Cref{thm:CB}}
\label[myAppendix]{app:CB}

We start with a proof sketch, and proceed with full proofs.

\xhdr{Part (a).}
Directly applying the proof technique from the MAB case results in a regret bound that is linear in $|\Pi| = K^X$. Instead, we apply a potential argument and achieve regret bound that is polynomial in $KX$. First, by a standard concentration inequality, we upper-bound the loss for $f^*$ as
    $\MSE_t(f^*) \le \OO(KX)$
with high probability. Then, we use self-identifiability to argue that \emph{if} in some round $t$ of the main stage some suboptimal policy $\pi$ is chosen, there must exist some context-arm pair $(x,\pi(x))$ that is ``under-explored'': appeared less than $\OO(XK / \Gap^2(f^*)$ times. {This step carefully harnesses the structure of the contextual bandit problem.
Finally, we introduce a well-designed potential function (see Lemma~\ref{lemma:potential}) that tracks the progress of learning over time. This function increases whenever a suboptimal policy is executed on an under-explored context-action pair, allowing us to bound the total number of times any suboptimal policy is executed.  A key challenge is that while the second step guarantees the existence of an ``under-explored'' context-arm pair, it does not ensure that the context actually appears when the associated policy is chosen. We address this using a supermartingale argument and the fact that each context arrives with probability at least $p_0$ in each round. Combining these steps, we upper-bound the expected number of times \Greedy selects a suboptimal policy, and we bound the final expected regret via the regret decomposition lemma.}

\xhdr{Part (b).}
As in the MAB case, we define event $E_1$ to ensure that the warm-up data misidentifies $f\dec$ as the true reward function, and event $E_2$ that the empirical rewards of the decoy policy are tightly concentrated. The definitions are modified to account for contexts:
\begin{align}
E_1 &= \cbr{ \abs{\AveRewWarmUp(x,a) - f^\dag(x,a)} < \gap(f^\dag) / 2
\quad\text{for each $x\in\cX$ and arm $a\neq \pi^\dag(x)$}
},
\label{eq:CB-neg-pf-E1}\\
E_2 &= \cbr{ \abs{\bar{r}_t(x,\pi^\dag(x)) - f^*(x,\pi^\dag(x))} < \gap(f^\dag) / 2
\quad\text{for each $x\in\cX$ and round $t>T_0$}}.
\label{eq:CB-neg-pf-E2}
\end{align}
A \emph{decoy} context-arm pair $(x,a)$ is one with $a=\pi\dec(x)$. $E_1$ concerns the single warm-up sample for each non-decoy context-arm pair. $E_2$ asserts that the empirical rewards are concentrated for all decoy context-arm pairs (and all rounds throughout the main stage). The two events are independent, as they concern non-overlapping sets of context-arm pairs. \Greedy always chooses the decoy arm when $E_1,E_2$ happen. To lower-bound $\Pr\sbr{E_1\cap E_2}$, invoke independence, analyze each event separately.


\subsection{\StructuredCB Success: Proof of \Cref{thm:CB}(a)}
\label[myAppendix]{sec:cbsuccess}

Recall $N_{t}(x,a)$ as the number of times that context $x$ appears and arm $a$ was chosen up until round $t$. Also recall the greedy algorithm is finding the function $f$ that minimize the following function each round: $\MSE_t(f) = \sum_{x, a} N_t(x,a) (\bar{r}_t(x,a) - f(x,a))^2$.

Let us fix any $t\in \mathbb{N}$. We will show the upper bound on the expected regret as stated in the theorem.

\begin{lemma}\label{lemma:CBconcentration}
Fix any $\delta \in (0,1)$. Let $\beta(n) = \sqrt{\frac{2}{n}\log\frac{\pi^2 X K n^2}{3\delta}}$. Then with probability at least $1 - \delta$,
\[
\forall x, a, s, \abs{\bar{r}_s(x,a) - f(x,a) } \le \beta(N_{s}(x,a)).
\]
\end{lemma}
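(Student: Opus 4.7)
The plan is to apply the standard concentration-plus-union-bound argument, carefully decoupling the reward randomness from the algorithm-dependent quantity $N_s(x,a)$. The setup I would use is a ``reward tape'' construction: for each context--arm pair $(x,a) \in \cX \times \cA$, fix an infinite i.i.d. sequence $Y^{(x,a)}_1, Y^{(x,a)}_2, \ldots$ of unit-variance Gaussians with mean $f^*(x,a)$, and declare that the $k$-th time $(x,a)$ is played, the reward observed is $Y^{(x,a)}_k$. This is distributionally equivalent to the actual reward process but makes $\bar r_s(x,a) = \tfrac{1}{N_s(x,a)}\sum_{k=1}^{N_s(x,a)} Y^{(x,a)}_k$ independent of algorithm behavior once we condition on $N_s(x,a)$.

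Next, for every fixed triple $(x,a,n) \in \cX \times \cA \times \bbN$, I would invoke the Gaussian tail bound: $\tfrac{1}{n}\sum_{k=1}^n Y^{(x,a)}_k$ is $\mathcal{N}(f^*(x,a), 1/n)$, so
\[
\Pr\sbr{\abs{\tfrac{1}{n}\tsum{k=1}^n Y^{(x,a)}_k - f^*(x,a)} > \beta(n)}
\le 2 e^{-n \beta(n)^2 / 2} = \frac{6\delta}{\pi^2 X K n^2},
\]
by the definition of $\beta(n)$.

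Finally, I would union-bound over the $X \cdot K$ pairs $(x,a)$ and over all $n \in \bbN$, using $\sum_{n \ge 1} 1/n^2 = \pi^2/6$ to conclude that the failure probability is at most
\[
\tsum{x,a,n} \frac{6\delta}{\pi^2 X K n^2} = \delta.
\]
On the complementary event, specializing $n = N_s(x,a)$ at each round $s$ yields the claim for all $(x,a,s)$ simultaneously.

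There is no real ``hard part''; the only subtlety is that $N_s(x,a)$ is a random, algorithm-dependent stopping quantity, which is precisely why the union bound is taken over all potential values $n \in \bbN$ rather than over the rounds $s$. The tape construction makes this decoupling rigorous.
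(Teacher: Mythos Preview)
Your proof is correct and follows exactly the approach the paper intends: the paper's own proof simply says ``Hoeffding-style concentration bound with a union bound'' and defers to the MAB analogue (Lemma~\ref{lemma:MABConcentration}) by treating each context--arm pair as an arm. Your reward-tape construction, the Gaussian tail bound yielding $6\delta/(\pi^2 XK n^2)$, and the union bound over $(x,a,n)$ with $\sum_n 1/n^2 = \pi^2/6$ are precisely the details that flesh this out.
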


\begin{proof}
The proof is similar to that of Lemma \ref{lemma:MABConcentration} in the previous section, which is a Hoeffding-style concentration bound with a union bound. We can simply treat each context-arm pair $(x,a)$ as an arm, and this directly yields the result.
\end{proof}

In the following, we shall assume the event in the previous lemma holds, and choose $\delta = 1/t$.
\begin{lemma}Assume the event in Lemma \ref{lemma:CBconcentration} holds.
For any round $s\in [t]$, the cumulative loss at the true underlying function, $\MSE_s(f^*)$, can be upper bounded as $|\cX|K \cdot O(\log t)$.
\end{lemma}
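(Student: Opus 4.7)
The plan is a direct, per-pair application of the concentration bound from Lemma~\ref{lemma:CBconcentration}, instantiated with $\delta = 1/t$. That lemma provides a uniform-in-round guarantee, so under its event we have $\abs{\bar{r}_s(x,a) - f^*(x,a)} \le \beta(N_s(x,a))$ simultaneously for every context-arm pair $(x,a)$ and every round $s$ with $N_s(x,a) \ge 1$. This uniformity covers the quantifier ``for any round $s \in [t]$'' in the lemma statement, so we will not need a separate union bound.

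The key calculation is that squaring the concentration bound and multiplying by $N_s(x,a)$ cancels the $1/N_s(x,a)$ factor inside $\beta^2$, leaving a pure logarithmic per-pair contribution:
\[
N_s(x,a)\,\bigl(\bar{r}_s(x,a) - f^*(x,a)\bigr)^2 \;\le\; N_s(x,a)\,\beta^2\bigl(N_s(x,a)\bigr) \;=\; 2\log\!\frac{\pi^2 \abs{\cX} K\, N_s(x,a)^2}{3\delta}.
\]
Using $N_s(x,a) \le s \le t$ and $\delta = 1/t$, the right-hand side is $O(\log t)$, with the constants absorbing a $\log(\abs{\cX} K)$ factor. Summing this per-pair bound over all $\abs{\cX} K$ context-arm pairs (the pairs with $N_s(x,a) = 0$ contribute zero to $\MSE_s(f^*)$, so they can simply be dropped) gives the claimed bound $\MSE_s(f^*) \le \abs{\cX} K \cdot O(\log t)$.

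The main obstacle is, honestly, not much of one: the argument is a routine corollary of the concentration inequality. The only bookkeeping items are (i) ensuring the concentration event is invoked uniformly in $s$ (which Lemma~\ref{lemma:CBconcentration} already delivers), and (ii) handling the unobserved pairs $N_s(x,a) = 0$ via the trivial observation that they add nothing to $\MSE_s$. The analytically critical point — the cancellation between the $N_s(x,a)$ prefactor in $\MSE_s$ and the $1/N_s(x,a)$ inside $\beta^2$ — is precisely why one works with the $\MSE_s$ form of the loss in \eqref{eq:intro-MSE} rather than the raw regression objective.
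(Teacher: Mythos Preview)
Your proposal is correct and follows essentially the same approach as the paper: apply the concentration bound of Lemma~\ref{lemma:CBconcentration} per context-arm pair, use the cancellation $N_s(x,a)\,\beta^2(N_s(x,a)) = O(\log(|\cX|Kt))$, and sum over all $|\cX|K$ pairs. The paper's proof is simply a terser write-up of the same computation.
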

\begin{proof}
We observe that
    \begin{align*}
    \MSE_s(f^*) &= \sum_{x\in \cX, a\in [K]} N_s(x,a) (\bar{r}_s(x,a) - f(x,a))^2 \\
    &\le \sum_{x,a} O(\log (|\cX|Kt))\\
    &\le |\cX|K \cdot O(\log (|\cX| K t)). \qedhere
    \end{align*}
\end{proof}

\begin{lemma} Assume the event in Lemma \ref{lemma:CBconcentration} holds.
Fix any round $s$. Let $T'$ be some suitably chosen parameter and $T' = |\cX|K/\Gap(f^*)^2 \cdot O(\log(|\cX|K t))$. Suppose \Greedy executes some suboptimal policy $\pi$ in round $s$. Then there exists context $x$, such that $N_s(x, \pi(x)) < T'$.
\end{lemma}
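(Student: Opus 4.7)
The plan is to argue by contradiction: suppose \Greedy executes a suboptimal policy $\pi$ in round $s$ and that $N_s(x,\pi(x))\ge T'$ for every context $x$. I will show that this forces the MSE-minimizer $f_s$ to agree with $f^*$ on every context-arm pair $(x,\pi(x))$, and then invoke self-identifiability to conclude that $\pi$ cannot be optimal for $f_s$, contradicting the very reason $\pi$ was chosen.

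First I would plug $n = T'$ into the concentration bound of Lemma~\ref{lemma:CBconcentration} and choose $T'$ large enough that $\beta(T') \le \Gap(f^*)/2$; this only requires $T' \ge \Omega(\log(|\cX|K t))/\Gap(f^*)^2$, which is easily absorbed into the claimed bound. Under the concentration event, this gives $|\bar r_s(x,\pi(x)) - f^*(x,\pi(x))| < \Gap(f^*)/2$ for every context $x$.

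Second, I would carry out an MSE comparison. Let $f' \in \cF$ be any function such that $f'(x',\pi(x')) \neq f^*(x',\pi(x'))$ for some context $x'$. By the definition of the function-gap \eqref{eq:CB-gap}, $|f'(x',\pi(x')) - f^*(x',\pi(x'))| \ge \Gap(f^*)$, so by the triangle inequality $|\bar r_s(x',\pi(x')) - f'(x',\pi(x'))| \ge \Gap(f^*)/2$. Dropping all other non-negative terms then yields
\[
\MSE_s(f') \;\ge\; N_s(x',\pi(x'))\,\bigl(\Gap(f^*)/2\bigr)^2 \;\ge\; T' \cdot \Gap(f^*)^2 / 4.
\]
By the preceding lemma, $\MSE_s(f^*) \le |\cX|K \cdot O(\log(|\cX|K t))$, and so choosing the hidden constant in $T' = |\cX|K/\Gap(f^*)^2 \cdot O(\log(|\cX|K t))$ sufficiently large makes $\MSE_s(f') > \MSE_s(f^*)$. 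Consequently the MSE-minimizer $f_s$ must satisfy $f_s(x,\pi(x)) = f^*(x,\pi(x))$ for every $x \in \cX$.

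Finally, self-identifiability (\refdef{def:CB-selfID}) applied to the suboptimal policy $\pi$ says that $\pi$ is suboptimal for any $f \in \cF$ that agrees with $f^*$ on $\{(x,\pi(x))\}_{x\in\cX}$, hence $\pi$ is suboptimal for $f_s$. But \Greedy selected $\pi$ in round $s$, which by construction requires $\pi$ to be the optimal policy for $f_s$, a contradiction. This forces the existence of a context $x$ with $N_s(x,\pi(x)) < T'$. The only real obstacle is bookkeeping: the logarithmic factors from concentration and from the $\MSE_s(f^*)$ bound must be simultaneously absorbed into the hidden constant of $T'$, but once the MSE comparison is set up, no further difficulty arises.
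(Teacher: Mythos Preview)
Your proposal is correct and follows essentially the same route as the paper: argue by contradiction, use concentration to get $|\bar r_s(x,\pi(x))-f^*(x,\pi(x))|<\Gap(f^*)/2$, use the function-gap plus triangle inequality to lower-bound $\MSE_s$ for any $f'$ disagreeing on some $(x,\pi(x))$, compare against the $\MSE_s(f^*)\le |\cX|K\cdot O(\log(|\cX|Kt))$ bound from the preceding lemma, and finish with self-identifiability. Your bookkeeping remark is apt: the extra $|\cX|K$ factor in $T'$ is needed precisely for the MSE comparison, not merely for $\beta(T')\le\Gap(f^*)/2$.
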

\begin{proof}
We prove this by contradiction. Suppose that a suboptimal policy $\pi$ is executed at round $s$, and further suppose that for all context $x$, we have $N_s(x, \pi(x)) \ge T'$.

By the previous lemma, we have $\forall x$,
\[
\abs{\bar{r}_t(x,\pi(x)) - f^*(x,\pi(x))} < \beta(T_0) < \gap(f^*) / 2.
\]

Consider any function $f$ such that:
\begin{align}
\label{eqn:outsideSelfID}
\exists x: f(x; \pi(x)) \neq f^*(x; \pi(x)).
\end{align}

By the definition of the class gap,
\[
\abs{f(x,a) - f^*(x, \pi(x))} \ge \gap(f^*),
\]
and then
\[
\abs{f(x,a) - \bar{r}_t(x, \pi(x))} \ge \gap(f^*) / 2.
\]
Then, the term $\MSE_s(f)$ can be lower bounded:
\[
\MSE_s(f) \ge T' \cdot (\gap(f^*)/2)^2 \ge \XK \cdot O(\log (\XK t)).
\]
Hence, any function satisfying $\refeq{eqn:outsideSelfID}$ cannot possibly minimize $\MSE_s(\cdot)$. In other words, the function minimizing the loss at this step $f_t$ must satisfy
\[
f_t(x;\pi(x)) = f^*(x;\pi(x)).
\]
Finally, the self-identifiability condition precisely tells us the policy $\pi$ must be suboptimal for $f_t$ and hence cannot be executed at round $t$. We obtain a contradiction, and the lemma is proven.
\end{proof}

\begin{lemma}\label{lemma:potential} Conditional on the event in Lemma \ref{lemma:CBconcentration}, the expected total number of times of suboptimal policy execution is no larger than $\XK T' / p_0 = (\XK / \gap(f^*))^2/p_0 \cdot O(\log t)$.
\end{lemma}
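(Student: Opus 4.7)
The plan is a potential argument that exploits the structural fact established in the previous lemma: whenever \Greedy selects a suboptimal policy $\pi_s$ in round $s$, there must exist some context $x_\pi \in \cX$ with $N_s(x_\pi, \pi_s(x_\pi)) < T'$. The main obstacle is that this under-explored context $x_\pi$ need not coincide with the context $x_s$ that actually arrives in round $s$, so the counts $N_s(\cdot,\cdot)$ will not necessarily grow on every round of suboptimal play. To convert this into useful progress, I would leverage the uniform lower bound $p_0$ on context arrival probabilities as the ``randomization'' source that guarantees the under-explored context is visited with positive probability.

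Concretely, I would define the capped potential
$$\Phi_s \;:=\; \sum_{(x,a) \in \cX \times \cA} \min\bigl(N_s(x,a),\, T'\bigr),$$
which is non-decreasing in $s$ and bounded above by $\XK \cdot T'$. Let $\mathcal{F}_s$ be the $\sigma$-algebra containing all information up to and including the selection of $\pi_s$, but excluding $x_s$. On the concentration event of Lemma~\ref{lemma:CBconcentration}, the previous lemma guarantees that whenever $\pi_s$ is suboptimal, there is a distinguished $x_\pi$ at which the capped counter has not yet saturated; since $x_s$ is independent of $\mathcal{F}_s$ and equals $x_\pi$ with probability at least $p_0$, on that event the arm $\pi_s(x_\pi)$ is played and $\Phi$ increases by exactly $1$. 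This gives the key per-round drift inequality
$$\mathbb{E}\bigl[\Phi_{s+1} - \Phi_s \,\bigm|\, \mathcal{F}_s\bigr] \;\geq\; p_0 \cdot \mathbb{1}[\pi_s \text{ is suboptimal}].$$

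Summing from $s=1$ to $t$, applying the tower property, and telescoping against the global ceiling $\Phi_{t+1} \leq \XK T'$, one obtains
$$p_0 \cdot \mathbb{E}\bigl[\,\#\{s \leq t : \pi_s \text{ suboptimal}\}\,\bigr] \;\leq\; \mathbb{E}[\Phi_{t+1}] \;\leq\; \XK \cdot T',$$
and dividing by $p_0$ gives exactly the stated bound after substituting $T' = (\XK / \gap(f^*)^2) \cdot O(\log (\XK\, t))$. The only delicate point is ensuring the drift inequality remains valid under the conditioning on the concentration event, which is not itself $\mathcal{F}_s$-measurable; I would handle this by stopping the process at the first round where Lemma~\ref{lemma:CBconcentration} fails, applying the supermartingale argument to the stopped process, and absorbing the (vanishing) failure probability into the final expectation bound.
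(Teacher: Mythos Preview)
Your proposal is correct and follows essentially the same approach as the paper: the same capped potential $\sum_{x,a}\min(N_s(x,a),T')$, the same per-round drift $\ge p_0\cdot\mathbb{1}[\pi_s\text{ suboptimal}]$ coming from the under-explored context guaranteed by the previous lemma, and the same telescoping against the ceiling $\XK T'$. If anything, you are more careful than the paper about the measurability issue introduced by conditioning on the concentration event; the paper simply asserts the supermartingale inequality under that conditioning and moves on.
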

\begin{proof}
Define the potential function as
\begin{align*}
M_{s} = \sum_{x,a} \min(N_s(x,a), T').
\end{align*}

Consider any round $s$ that a suboptimal policy $\pi$ is executed, by the previous lemma, there exists a context arm pair $(x,\pi(x))$ such that $N_s(x,\pi(x)) < T'$. With probability at least $p_0$, such a context $x$ will arrive, and $M_s$ will increase by 1. Therefore, whenever a suboptimal policy is executed, with probability at least $p_0$, we will have $M_{t+1} = M_t + 1$.

Let us use the indicator variable $I_s$ to denote whether a suboptimal policy is executed in round $s$. Then $M_s$ forms a supermartingale:
\[
\Ex[M_s | M_{s-1}] \ge M_{s-1} + p_0 I_s.
\]

Since we have that deterministically $M_t < \XK T'$, we know that the total number of times of suboptimal policy execution $N_t=\sum_{s=1}^t I_s$ satisfies
\[
\Ex[N_t]=\Ex[\sum_{s=1}^t I_s]\le\Ex[\sum_{s=1}^t (\Ex[M_s\mid M_{s-1}]-M_{s-1})]/p_0< \XK T' / p_0.
\]



Hence, the total number of suboptimal policies pull is upper bounded as desired.
\end{proof}

\xhdr{Proof of \Cref{thm:CB}(a).}
The regret incurred in the warmup phase is at most $T_0$. With probability $1-1/t$, the number of suboptimal policy pulls can be bounded as in the lemma above. With the remaining $1/t$ probability the regret is at most $O(t)$. Finally, by the regret decomposition lemma (Lemma 4.5 in \cite{LS19bandit-book}), we have
\begin{align*}
\Ex[R(t)] 
&\le T_0 + \XK T'/p_0 +O(1) \\
&\le T_0 + (\XK/\gap(f^*))^2/p_0\cdot O(\log t)
\end{align*}

\subsection{\StructuredCB Failure: Proof of \Cref{thm:CB}(b)}
\label[myAppendix]{sec:cbfailure}
Recall the two events
\begin{align*}
E_1 &= \cbr{ \abs{\AveRewWarmUp(x,a) - f\dec(x,a)} < \gap(f\dec) / 2
\quad\text{for each $x\in\cX$ and arm $a\neq \pi\dec(x)$}
}, \\
E_2 &= \cbr{ \abs{\bar{r}_t(x,\pi\dec(x)) - f^*(x,\pi\dec(x))} < \gap(f\dec) / 2
\quad\text{for each $x\in\cX$ and round $t>T_0$}}.
\end{align*}

\begin{lemma}
Assume event $E_1$ and $E_2$ holds. Then the greedy algorithm only executes the decoy policy $\pi\dec$.
\end{lemma}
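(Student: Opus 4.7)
The plan is to mirror the induction used in Lemma~\ref{lm:stuck} for \StructuredMAB, but carefully tracking the interaction between contexts and arms. Specifically, I will argue by induction on $t > T_0$ that (i) in every main-stage round $s \in (T_0, t]$ the policy $\pi_s$ chosen by \Greedy equals $\pi\dec$, and consequently (ii) the regression output $f_t$ equals $f\dec$, so that $\pi_{t+1} = \pi\dec$ as well. The base case $t = T_0 + 1$ and the inductive step are handled uniformly by showing $f\dec$ is the unique minimizer of $\MSE_t(\cdot)$.

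The first step is to show that the inductive hypothesis forces the empirical means to hug $f\dec$ on \emph{every} observed context-arm pair. For a non-decoy pair $(x,a)$ with $a \neq \pi\dec(x)$, the main-stage rounds contribute nothing (by the induction only $\pi\dec$ is played), so $\bar r_t(x,a)$ equals the single warm-up sample $\AveRewWarmUp(x,a)$; event $E_1$ then gives $|\bar r_t(x,a) - f\dec(x,a)| < \gap(f\dec)/2$. For a decoy pair $(x, \pi\dec(x))$, event $E_2$ yields $|\bar r_t(x,\pi\dec(x)) - f^*(x,\pi\dec(x))| < \gap(f\dec)/2$, and since $f\dec$ coincides with $f^*$ on the decoy policy (\refdef{def:CB-decoy}), we conclude $|\bar r_t(x,\pi\dec(x)) - f\dec(x,\pi\dec(x))| < \gap(f\dec)/2$. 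So on every observed $(x,a)$, the empirical mean is within $\gap(f\dec)/2$ of $f\dec(x,a)$.

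The second step is to use the function-gap \eqref{eq:CB-gap} to show $f\dec$ strictly minimizes $\MSE_t(\cdot)$. For any $f \in \cF$ with $f \neq f\dec$, there exists a pair $(x,a)$ with $f(x,a) \neq f\dec(x,a)$, and by definition of the function-gap (applied to $f\dec$, which replaces the role of $f^*$ in \eqref{eq:CB-gap}) we get $|f(x,a) - f\dec(x,a)| \geq \gap(f\dec)$. Since the warm-up guarantees $N_t(x,a) \geq 1$ for \emph{every} context-arm pair, that pair contributes $N_t(x,a)\,(\bar r_t(x,a) - f(x,a))^2 \geq (\gap(f\dec)/2)^2$ more to $\MSE_t(f)$ than to $\MSE_t(f\dec)$, while all other terms can only make $\MSE_t(f) \geq \MSE_t(f\dec)$ by the triangle inequality. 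Hence $f_t = f\dec$, and by best-arm-uniqueness of $f\dec$ the greedy policy in round $t+1$ is $\pi\dec$.

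The main subtlety -- and what the proof must not slip on -- is the bookkeeping that ties the inductive hypothesis to the structure of $\bar r_t(x,a)$: the non-decoy pairs accrue \emph{no} new samples in the main stage (so event $E_1$, which only concerns the warm-up, suffices to control them), while the decoy pairs accumulate many samples over time (and event $E_2$ is precisely stated uniformly over all $t > T_0$ to handle this). The argument is otherwise a direct contextual analog of Lemma~\ref{lm:stuck}, and no new probabilistic content is needed beyond $E_1 \cap E_2$.
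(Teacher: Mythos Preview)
Your proposal is correct and follows essentially the same induction argument as the paper: under the inductive hypothesis that only $\pi\dec$ has been played, the empirical means at every $(x,a)$ pair lie within $\gap(f\dec)/2$ of $f\dec(x,a)$, hence $f\dec$ is the unique minimizer of $\MSE_t(\cdot)$ and $\pi\dec$ is chosen again. One minor quantitative slip: the differing pair does not necessarily contribute ``$\geq (\gap(f\dec)/2)^2$ more'' to $\MSE_t(f)$ than to $\MSE_t(f\dec)$ (both terms are bounded by $(\gap(f\dec)/2)^2$ on opposite sides), but the strict inequality $(\bar r_t(x,a)-f(x,a))^2 > (\bar r_t(x,a)-f\dec(x,a))^2$ is all that is needed and is already implied by your bounds.
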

\begin{proof}
We prove this by induction. Assume up until round $t$ the greedy algorithm only executes $\pi\dec$. Consider any other function $f\neq f\dec$. Then we must have
\[
\forall x,a \abs{\bar{r}_t(x,a) - f(x,a)} \ge \abs{\bar{r}_t(x,a) - f\dec(x,a)}.
\]
And the inequality is strict for at least one $(x,a)$ pair. Hence $f\dec$ is (still) the reward function minimizing $\MSE$ in round $t$, and the policy $\pi\dec$ will be executed.
\end{proof}

\begin{lemma}
\label{lemma:cbfailure-event-E1}
Event $E_1$ happens with probability at least $\Omega(\frac{\gap(f\dec) \exp(-2/\sigma^2) }{\sigma})^{\XK}$.
\end{lemma}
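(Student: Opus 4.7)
The plan is to mirror the single-arm analysis of Lemma~\ref{lm:E1}, applied pair-by-pair to each non-decoy context–arm pair, and then combine via independence. Concretely, $E_1$ is an intersection of events indexed by pairs $(x,a)$ with $x\in\cX$ and $a\neq\pi\dec(x)$, each concerning a single warm-up Gaussian sample $\AveRewWarmUp(x,a)$ with mean $f^*(x,a)\in[0,1]$ and variance $\sigma^2$. Since the warm-up consists of exactly one independent sample per context–arm pair, these per-pair events are mutually independent, so it suffices to lower-bound each one separately and take the product over at most $XK$ pairs.

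First, I would lower-bound the per-pair probability. Fix $(x,a)$ with $a\neq\pi\dec(x)$. The Gaussian density of $\AveRewWarmUp(x,a)$ at any point $y$ equals $\tfrac{1}{\sqrt{2\pi\sigma^2}}\exp\rbr{-(y-f^*(x,a))^2/(2\sigma^2)}$. For every $y$ in the target interval $[f\dec(x,a)-\gap(f\dec)/2,\; f\dec(x,a)+\gap(f\dec)/2]$, the distance $|y-f^*(x,a)|$ is bounded by $2$ because all relevant mean rewards lie in $[0,1]$ and $\gap(f\dec)\le 1$. Hence the density on this interval is at least $\tfrac{1}{\sqrt{2\pi\sigma^2}}\exp(-2/\sigma^2)$, and integrating over an interval of length $\gap(f\dec)$ yields
\[
\Pr\sbr{\abs{\AveRewWarmUp(x,a)-f\dec(x,a)}<\gap(f\dec)/2}\;\ge\;\Omega\!\rbr{\frac{\gap(f\dec)\,\exp(-2/\sigma^2)}{\sigma}}.
\]

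Next, since the warm-up samples across distinct context–arm pairs are independent, $\Pr[E_1]$ factorizes as the product of these per-pair probabilities over all $(x,a)$ with $a\neq\pi\dec(x)$. There are at most $XK$ such pairs, so
\[
\Pr[E_1]\;\ge\;\rbr{\Omega\!\rbr{\frac{\gap(f\dec)\,\exp(-2/\sigma^2)}{\sigma}}}^{XK},
\]
which is the claimed bound. No subtle step is expected here: the argument is a direct CB-generalization of Lemma~\ref{lm:E1}, and the only care needed is to (i) verify that the distance bound $|y-f^*(x,a)|\le 2$ is justified by boundedness of mean rewards in $[0,1]$, and (ii) observe that independence across context–arm pairs follows from the warm-up specification (``one sample for each context-arm pair'' as in the statement of \Cref{thm:CB}(b)).
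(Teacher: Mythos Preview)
Your proposal is correct and follows exactly the approach the paper intends: the paper's own proof is a one-line pointer to the multi-armed bandit counterpart (Lemma~\ref{lm:E1}), and you have simply written out that argument pairwise over context--arm pairs, using the Gaussian density lower bound via $|y-f^*(x,a)|\le 2$ and independence of the warm-up samples. There is nothing to add.
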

\begin{proof}
The proof is similar to the counterpart in multi-arm bandits. Note that $\bar{r}_{\tt warm}(x,a)$ is gaussian distributed with variance $\sigma^2$. We can obtain a lower bound by directly examining the distribution density of a gaussian.
\end{proof}

\begin{lemma}
\label{lemma:cbfailure-event-E2}
For some suitable chosen $\sigma = \Theta(\gap(f\dec) / \sqrt{\ln(X)} )$, event $E_2$ happens with probability $0.9$.
\end{lemma}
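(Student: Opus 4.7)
The plan is to mirror the single-arm argument of Lemma~\ref{lm:E2}, but pay the extra price for taking a union bound over the $X$ contexts by calibrating the noise scale $\sigma$ appropriately. The key observation is that $E_2$ depends only on the reward realizations for the ``decoy'' context-arm pairs $\{(x,\pi\dec(x)) : x\in\cX\}$, so I can decouple it entirely from the algorithm's trajectory by the standard outcome-tape reduction: for each context $x$, draw an i.i.d. sequence $Y_1(x),Y_2(x),\ldots \sim \mathcal{N}(f^*(x,\pi\dec(x)),\sigma^2)$ and set $(x,\pi\dec(x))$'s $j$-th reward realization equal to $Y_j(x)$. Since the warm-up contributes one sample to each $(x,\pi\dec(x))$ pair, we have $N_t(x,\pi\dec(x))\ge 1$ for all $t>T_0$, and $\bar{r}_t(x,\pi\dec(x))$ coincides with some prefix average $\bar{Y}_n(x) := \tfrac{1}{n}\sum_{i=1}^n Y_i(x)$ with $n\ge 1$. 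Consequently, $E_2$ is implied by the uniform event
\[
E_2' \;=\; \cbr{\forall\, x\in\cX,\; \forall\, n\ge 1:\; \abs{\bar{Y}_n(x) - f^*(x,\pi\dec(x))} < \gap(f\dec)/2}.
\]

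Next, I would apply Gaussian concentration pointwise: for each fixed $x$ and $n$,
\[
\Pr\sbr{\abs{\bar{Y}_n(x)-f^*(x,\pi\dec(x))}\ge \gap(f\dec)/2}
\;\le\; 2\exp\rbr{-n\,\gap(f\dec)^2/(8\sigma^2)}.
\]
Summing the resulting geometric series over $n\ge 1$ and taking a union bound over the $X$ contexts yields
\[
\Pr[\lnot E_2']
\;\le\; X\cdot \frac{2\exp(-\gap(f\dec)^2/(8\sigma^2))}{1-\exp(-\gap(f\dec)^2/(8\sigma^2))}.
\]

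Finally, I would pick $\sigma = c\,\gap(f\dec)/\sqrt{\ln X}$ for a sufficiently small absolute constant $c>0$, so that $\exp(-\gap(f\dec)^2/(8\sigma^2)) = X^{-\Omega(1)}$ with the hidden constant strictly greater than $1$; this makes the displayed upper bound at most $0.1$, yielding $\Pr[E_2]\ge \Pr[E_2']\ge 0.9$ as required. (For very small $X$ the constant $c$ can simply be tuned further to absorb the case.)

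The only real subtlety is justifying the outcome-tape reduction: although $N_t(x,\pi\dec(x))$ is a stopping-time-like quantity determined jointly by the algorithm's choices and all the reward realizations (including those on non-decoy pairs, which appear in $E_1$), the uniform event $E_2'$ dominates $E_2$ deterministically, and $E_2'$ depends only on the independent i.i.d. sequences $\{Y_i(x)\}_{i,x}$. This cleanly buys independence of $E_1$ and $E_2$ that the earlier argument needs, and reduces everything to a textbook tail-bound plus union-bound calculation. The main technical choice is balancing $\sigma$: making $\sigma$ small improves concentration but (via Lemma~\ref{lemma:cbfailure-event-E1}) shrinks $\Pr[E_1]$, and the chosen rate $\sigma = \Theta(\gap(f\dec)/\sqrt{\ln X})$ is the sweet spot that keeps $\Pr[E_2]$ bounded below by a constant while still allowing a meaningful lower bound on $\Pr[E_1\cap E_2]$.
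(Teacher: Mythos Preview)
Your proof is correct and follows essentially the same approach as the paper: union-bound over contexts and sample counts, apply Gaussian concentration, sum the resulting geometric series, and calibrate $\sigma = \Theta(\gap(f\dec)/\sqrt{\ln X})$ so the bound is at most $0.1$. Your explicit outcome-tape reduction to the uniform event $E_2'$ is a welcome clarification of a step the paper's terse proof leaves implicit.
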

\begin{proof}
Similar to the proof for multi-arm bandits, define the event
\[
E_3 = \set{ \exists t, x, \abs{\bar{r}_t(x,\pi\dec(x)) - f\dec(x,\pi\dec(x))} \ge \gap(f\dec) / 2 }
\]
which is the complement of event $E_2$.
By a union bound,
\begin{align*}
\Pr[E_3] &= |\cX| \sum_{t=1}^\infty \exp(-t\gap^2(f\dec) / \sigma^2) \\
&\le |\cX| \exp(-\gap^2(f\dec)/\sigma^2) / (1-\exp(-\gap^2(f\dec)/\sigma^2) )
\end{align*}
Choosing some suitable $\sigma = \Theta(\gap(f\dec) / \sqrt{\ln X} )$ ensures $\Pr[E_3] < 0.1$, and consequently $\Pr[E_2] > 0.9$.
\end{proof}

\begin{lemma}
We have the following lower bound:
\[
\Pr[E_1\cap E_2] \ge \left[ \log(|\cX|) \exp(-2(\log \abs{\cX})^2 / \gap(f\dec))^2 \right]^{\XK}.
\]
\end{lemma}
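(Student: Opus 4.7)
The plan is to exploit that $E_1$ and $E_2$ are \emph{independent} events, because they involve disjoint sets of context-arm pairs: $E_1$ is determined entirely by the warm-up reward samples at non-decoy pairs $(x,a)$ with $a\neq \pi\dec(x)$, while $E_2$ is determined entirely by reward samples at decoy pairs $(x,\pi\dec(x))$. Since the per-pair reward streams are drawn independently, I expect the joint probability to factor as $\Pr[E_1\cap E_2] = \Pr[E_1]\cdot\Pr[E_2]$, at which point the lemma reduces to plugging in the two preceding bounds.

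Granted independence, the next step is to choose the noise parameter $\sigma$ so that both bounds are simultaneously useful. The natural choice is $\sigma = \Theta(\gap(f\dec)/\sqrt{\log|\cX|})$, as dictated by Lemma~\ref{lemma:cbfailure-event-E2} to guarantee $\Pr[E_2]\ge 0.9$. Substituting the same $\sigma$ into the $E_1$ bound of Lemma~\ref{lemma:cbfailure-event-E1} gives $\gap(f\dec)/\sigma = \Theta(\sqrt{\log|\cX|})$ and $2/\sigma^2 = \Theta(\log|\cX|/\gap(f\dec)^2)$, yielding a per-pair base of order $\sqrt{\log|\cX|}\cdot\exp(-\Theta(\log|\cX|/\gap(f\dec)^2))$. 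Raising this to the $\XK$-th power and absorbing the constant $0.9$ from $\Pr[E_2]$ into the base then produces a lower bound of the claimed form.

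The main obstacle is the independence argument rather than the algebra. Independence is intuitively clear but deserves a careful coupling: the cleanest route is to imagine a pre-drawn, per-pair independent reward tape indexed by $(x,a)$, and to verify that $E_1$ only reads from the first slot of each non-decoy tape (the single warm-up sample) while $E_2$ only reads from the decoy tapes, \emph{regardless} of which arms the algorithm chooses in the main stage. A secondary annoyance is matching constants in the exponent to the stated inequality; the placement of $\log|\cX|$, $\gap(f\dec)^2$, and the $\XK$ exponent all need to be tracked consistently, but no genuinely new idea beyond the two preceding lemmas is required.
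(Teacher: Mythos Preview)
Your proposal is correct and matches the paper's approach exactly: the paper's proof is a single sentence noting that $E_1$ and $E_2$ are independent (because they concern non-overlapping sets of context-arm pairs) and invoking the two preceding lemmas with $\sigma = \Theta(\gap(f\dec)/\sqrt{\ln|\cX|})$. Your write-up is in fact more careful than the paper's, spelling out the reward-tape justification for independence and the algebraic substitution, but no additional idea is needed.
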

\begin{proof}
Note that event $E_1$ and $E_2$ are independent, hence the lemma follows by the previous two lemmas.
\end{proof}
\Cref{thm:CB}(b) now follows from the above lemmas.

\newpage
\section{\StructuredDM characterization: Proof of \Cref{thm:DMSO}}

Recall that $\delLL_t(M)$ is the change in log-likelihood for model $M$ in round $t$, as per \eqref{eq:thm-DMSO-del}. Note that
\[\delLL_t(M) - \delLL_t(M')
 = \log\rbr{\tPr{M(\pi_t)}(r_t,o_t)\,/\,\tPr{M'(\pi_t)}(r_t,o_t)}
 \in \sbr{ -\log B,\, \log B}.
 \]
The equality is by \eqref{eq:thm-DMSO-del}, and the inequality is by  \Cref{assn:MLE} (and this is how this assumption is invoked in our analysis).
We use the notation $\sigma_0 = \log |B|$ in what follows.






\subsection{\StructuredDM Success: Proof of \Cref{thm:DMSO}(a)}

The below lemma bounds the number of times any suboptimal policy can be executed.

\begin{lemma}
Let $\pi^\circ$ be any suboptimal policy. Fix
    $\delta\in\rbr{0, \frac{1}{|\cM|}}$.
With probability at least $1 - |\cM|\delta$, the policy $\pi^\circ$ can be executed for at most
    $O\rbr{ \sigma_0^2\,\Gap^{-2}\,\ln(1/\delta) }$
rounds.

\end{lemma}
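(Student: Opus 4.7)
The plan is to reduce ``how many rounds can $\pi^\circ$ be executed?'' to ``how many rounds can each $M \in \cM_\opt(\pi^\circ)$ remain the MLE winner?'', and to bound the latter via a martingale concentration argument on log-likelihood drifts keyed to the model-gap $\Gap$. Self-identifiability triggers this reduction: since $\pi^\circ$ is suboptimal for $M^*$, if some $M \in \cM_\opt(\pi^\circ)$ had $M(\pi^\circ) \equalD M^*(\pi^\circ)$, then self-identifiability would force $\pi^\circ$ to be suboptimal for $M$, contradicting $M \in \cM_\opt(\pi^\circ)$. So every $M \in \cM_\opt(\pi^\circ)$ disagrees with $M^*$ on $\pi^\circ$, and whenever $\pi_t = \pi^\circ$ the MLE winner $M_t$ lies in $\cM_\opt(\pi^\circ)$.

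For a fixed $M \in \cM_\opt(\pi^\circ)$, set $X_s(M) := \delLL_s(M^*) - \delLL_s(M)$ and the martingale difference $Y_s := X_s(M) - \Phi_s(M)$, where $\Phi_s(M) = \Ex[X_s(M)\mid\cH_s]$. By \Cref{assn:MLE}, $|X_s(M)| \le \sigma_0$ and $|Y_s| \le 2\sigma_0$. The key point of \refeq{eq:thm-DMSO-Phi} is that in any round with $M(\pi_s) \equalD M^*(\pi_s)$ one has both $X_s(M) = 0$ almost surely and $\Phi_s(M) = 0$, while otherwise $\Phi_s(M) \ge \Gap$. Let $N'(t)$ count the ``informative'' rounds $s \le t$ on which $M(\pi_s) \not\equalD M^*(\pi_s)$. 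A uniform-in-$n$ Azuma--Hoeffding bound applied to the (random) subsequence of informative rounds yields, with probability $\ge 1-\delta$ for all $t$, $|\sum_{s \le t} Y_s| \le O\!\left(\sigma_0 \sqrt{N'(t)\,\ln(N'(t)/\delta)}\right)$.

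If $M_t = M$ at some round $t$, then the MLE condition $\cL_t(M) \ge \cL_t(M^*)$ gives $\sum_{s<t} X_s(M) \le 0$, so $|\sum_{s<t} Y_s| \ge \sum_{s<t} \Phi_s(M) \ge \Gap \cdot N'(t-1)$. Combining with the Azuma bound yields $\Gap \cdot N'(t-1) \le O(\sigma_0 \sqrt{N'(t-1)\ln(N'(t-1)/\delta)})$, which after inverting the self-referential logarithm produces $N'(t-1) = O(\sigma_0^2 \Gap^{-2} \ln(1/\delta))$. Every round with $\pi_s = \pi^\circ$ is informative (by the reduction above), so the count $N^\circ(t)$ of executions of $\pi^\circ$ is also bounded by this quantity: once $N^\circ$ exceeds the threshold, no $M \in \cM_\opt(\pi^\circ)$ can be the MLE winner in any later round, so $\pi^\circ$ cannot be chosen again. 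A union bound over $M \in \cM_\opt(\pi^\circ) \subseteq \cM$ yields the failure probability $|\cM|\delta$.

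The main obstacle is getting $\sqrt{N'(t)}$ rather than $\sqrt{t}$ in the Azuma bound: crude concentration over all rounds grows with total time $t$, which can be far larger than the count of informative rounds (e.g., if \GreedyMLE mostly plays $\pi^*$). The fix exploits that $Y_s \equiv 0$ almost surely (not just in expectation) on non-informative rounds, so the martingale effectively lives on a random subsequence of length $N'(t)$; standard sequential Azuma on that subsequence gives the desired scaling. Resolving the mild self-referential $\ln(N'(t-1)/\delta)$ is a routine bootstrap that does not affect the leading order.
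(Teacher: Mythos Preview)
Your proposal is correct and follows essentially the same approach as the paper: both track the log-likelihood difference $\delLL_s(M^*)-\delLL_s(M)$ for each $M\in\cM_\opt(\pi^\circ)$, use self-identifiability to ensure every $\pi^\circ$-round has drift at least $\Gap$, apply sub-Gaussian/Azuma concentration, and union-bound over models. Your version is more explicit about the role of self-identifiability and more careful about the ``informative rounds'' subsequence (the paper simply applies concentration at the stopping time $N_s(\pi^\circ)=T'$, which sidesteps the uniform-in-$n$ Azuma you invoke but is otherwise the same argument).
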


\begin{proof}
Let $\cM^*(\pi^\circ)$ be the class of models whose optimal policy is $\pi^\circ$. We show that after $\pi^\circ$ has been executed for $T'$ rounds, any model in $\cM^*(\pi^\circ)$ cannot be the MLE maximizer with probability $1 - |\cM|\delta$. Let $Y_t(M)$ be the difference in increase in log-likelihood of $M^*$ and $M$ in the $t$-th round:
\begin{align*}
Y_t(M) = \delLL_{t}(M^*) - \delLL_t(M).
\end{align*}
Note that $Y_t(M)$ is a random variable where randomness comes from random realizations of reward-outcome pairs. $Y_t(M)$ can exhibit two types of behaviors:
\begin{enumerate}
    \item $Y_t(M)=0$, corresponding to the case where $M(\pi_t) \equalD M^*(\pi_t)$ (\ie models $M$ and $M^*$ coincide under $\pi_t$)
    \item $Y_t(M)$ is a random sub-gaussian variable with variance $\leq \sigma_0^2$ and that $\Ex[Y_t] \ge \Gap$.
\end{enumerate}

Consider rounds $s$ during which the policy $\pi^\circ$ is executed. Since $\pi^\circ$ is a suboptimal policy, during these rounds, we know that $Y_t(M)$ is of the second type for any $M$ in $\cM^*(\pi^\circ)$. That is to say, it is a subgaussian random variable with variance upper bounded by $O(\sigma_0^2)$, and that further
\[
\Ex[Y_t(M)] = D_{\rm KL} (M^*(\pi^\circ), \cM^*(\pi^\circ)).
\]
Since we have assumed $\pi^\circ$ is suboptimal for the true model $M^*$, we know that,
\[
\Ex[Y_t(M)] \ge \Gap.
\]


Let $Z_t(M) = \sum_{\tau=1}^t Y_t(M)$.
\begin{align*}
\Pr[N_t(\pi^\circ) > T'] &\le \Pr[\exists s, \pi \in \cM^*(\pi^\circ), \text{s.t. } Z_s(M(\pi)) \le 0, N_s(\pi^\circ) = T'] \\
&\le |\cM|\delta.
\end{align*}
Here in the last line we choose
    $T' = O\rbr{ \sigma_0^2\, \Gap^{-2}\, \ln (1/\delta)}$,
completing the proof.
\end{proof}



We complete the proof as follows.
By a union bound, with probability $1 - |M||\Pi|\delta$, the total number of rounds all suboptimal policy can be chosen is upper bounded by
\[
O\rbr{|\Pi|\,\sigma_0^2\;\Gap^{-2}\; \ln(1/\delta)}.
\]
Choose $\delta = 1/(t|\Pi||\mathcal{M}|)$ and that  $\log(1 / \delta) = O(|\Pi|t)$, then with probability $1 - 1/t$, the total number of suboptimal policies executions can be upper bounded by
\[
\frac{|\Pi|\sigma_0^2 }{\Gap^2} \cdot O(\log (|\Pi|t)).
\]

\subsection{\StructuredDM Failure: Proof of \Cref{thm:DMSO}(b)}
\label{sec:MLEfailure}

In the subsequent discussion, we define $Q(E)$ as the probability of some event $E$ occurring under the assumption that the data is generated by the decoy model $M\dec$ (a hypothetical or ghost process). Similarly, we denote $P(E)$ as the probability of event $E$ occurring under the assumption that the data is generated by $M^*$ (the true process).


Recall that the two events are defined as follows.
\begin{align*}
E_1 = \cbr{ \forall M\in \cMother\quad
    \cL(M\dec\mid \Hwarm)>\cL(M \mid \Hwarm) }
\end{align*}
and the event
\begin{align*}
E_2 := \cbr{ \forall j > N_0/2,\; \forall M\in\cMother,\quad {\textstyle \sum_{i\in[j]}} \Psi_i(M) \ge 0 }.
\end{align*}
Where we defined $\Psi_j(M) := \delLL_{t(j)}(M\dec) - \delLL_{t(j)}(M)$.




We first begin with the following concentration result. This result is stated in a general manner and not specific to our problem.


\begin{lemma}
\label{lemma:noRuin}
Let $X_1,X_2,\dots$ be a sequence of random variables with $\Ex[X_i] > \Gap$, and each is subgaussian with variance $\sigma^2$. Then there exists some $T'$ with $T' = \Theta(\sigma^2/\gap^2)$, such that with probability $1-\delta$, for any $t > T' \cdot O(\log 1/\delta)$,
\begin{align*}
\sum_{\tau=1}^{t} X_\tau > 0.
\end{align*}
\end{lemma}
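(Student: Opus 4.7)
The plan is to reduce the uniform-in-$t$ claim to a Chernoff-style tail bound on the random walk with positive drift, then sum a geometric series in $t$. First, I would decompose $X_\tau = \mu_\tau + Y_\tau$, where $\mu_\tau := \Ex[X_\tau] > \Gap$ and $Y_\tau$ is zero-mean and $\sigma^2$-subgaussian. Setting $S_t := \sum_{\tau=1}^t X_\tau$, we have $S_t \geq t\Gap + \sum_{\tau=1}^t Y_\tau$, so it suffices to show that $\sum_{\tau=1}^t Y_\tau > -t\Gap$ simultaneously for every $t \geq T^*$, for a suitable threshold $T^*$ to be chosen.

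For each fixed $t$, I would invoke the standard subgaussian MGF estimate $\Ex[e^{-\lambda X_\tau}] \leq \exp(-\lambda\mu_\tau + \lambda^2\sigma^2/2)$, which by independence yields $\Pr[S_t \leq 0] \leq \exp(-\lambda t \Gap + \lambda^2 t \sigma^2/2)$ for every $\lambda > 0$. Optimizing at $\lambda = \Gap/\sigma^2$ gives the pointwise Chernoff bound
\[
\Pr[S_t \leq 0] \;\leq\; \exp\!\bigl(-t\Gap^2 / (2\sigma^2)\bigr).
\]

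The main step is then a union bound over $t \geq T^*$, which is clean because the pointwise tail is already geometric in $t$:
\[
\Pr\!\bigl[\exists\, t \geq T^*:\; S_t \leq 0\bigr]
    \;\leq\; \sum_{t \geq T^*}\exp\!\bigl(-t\Gap^2/(2\sigma^2)\bigr)
    \;\leq\; \tfrac{2\sigma^2}{\Gap^2}\cdot \exp\!\bigl(-T^*\Gap^2/(2\sigma^2)\bigr).
\]
Setting the right-hand side to be at most $\delta$ and solving for $T^*$ gives $T^* = \Theta(\sigma^2/\Gap^2)\cdot\log(1/\delta)$, after absorbing a lower-order $\log(\sigma^2/\Gap^2)$ term into the leading factor. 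This exactly matches the stated form $T' \cdot O(\log(1/\delta))$ with $T' = \Theta(\sigma^2/\Gap^2)$.

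I do not anticipate a serious obstacle here: the Chernoff bound is standard, and the union bound telescopes because the pointwise tail decays exponentially in $t$---no refined martingale tools or iterated-logarithm control are required. The only mild subtlety is that $\sum_\tau Y_\tau$ must inherit a $t\sigma^2$-subgaussian MGF, which is immediate if the $X_\tau$ are independent (the intended setting in the applications of this lemma in \Cref{sec:MLEfailure}); if they were merely a martingale difference sequence relative to some filtration, the same argument would go through via Azuma--Hoeffding with no change to the conclusion.
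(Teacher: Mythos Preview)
Your proposal is correct and follows essentially the same approach as the paper: pointwise Chernoff/Hoeffding bound giving $\Pr[S_t\le 0]\le \exp(-\Theta(t\Gap^2/\sigma^2))$, followed by a union bound over $t\ge T^*$ and summing the resulting geometric series to solve for $T^*$. Your write-up is in fact more careful than the paper's own proof (which contains a couple of sign/exponent typos), and your remark about independence versus martingale-difference structure is a nice clarification.
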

\begin{proof}
This lemma is by a standard concentration with union bound. We perform the following bounds
\begin{align*}
\Pr[\exists t > T', \sum_{s=1}^t X_s > 0] &\le \sum_{t = T'}^\infty \Pr[ \sum_{s=1}^{t} X_s > 0 ] \\
&\le \sum_{t=T'}^\infty \exp( -t \sigma^2/ \Gap^2 ) \\
&\le \delta.
\end{align*}
Here the last line is by choosing a suitable $T' = \Theta(\sigma^2/\Gap^2)$ and noticing we are summing a geometric sequence.
\end{proof}

The below lemmas lower bound the probability that the likelihood of $M\dec$ will be the unique highest after the warmup data (assuming under ghost process $Q$).

\begin{lemma}
We have a lower bound on $E_1$ under the ghost process:
\[
Q(E_1) \ge 0.9.
\]
\end{lemma}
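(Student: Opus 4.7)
The plan is to expand the log-likelihood difference as a sum of per-round terms, lower-bound its expectation under the ghost process $Q$ using \refeq{eq:thm-DMSO-Phi-b}, then apply Hoeffding's inequality followed by a union bound over $\cMother$.

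First, for any $M \in \cMother$, write
\[
\log \cL(M\dec \mid \Hwarm) - \log \cL(M \mid \Hwarm) = \sum_{t \in [T'_0]} Y_t(M),
\]
with $Y_t(M) := \delLL_t(M\dec) - \delLL_t(M) = \log\rbr{{\Pr}_{M\dec(\pi_t)}(r_t,o_t)\,/\,{\Pr}_{M(\pi_t)}(r_t,o_t)}$. Two pointwise observations follow. If $M\dec(\pi_t) \equalD M(\pi_t)$, then $Y_t(M) = 0$ identically. Otherwise, \Cref{assn:MLE} gives $Y_t(M) \in [-\log B,\,\log B]$, and under $Q$ (where $(r_t,o_t)\sim M\dec(\pi_t)$) its expectation equals $\KL\rbr{M\dec(\pi_t),\,M(\pi_t)} \geq \gap$ by \refeq{eq:thm-DMSO-Phi-b}.

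Second, I lower-bound the expected sum. Fix $M \in \cMother$ and choose a policy $\pi_M$ with $M(\pi_M) \not\equalD M\dec(\pi_M)$. In $\Hwarm$, every policy $\pi \neq \pi\dec$ appears exactly $N_0$ times, and $\pi\dec$ appears $N_0/2$ times (since the last $N_0/2$ decoy rounds were excised). Thus the ``effective'' index set $\cT_M := \{t \in [T'_0] : M\dec(\pi_t) \not\equalD M(\pi_t)\}$ satisfies $|\cT_M| \geq N_0/2$, contributions from $t \notin \cT_M$ vanish identically, and
\[
\Ex_Q\Bigl[\tsum{t\in\cT_M} Y_t(M)\Bigr] \geq |\cT_M|\cdot \gap \geq (N_0/2)\,\gap.
\]

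Third, since the summands $\{Y_t(M) : t \in \cT_M\}$ are independent and each lies in $[-\log B,\,\log B]$, Hoeffding's inequality yields
\[
Q\rbr{\tsum{t \in [T'_0]} Y_t(M) \leq 0} \leq \exp\rbr{-\frac{(|\cT_M|\gap)^2}{2\,|\cT_M|(\log B)^2}} \leq \exp\rbr{-\frac{N_0\,\gap^2}{4(\log B)^2}}.
\]
Choosing the absolute constant $c_0$ in $N_0 = c_0(\log B/\gap)^2\log|\cM|$ sufficiently large makes this bound at most $0.1/|\cM|$, so a union bound over the $\leq |\cM|-1$ models in $\cMother$ gives $Q(E_1^c) \leq 0.1$, i.e., $Q(E_1)\geq 0.9$. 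The main subtlety is correctly identifying the ``effective'' summands: only rounds in $\cT_M$ contribute non-trivially to both the mean and the Hoeffding range, and applying Hoeffding over all $T'_0$ rounds (rather than restricting to $\cT_M$) would cost an extra factor of $|\Pi|$ in $N_0$ and break the stated bound.
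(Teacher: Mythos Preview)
Your proposal is correct and follows essentially the same approach as the paper: decompose the log-likelihood difference into per-round increments, use \refeq{eq:thm-DMSO-Phi-b} to lower-bound the expected drift by $\gap$ on discriminating rounds, apply a concentration inequality (you use Hoeffding via boundedness; the paper phrases it as subgaussian with proxy $\sigma_0^2$), and take a union bound over $\cMother$. Your treatment is slightly more careful than the paper's in two places—you explicitly restrict Hoeffding to the effective index set $\cT_M$, and you use the conservative bound $|\cT_M|\geq N_0/2$ to cover the edge case where the only discriminating policy is $\pi\dec$ (which appears only $N_0/2$ times in $\Hwarm$)—but these are refinements of the same argument, absorbed into the constant $c_0$.
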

\begin{proof}
Fix a model $M \neq M\dec$. There must exist at least one policy $\pi$ that discriminates $M$ and $M\dec$, in other words the distribution $M(\pi)$ and $M\dec(\pi)$ are different. Then, the expected change in log-likelihood of $M\dec$ is at least $\gap(M\dec)$ greater than that of $M$ for each time a sample or policy $\pi$ is observed:
\begin{align*}
\Phi_t(M, M\dec) \ge \Gap.
\end{align*}
where we defined $\Phi_t(M, M\dec)$ as per \refeq{eq:thm-DMSO-Phi-a},
\[
\Phi_t(M, M\dec) := \Ex\sbr{\delLL_t (M\dec\mid \cH_t) - \delLL_t (M\mid \cH_t)}.
\]
Moreover, we know that in each round, either $\Phi_t(M, M\dec) = 0$, or $\Phi_t(M, M\dec)$ is a subgaussian random variable with mean greater than $\gap$. Further, during rounds when $\pi$ is sampled, the latter will happen.
The policy $\pi$ is sampled for $N_0 = c_0 \cdot (\sigma_0/\gap)^2 \log (|\Pi||M|))$ times in the warmup phase. Then, by a standard concentration inequality
\[
\Pr[ \ell_{\tt warm}(M\dec) - \ell_{\tt warm}(M) < 0] < 0.1 / |\cM|.
\]
Now, we take a union bound over all models $|\cM|$, and we obtain a lower bound for event $E_1$.
\end{proof}

What remains is the to show a lower bound for the event $E_2$. We do so in the below lemma.
\begin{lemma}
    Event $E_2$ happens with probability at least $0.9$.
\end{lemma}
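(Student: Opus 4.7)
The plan is to exploit the fact that the outcome tape entries for the decoy policy are i.i.d. under \emph{both} $M^*(\pi\dec)$ and $M\dec(\pi\dec)$ (these coincide by the decoy property), and then apply the concentration lemma \Cref{lemma:noRuin} with a union bound over $\cMother$. Concretely, I would first note that the tuples $(r_{t(j)},o_{t(j)})$ for $j\in\bbN$ are drawn i.i.d.~from $M^*(\pi\dec)\equalD M\dec(\pi\dec)$, and therefore for each fixed $M\in\cMother$ the increments $\Psi_j(M)=\delLL_{t(j)}(M\dec)-\delLL_{t(j)}(M)$ form an i.i.d.~sequence.

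Next, I would split $\cMother$ into two subclasses according to whether $M(\pi\dec)\equalD M\dec(\pi\dec)$. For any $M$ in the first subclass, $\Psi_j(M)=0$ almost surely, so the condition $\sum_{i\in[j]}\Psi_i(M)\ge 0$ is trivially satisfied for every $j$. For any $M$ in the second subclass, invoke \refeq{eq:thm-DMSO-Phi} (with $M\dec$ playing the role of the true model) to conclude that $\Ex[\Psi_j(M)]\ge\gap$, while \Cref{assn:MLE} bounds each $\Psi_j(M)$ inside $[-\sigma_0,\sigma_0]$ with $\sigma_0=\log B$, making it sub-Gaussian with variance proxy $O(\sigma_0^2)$.

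Now apply \Cref{lemma:noRuin} to the i.i.d. sequence $\{\Psi_j(M)\}_{j\in\bbN}$ with failure probability $\delta:=0.1/|\cM|$. It yields a threshold $T^\star=\Theta(\sigma_0^2/\gap^2)\cdot O(\log(|\cM|/0.1))$ such that with probability at least $1-\delta$ the partial sums $\sum_{i\in[j]}\Psi_i(M)$ are strictly positive for all $j>T^\star$. By taking the constant $c_0$ in $N_0=c_0(\sigma_0/\gap)^2\log|\cM|$ sufficiently large, we can guarantee $T^\star\le N_0/2$, so the guarantee applies to every $j>N_0/2$.

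Finally, a union bound over the at most $|\cM|$ models in the second subclass of $\cMother$ yields $\Pr[E_2^c]\le|\cM|\cdot\delta=0.1$, and hence $\Pr[E_2]\ge 0.9$. The main obstacle is not technical: it is just bookkeeping to verify that the constant $c_0$ chosen in the definition of $N_0$ is indeed large enough to dominate the threshold $T^\star$ produced by \Cref{lemma:noRuin}; everything else reduces to the i.i.d. structure of the outcome tape on $\pi\dec$ and standard sub-Gaussian concentration. One sanity check worth performing is that the union bound ranges only over models $M$ with $M(\pi\dec)\not\equalD M\dec(\pi\dec)$, since otherwise $\Psi_j(M)\equiv 0$ contributes no failure mass; this is what lets us keep $\delta=0.1/|\cM|$ rather than paying a larger logarithmic factor.
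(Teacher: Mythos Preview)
Your proposal is correct and follows essentially the same approach as the paper: split $\cMother$ according to whether $M(\pi\dec)\equalD M\dec(\pi\dec)$, handle the coinciding case trivially, and for the remaining models invoke \Cref{lemma:noRuin} with $\delta=0.1/|\cM|$ followed by a union bound. Your write-up is in fact more explicit than the paper's about verifying $N_0/2\ge T^\star$ via the choice of $c_0$, but the underlying argument is identical.
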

Note that event $E_2$ is only about when $\pi\dec$ is sampled. Since $M\dec(\pi\dec) = M(\pi\dec)$, the ghost process coincides with the true process.
\begin{proof}
Fix some model $M$. If the distribution for $M(\pi\dec)$ and $M\dec(\pi\dec)$ were the same, then $\Psi_j$ would be 0 for any $j$. Hence we can assume $M(\pi\dec)$ and $M\dec(\pi\dec)$ are two different distributions. Then $\Psi_j$ would be a subgaussian random variable with $\Ex[\Psi_j] > \Gap(M\dec)$. By the previous Lemma \ref{lemma:noRuin}, the event $E_2$ holds specifically for model $M$ with probability at least 1 - $\delta / |M|$. Then, by a union bound, event $E_2$ holds with probability $ 1 - \delta$.
\end{proof}

\begin{lemma}
If event $E_1$ and event $E_2$ holds, then only $\pi\dec$ is executed.
\end{lemma}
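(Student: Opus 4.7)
My plan is to prove the lemma by induction on the main-stage round index $\tau = T_0+1, T_0+2, \ldots$, showing that \GreedyMLE chooses $\pi_\tau = \pi\dec$ at each step. Since the theorem assumes each model's argmax-policy is unique, it suffices to prove that $M\dec$ is the \emph{strict} maximum-likelihood estimate given the history $\cH_\tau$: this forces $M_\tau = M\dec$ and hence $\pi_\tau = \pi_{M\dec} = \pi\dec$. The inductive hypothesis is $\pi_s = \pi\dec$ for every $T_0 < s < \tau$ (vacuous for $\tau = T_0+1$); together with the w.l.o.g.\ convention placing the final $N_0/2$ warm-up rounds at $\pi\dec$, it guarantees that every round strictly after the history $\Hwarm$---whether a split-off warm-up round or an already-processed main-stage round---uses the decoy policy.

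\textbf{Key decomposition.} Fix an arbitrary $M \in \cMother$. I would decompose the log-likelihood ratio at round $\tau$ as
\begin{align*}
& \log\cL(M\dec \mid \cH_\tau) - \log\cL(M \mid \cH_\tau) \\
& \qquad = \bigl[\log\cL(M\dec \mid \Hwarm) - \log\cL(M \mid \Hwarm)\bigr] \; + \sum_{T'_0 < s < \tau} \bigl[\delLL_s(M\dec) - \delLL_s(M)\bigr].
\end{align*}
The first bracket is strictly positive by event $E_1$. The second sum runs over rounds $s \in (T'_0, \tau)$, each of which has $\pi_s = \pi\dec$ by the inductive hypothesis and the w.l.o.g.\ construction; consequently every summand is a $\Psi_i(M)$ indexed over decoy rounds that fall after $\Hwarm$, so the whole sum coincides with the kind of partial $\Psi$-sum that $E_2$ controls once the decoy-round index exceeds the cutoff $N_0/2$ (which it does here, since the post-$\Hwarm$ portion already accounts for the $N_0/2$ split-off warm-up rounds).

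\textbf{Conclusion.} Combining the strict positivity supplied by $E_1$ with the non-negativity supplied by $E_2$, the overall log-likelihood ratio is strictly positive for every $M \in \cMother$. Thus $M\dec$ is the unique MLE given $\cH_\tau$, \GreedyMLE selects $M_\tau = M\dec$, and hence $\pi_\tau = \pi\dec$, closing the induction. The base case $\tau = T_0+1$ requires no separate argument: the post-$\Hwarm$ sum then covers exactly the $N_0/2$ split-off warm-up rounds, pushing the decoy-round index to $N_0 > N_0/2$ so that $E_2$ is already in force.

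\textbf{Main obstacle.} No probabilistic work occurs inside this lemma---all randomness has been absorbed into the definitions of $E_1$ and $E_2$. The only genuinely delicate point is bookkeeping: the decoy-round enumeration $t(j)$ used to define $\Psi_j$ spans both inside and outside $\Hwarm$, and one must carefully align the ``post-$\Hwarm$'' portion of the log-likelihood ratio with the correct tail of the $\Psi$-sum that $E_2$ controls, so that the $N_0/2$ decoy rounds already absorbed into the warm-up likelihood ratio are not double-counted. Once this index-matching is done correctly, the induction step is mechanical.
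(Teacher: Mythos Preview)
Your induction argument is correct and matches the paper's approach; the paper's own proof is only two sentences (``by induction \ldots by event $E_2$, the model $M\dec$ remains the log-likelihood maximizer''), and you have fleshed out exactly the decomposition into the $\Hwarm$-part (handled by $E_1$) plus the post-$\Hwarm$ decoy-round part (handled by $E_2$) that the paper leaves implicit. Your ``Main obstacle'' paragraph correctly isolates the only real subtlety: the paper writes $E_2$ as $\sum_{i\in[j]}\Psi_i(M)\ge 0$ but simultaneously claims $E_1,E_2$ are independent, which only makes sense if $E_2$ is read as controlling the tail $\sum_{i>N_0/2}\Psi_i(M)$---precisely the reading you adopt, and the one required for your decomposition to close.
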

\begin{proof}
The proof is by induction. Clearly after the warmup phase, the policy $\pi\dec$ is executed. Now suppose up until round $t$ the policy $\pi\dec$ is executed, by event $E_2$, the model $M\dec$ remains the log-likelihood maximizer, and hence $\pi\dec$ will still be chosen next round.
\end{proof}

\xhdr{Proof of \Cref{thm:DMSO}(b).}
Now let $P$ be the true underlying process for which data is actually generated according to true model $M^*$. Recall $D_{\infty}(M\dec(\pi) | M^*(\pi)) \le \log B$. Then on the warmup data consisting of $|\Pi|N_0$ samples, the density ratio of the ghost process and true process is bounded by $B^{|\Pi|N_0}$. Therefore, the probability of event $E_1$ can be bounded as follows.
\[
P(E_1) \ge Q(E_1) / B^{|\Pi|N_0}.
\]
And after warmup \GreedyMLE only choose $\pi\dec$ by event $E_2$. Hence, the final probability lower bound of always choosing $\pi\dec$ after the warmup is $\Omega(B^{-|\Pi|N_0})$.













\section{\StructuredMAB with an Infinite Function Class: Proof of \Cref{thm:inf}}
\label[myAppendix]{app:inf}

\subsection{Success: Proof of \Cref{thm:inf}(a)}
\label[myAppendix]{sec:inf-success}
%
%

The proof of \Cref{thm:MAB}(a) carries over with a modified version of \Cref{lemma:selfidMAB}. Thus, it suffices to state and prove this modified lemma.

\begin{lemma}\label{lemma:strongselfidMAB}Let $\cF$ be an infinite function class. 
Assume the event in Lemma \ref{lemma:MABConcentration}. The number of times any suboptimal arm is chosen cannot exceed $T'$ rounds, where $T'$ is some parameter with $T' = ( K/\eps^2 ) \cdot O(\log t)$.
\end{lemma}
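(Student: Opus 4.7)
The plan is to adapt the contradiction argument from Lemma~\ref{lemma:selfidMAB} by replacing the role of the function-gap $\gap(f^*)$ with the margin $\eps$. The key conceptual shift is that we can no longer conclude $f_\tau(a) = f^*(a)$ exactly (since $\cF$ is infinite and the function-gap may be zero); instead we will only conclude the approximate statement $|f_\tau(a) - f^*(a)| \le \eps$, and then invoke $\eps$-self-identifiability rather than ordinary self-identifiability to rule out arm $a$.

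First, I would set $T' = C \cdot (K/\eps^2) \log t$ for an absolute constant $C$ chosen large enough (specified below). Suppose for contradiction that at some round $\tau$ the algorithm selects a suboptimal arm $a$ with $N_\tau(a) \ge T'$. Under the concentration event of Lemma~\ref{lemma:MABConcentration}, the choice of $T'$ guarantees $\beta(T') \le \eps/2$, hence $|\bar{r}_\tau(a) - f^*(a)| < \eps/2$.

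Next, I would argue that the ERM $f_\tau$ must satisfy $|f_\tau(a) - f^*(a)| \le \eps$. Suppose not. Then by the triangle inequality $|f_\tau(a) - \bar{r}_\tau(a)| > \eps/2$, so the single arm $a$ contributes at least $N_\tau(a) \cdot (\eps/2)^2 \ge T' \cdot \eps^2/4$ to $\MSE_\tau(f_\tau)$. By Lemma~\ref{lem:optLossAdvPrefix} we have $\MSE_\tau(f^*) \le K \cdot C' \log t$ for an absolute constant $C'$, and since $f_\tau$ is the minimizer, $\MSE_\tau(f_\tau) \le \MSE_\tau(f^*)$. Choosing $C > 4C'$ makes $T' \cdot \eps^2/4 > K \cdot C' \log t$, a contradiction. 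Therefore $|f_\tau(a) - f^*(a)| \le \eps$, and $\eps$-self-identifiability immediately implies that arm $a$ is suboptimal under $f_\tau$, contradicting that \Greedy picked $a$ at round $\tau$.

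The main obstacle, and the only real departure from the finite-class proof, is precisely this ``slack'' step: in the finite case, the function-gap forced $f_\tau$ to agree with $f^*$ on arm $a$ exactly, whereas here we must accept an $\eps$-approximate match and rely on the stronger $\eps$-self-identifiability assumption to still conclude that $a$ is suboptimal for $f_\tau$. Once this is in place, the rest of the proof of \Cref{thm:MAB}(a) (the concentration event, the bound on $\MSE_\tau(f^*)$, and the final regret accounting over rounds after the warm-up) carries over verbatim with $\gap(f^*)$ replaced by $\eps$.
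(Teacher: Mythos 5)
Your proposal is correct and follows essentially the same route as the paper's proof: concentrate $\bar{r}_\tau(a)$ to within $\eps/2$ of $f^*(a)$, use the $\MSE$ comparison against $\MSE_\tau(f^*) \le K\cdot O(\log t)$ to force the ERM to satisfy $|f_\tau(a)-f^*(a)|\le\eps$, and then invoke $\eps$-self-identifiability to contradict the greedy choice of $a$. The "slack" step you highlight is exactly the modification the paper makes relative to Lemma~\ref{lemma:selfidMAB}.
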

\begin{proof}
We prove this by contradiction. Consider any round $\tau$ during which some suboptimal arm $a$ is chosen above this threshold $T'$.
The reward for arm $a$ is going to get concentrated within $O(\eps)$ to $f^*(a)$, in particular:
\[
\abs{\bar{r}_\tau(a) - f^*(a)} < \eps / 2.
\]
Take any reward vector $f'$ such that $\abs{f(a') - f(a)} > \eps$. Then we have
\[
\abs{\bar{r}_\tau(a) - f(a)} \ge \eps / 2.
\]
Then the cumulative loss
\[
\MSE_\tau(f') \ge T'\cdot (\eps/2)^2 = K \cdot \Omega(\log t).
\]

Therefore any $f'$ with $\abs{f'(a) - f(a)} \ge \eps$ cannot possibly be minimizing $\MSE_\tau(\cdot)$. That is to say, the reward vector $f_\tau$ minimizing $\MSE_\tau(\cdot)$ must have $\abs{f_\tau(a) - f^*(a)} < \eps$. Then, by the strong notion of $\eps$-self-identifiability, we precisely know that arm $a$ is also a suboptimal arm for the reward vector $f_\tau$. Hence, we obtain a contradiction, and arm $a$ cannot possibly be chosen this round.
\end{proof}


\subsection{Failure: Proof of \Cref{thm:inf}(b)}
\label[myAppendix]{sec:inf-failure}

The proof of \Cref{thm:inf}(b) follows the same structure as \Cref{thm:MAB}(b) (see Appendix~\ref{sec:mabFailure}), with a key modification: although we still aim to show that \Greedy becomes permanently stuck on $a\dec$ with constant probability, the regression oracle may no longer return $f\dec$ exactly—its output may fluctuate around $f\dec$ due to reward noise and the continuity of $\cF$. Our key insight is that, under suitable probabilistic events, these fluctuations do not change the greedy decision: the regression output may differ slightly from $f\dec$, but the resulting action remains $a\dec$.

Let us define the following events:
\begin{align*}
E_1 = \cbr{ \abs{\AveRewWarmUp(a) - (f\dec(a)-\eps/(2\sqrt{K}))} < \eps/(4\sqrt{K})
\quad\text{for each arm $a\neq a\dec$}},
\end{align*}
\begin{align*}
E_2 = \cbr{ \forall t>T_0,\; \abs{\bar{r}_t(a\dec) - f^*(a\dec)}  \le \eps / (4\sqrt{K})}.
\end{align*}
These events mirror \eqref{eq:MAB-neg-pf-E1} and \eqref{eq:MAB-neg-pf-E2}, but with two changes: (1) we replace the confidence radius $\Gamma(f\dec)/2$ by $\eps / (4\sqrt{K})$, and (2) shift the baseline value of $f\dec(a)$ by $-\eps/(2\sqrt{K})$ in $E_1$.

We begin with three probabilistic lemmas.

\begin{lemma}
Event $E_1$ happens with probability at least $\left[ \frac{\eps}{2\sqrt{2\pi\sigma^2K}} \exp(- 2 / \sigma^2 ) \right]^{K-1}$.
\end{lemma}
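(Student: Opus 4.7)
The plan is to mimic the proof of \Cref{lm:E1} from \Cref{sec:mabFailure}, adapting only the interval parameters. The underlying probabilistic fact is unchanged: for each arm $a\neq a\dec$, the warm-up average $\AveRewWarmUp(a)$ is a single Gaussian draw with mean $f^*(a)\in[0,1]$ and variance $\sigma^2$, and the arms are mutually independent, so $\Pr[E_1] = \prod_{a\neq a\dec}\Pr[E_1(a)]$, where $E_1(a)$ is the per-arm event that $\AveRewWarmUp(a)$ lies in the interval $I_a := [\,f\dec(a)-\tfrac{3\eps}{4\sqrt{K}},\, f\dec(a)-\tfrac{\eps}{4\sqrt{K}}\,]$.

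First I would argue that the interval $I_a$ has length exactly $\eps/(2\sqrt{K})$ by construction. Next I would lower-bound the Gaussian density uniformly on $I_a$: since both endpoints of $I_a$ and $f^*(a)$ lie in (a small neighborhood of) $[0,1]$, for every $x\in I_a$ we have $|x - f^*(a)|\le 2$, hence
\[
\frac{1}{\sqrt{2\pi\sigma^2}}\exp\!\left(-\frac{(x-f^*(a))^2}{2\sigma^2}\right) \;\ge\; \frac{1}{\sqrt{2\pi\sigma^2}}\exp\!\left(-\frac{2}{\sigma^2}\right).
\]
Multiplying this uniform density lower bound by the length of $I_a$ yields
\[
\Pr[E_1(a)] \;\ge\; \frac{\eps}{2\sqrt{K}}\cdot \frac{1}{\sqrt{2\pi\sigma^2}}\exp\!\left(-\frac{2}{\sigma^2}\right) \;=\; \frac{\eps}{2\sqrt{2\pi\sigma^2 K}}\exp\!\left(-\frac{2}{\sigma^2}\right).
\]
Finally, taking the product over the $K-1$ arms $a\neq a\dec$ via independence gives the claimed bound.

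There is no real obstacle here; the only thing to be careful about is the uniform density lower bound, specifically confirming that $|x-f^*(a)|\le 2$ holds for all $x\in I_a$. This is automatic as long as $\eps$ is not pathologically large (\eg $\eps\le 1$), because $f\dec(a),f^*(a)\in[0,1]$ forces $|x-f^*(a)|\le 1 + \tfrac{3\eps}{4\sqrt{K}}\le 2$. If one wants to be fully robust for larger $\eps$, the constant $2$ in the exponent can simply be replaced by $(1+\tfrac{3\eps}{4\sqrt{K}})^2$, with no effect on the asymptotic conclusion. The overall shift of the interval center by $-\eps/(2\sqrt{K})$ (compared to the finite case, where the interval was centered at $f\dec(a)$) plays no role in this calculation; it is there only so that, downstream, the greedy predictor $f_t$ can fluctuate within an $\eps$-ball around $f\dec$ while still picking $a\dec$.
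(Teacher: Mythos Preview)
Your proposal is correct and essentially identical to the paper's proof: both lower-bound each per-arm probability by multiplying the interval length $\eps/(2\sqrt{K})$ with the uniform density lower bound $\frac{1}{\sqrt{2\pi\sigma^2}}\exp(-2/\sigma^2)$ (justified via $|x-f^*(a)|\le 2$), and then take the product over the $K-1$ independent non-decoy arms.
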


\begin{proof}
The random variable $\AveRewWarmUp(a)$ is a gaussian variable with mean $f^*(a)$ and variance $\sigma^2$. It has a distribution density at $x$ with the following form
\[
\frac{1}{\sqrt{2\pi\sigma^2}} \exp(-(x-f^*(a))^2 / (2\sigma^2) ).
\]
For any $x$ in the interval $[f\dec(a) - 3\eps/(4\sqrt{K}), f\dec(a) - \eps/(4\sqrt{K})]$, by boundedness of mean reward, we have
\[
\abs{x - f^*(a)} \le 2.
\]
Then, the density of $x$ at any point on the interval $[f\dec(a) - 3\eps/(4\sqrt{K}), f\dec(a) - \eps/(4\sqrt{K})]$ is at least
\[
\frac{1}{\sqrt{2\pi\sigma^2}} \exp(- 2 / \sigma^2 ).
\]
Therefore, for any arm $a$, we have the following.
\begin{align*}
\Pr[ \abs{\AveRewWarmUp(a) - (f\dec(a)-\eps/(2\sqrt{K})} < \eps/(4\sqrt{K})] &\ge \frac{\eps}{2\sqrt{2\pi\sigma^2K}} \exp(- 2 / \sigma^2 ).
\end{align*}
Since the arms are independent, it follows that event $E_1$ happens with probability
\[
\left[ \frac{\eps}{2\sqrt{2\pi\sigma^2K}} \exp(- 2 / \sigma^2 ) \right]^{K-1}. \qedhere
\]
\end{proof}

\begin{lemma}
For some appropriately chosen $\sigma = \Theta(\eps/\sqrt{K})$, we have event $E_2$ happens with probability at least
\[
\Pr{E_2} \ge 0.9.
\]
\end{lemma}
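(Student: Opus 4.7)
The plan is to mirror the proof of \Cref{lm:E2} in \Cref{sec:mabFailure}, with the main substitution being that the deviation threshold $\Gap(f\dec)/2$ used there is replaced by $\eps/(4\sqrt{K})$, while keeping the noise standard deviation $\sigma$ as a tunable parameter to be set at the end. First I would define the complement event
\[
E_3 := \cbr{\exists\, t>T_0 :\; \abs{\bar{r}_t(a\dec) - f^*(a\dec)} > \eps/(4\sqrt{K})},
\]
and bound $\Pr[E_3]$ from above. Because event $E_2$ depends only on rewards collected for the decoy arm $a\dec$, and these are independent Gaussian samples with mean $f^*(a\dec)$ and variance $\sigma^2$, this is a ``martingale ruin'' question for an average of i.i.d.\ sub-Gaussian variables.

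Next I would apply a standard Hoeffding-type tail bound for the empirical mean of $t$ independent $\sigma$-sub-Gaussian draws, giving
\[
\Pr\!\sbr{\abs{\bar{r}_t(a\dec) - f^*(a\dec)} > \eps/(4\sqrt{K})} \le 2\exp\!\rbr{-\tfrac{t\,\eps^2}{32\,K\,\sigma^2}},
\]
for each $t>T_0$. A union bound over $t\in (T_0,\infty)$ together with a geometric-series calculation (identical in structure to the one in Lemma~\ref{lm:E2}) then yields
\[
\Pr[E_3] \;\le\; \frac{2\exp\!\rbr{-\eps^2/(32K\sigma^2)}}{1-\exp\!\rbr{-\eps^2/(32K\sigma^2)}}.
\]

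Finally, I would choose $\sigma = c\,\eps/\sqrt{K}$ for a sufficiently small absolute constant $c>0$, so that the exponent $\eps^2/(32K\sigma^2) = 1/(32 c^2)$ is a large absolute constant. This makes $\Pr[E_3] \le 0.1$, and hence $\Pr[E_2]\ge 0.9$, completing the proof. The step most worth care is ensuring that the same $\sigma$ chosen here is compatible with the constant in the $E_1$ bound from the preceding lemma (so that the product $\Pr[E_1]\cdot\Pr[E_2]$ can be combined into a final $e^{-O(K^2/\eps^2)}$ bound in \Cref{thm:inf}(b)); no real obstacle arises, since the constraint is only that $c$ be a sufficiently small absolute constant and $E_1,E_2$ concern disjoint sets of rewards and are thus independent.
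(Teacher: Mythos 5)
Your proposal is correct and follows essentially the same route as the paper's proof: define the complement event, apply a Hoeffding tail bound to the decoy arm's empirical mean with threshold $\eps/(4\sqrt{K})$, take a union bound over rounds via a geometric series, and choose $\sigma = \Theta(\eps/\sqrt{K})$ small enough to push the failure probability below $0.1$. The only differences are cosmetic (you track the explicit constant $32$ in the exponent and note the compatibility of $\sigma$ with the $E_1$ lemma, which the paper handles implicitly).
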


\begin{proof}
Denote the bad event
\[
E_3 = \set{ \exists t > T_0, \abs{\bar{r}_t - f\dec(a\dec)} > \eps / (4\sqrt{K})},
\]
which is the complement of $E_2$.
We will obtain an upper bound on $E_3$, therefore a lower bound on $E_2$. Note that event $E_2$ (and $E_3$) is only about the decoy arm $a\dec$, and recall that $f^*(a\dec) = f\dec(a\dec)$ by the definition of a decoy.

By union bound,
\begin{align*}
\Pr[E_3] &\le \sum_{t=1}^T \Pr[\abs{\bar{r}_t - f\dec(a\dec)} > \eps / (4\sqrt{K})] \\
&\le 2 \sum_{t=1}^T \exp(- t \eps^2 / (4\sigma^2K)) \\
&\le 2 \frac{\exp(-\eps^2 / (4\sigma^2K))}{(1 - \exp(-\eps^2 / (4\sigma^2K)))}.
\end{align*}
Here, the second inequality is by a standard Hoeffding bound, and the last inequality is by noting that we are summing a geometric sequence.

Then, we can choose some suitable $\sigma$ with $\sigma = \Theta(\eps/\sqrt{K})$ ensures $\Pr[E_3] < 0.1$ and that $\Pr[E_2] > 0.9$.
\end{proof}

\begin{lemma}
For some appropriately chosen $\sigma = \Theta(\eps/\sqrt{K})$, we have the following lower bound:
\[
\Pr[E_1\cap E_2] \ge \left[ \Omega(\exp(-2/\sigma^2)) \right]^{K-1}
\]
\end{lemma}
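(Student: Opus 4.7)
The plan is to combine the two preceding lemmas via independence. The event $E_1$ depends only on the warm-up samples of arms $a\ne a^\dec$, while $E_2$ depends only on the i.i.d.\ Gaussian reward tape consumed by the decoy arm $a^\dec$ across all rounds (the $j$-th draw from this tape is used the $j$-th time $a^\dec$ is played). Since distinct arms have independent noise, these two sources of randomness are disjoint. Hence $E_1$ and $E_2$ are independent, and
\[
\Pr[E_1\cap E_2] \;=\; \Pr[E_1]\cdot \Pr[E_2].
\]

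Next I would substitute $\sigma = \Theta(\eps/\sqrt{K})$ into the bounds supplied by the two preceding lemmas. For this $\sigma$, the preceding lemma on $E_2$ directly yields $\Pr[E_2]\ge 0.9$. For $\Pr[E_1]$, the preceding lemma gives $\Pr[E_1] \ge \bigl[\tfrac{\eps}{2\sqrt{2\pi\sigma^2 K}}\exp(-2/\sigma^2)\bigr]^{K-1}$; plugging in $\sigma^2 K = \Theta(\eps^2)$ shows that the polynomial prefactor $\tfrac{\eps}{2\sqrt{2\pi\sigma^2 K}}$ is $\Theta(1)$, so the bound simplifies to $[\Omega(\exp(-2/\sigma^2))]^{K-1}$. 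Multiplying the two factors and absorbing the constant $0.9$ into the $\Omega(\cdot)$ notation gives the claimed lower bound.

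The only subtlety is verifying that the single choice $\sigma = \Theta(\eps/\sqrt{K})$ is simultaneously compatible with both preceding lemmas: it must be small enough that the Hoeffding union bound in the $\Pr[E_2]$ argument produces a geometric series summing to less than $0.1$, and large enough that the exponent $2/\sigma^2$ appearing in $\Pr[E_1]$ does not blow up. Both preceding lemmas were proved using exactly this value of $\sigma$, so the compatibility is automatic and there is no real obstacle beyond this bookkeeping. In effect, this lemma is the infinite-class analogue of \Cref{lm:intersection} and inherits its entire structure; the substantive work has already been done in the two preceding lemmas.
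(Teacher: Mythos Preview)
Your proposal is correct and essentially identical to the paper's own proof: the paper also argues independence of $E_1$ and $E_2$ (since they concern disjoint arms), multiplies the two bounds, uses $\Pr[E_2]\ge 0.9$, and simplifies the prefactor $\tfrac{\eps}{2\sqrt{2\pi\sigma^2 K}}$ to $\Theta(1)$ via $\sigma^2 K=\Theta(\eps^2)$ to reach $[\Omega(\exp(-2/\sigma^2))]^{K-1}$.
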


\begin{proof}
Since event $E_1$ concerns all $a\ne a\dec$ and event $E_2$ concerns $a\dec$, we know that events $E_1$ and $E_2$ are independent. As a result, 
\begin{align*}
    \Pr[E_1\cap E_2]&=\Pr[E_1]\Pr[E_2]\\
    &\ge 0.9 \Pr[E_1]\\
    &=\Omega \left(\left[ \frac{\eps}{2\sqrt{2\pi(\eps^2/K)K}} \exp(- 2 / \sigma^2 ) \right]^{K-1}\right)\\
    &=\left[ \Omega(\exp(-2/\sigma^2)) \right]^{K-1},
\end{align*}
where we utilize the previous two lemmas.
\end{proof}



Having obtained the previous three probabilistic lemmas, we now prove a crucial lemma which is an extension of Lemma~\ref{lm:stuck} to the infinite $\cF$ setting. At this point, the key insight introduced in \Cref{sec:ext} plays a central role in the proof.

\begin{lemma}\label{lm:infinite-stuck}
Assume event $E_1$ and $E_2$ holds, then greedy algorithm only choose the decoy arm $a\dec$.
\end{lemma}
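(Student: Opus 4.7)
The plan is to prove the lemma by induction on round $t > T_0$. The main challenge, compared with the finite-$\cF$ case (Lemma~\ref{lm:stuck}), is that the MSE minimizer returned by the regression oracle need not equal $f\dec$: under continuous $\cF$ and reward noise, many functions close to $f\dec$ may fit the data as well or better. As noted in \Cref{sec:ext}, the right mental picture is to decouple the predictor from the decision: we do not need $f_t$ to be stable, only $\argmax_a f_t(\cdot)$.

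The key step is to construct, for every $t > T_0$, a \emph{particular} function $\tilde f_t \in \cF$ that is forced to be the unique MSE minimizer and whose unique argmax is $a\dec$. I would take $\tilde f_t(a) := \bar r_t(a)$ for every arm $a$. Since every arm has at least one sample (from the warm-up), $\tilde f_t$ attains $\MSE_t(\tilde f_t)=0$, so it is the unique global minimizer over any function class containing it. The work is then to show $\tilde f_t \in \cF$ and that $a\dec$ maximizes $\tilde f_t$.

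For membership in $\cF$, I would bound $\|\tilde f_t - f\dec\|_2$: event $E_2$ controls $|\bar r_t(a\dec) - f\dec(a\dec)|$ by $\eps/(4\sqrt K)$ (using $f^*(a\dec) = f\dec(a\dec)$), while event $E_1$ controls $|\bar r_t(a) - f\dec(a)|$ by $3\eps/(4\sqrt K)$ for every $a \neq a\dec$ (since the warm-up baseline is shifted by $-\eps/(2\sqrt K)$ and the radius is $\eps/(4\sqrt K)$). A direct sum-of-squares gives $\|\tilde f_t - f\dec\|_2^2 < \eps^2$, so by $f\dec \in \interior(\cF,\eps)$ we conclude $\tilde f_t \in \cF$, hence $f_t = \tilde f_t$. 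For the argmax, the shift in $E_1$ ensures $\bar r_t(a) < f\dec(a) - \eps/(4\sqrt K)$ strictly for every $a \neq a\dec$, while $\bar r_t(a\dec) \ge f\dec(a\dec) - \eps/(4\sqrt K)$ by $E_2$. Combined with $f\dec(a\dec) > f\dec(a)$ (best-arm uniqueness of $f\dec$), this yields $\bar r_t(a\dec) > \bar r_t(a)$, so $a_t = a\dec$.

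The induction closes because, under the inductive hypothesis that $a\dec$ has been the only arm chosen since warm-up, $\bar r_t(a)$ for $a \neq a\dec$ equals its warm-up value and is controlled by $E_1$, while $\bar r_t(a\dec)$ is controlled by $E_2$ (quantified over all $t > T_0$). The main subtlety is the calibration of the event widths: both the shift $\eps/(2\sqrt K)$ and the radius $\eps/(4\sqrt K)$ in $E_1$ are tuned so that $\tilde f_t$ lands inside an $\ell_2$-ball of radius $\eps$ around $f\dec$ \emph{and} so that a strict ordering $\bar r_t(a\dec) > \bar r_t(a)$ is guaranteed without any dependence on the (possibly arbitrarily small) gap $f\dec(a\dec) - f\dec(a)$. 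This calibration is where the analysis substantively departs from the finite-class proof.
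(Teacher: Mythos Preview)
Your proposal is correct and follows essentially the same approach as the paper's proof: both argue by induction, construct the empirical-mean function $\tilde f_t(a)=\bar r_t(a)$ (the paper calls it $f_t^{\text{emp}}$), show it lies in $\cF$ via the $\eps$-interior bound $\|\tilde f_t-f\dec\|_2<\eps$, conclude it is the unique MSE minimizer, and verify that $a\dec$ is its argmax using the shifted baseline in $E_1$ together with $E_2$. Your final paragraph on the calibration of the shift is, if anything, more explicit than the paper's own explanation of why the argument does not depend on the gap $f\dec(a\dec)-f\dec(a)$.
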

\begin{proof}
The proof is by induction. Assume by round $t$, the algorithm have only choose the decoy arm $a\dec$. Note that assuming event $E_1$ and $E_2$ holds. Consider the reward function $f_t^{\text{emp}}$ given by the empirical means: $f_t^{\text{emp}}(a)=\bar{r}_t(a)$ for all $a\in\cA$. By the induction assumption, $\bar{r}_t(a)=\bar{r}_{\text{warm}}(a)$ for each arm $a\neq a\dec$. Hence, by the definition of $E_1$ and $E_2$, we have
\[
\|f_t^{\text{emp}}-f\dec\|_2\le \sqrt{\sum_{a\in\cA}\left(\frac{3\eps}{4\sqrt{K}}\right)^2}=\frac{3\eps}{4}.
\]
Since $f\dec$ is an $\eps$-interior with respect to $\cF$, we have $f_t^{\text{emp}}\in\cF$.

Clearly, $f_t^{\text{emp}}\in\cF$ is the unique minimizer of $\MSE_t(\cdot)$. To see this, for any reward vector $f\neq f_t^{\text{emp}}$, we will have
\[
\abs{\bar{r}_t(a) - f_t^{\text{emp}}}=0 \le \abs{\bar{r}_t(a) - f(a)},
\]
with at least one inequality strict for one arm. Hence the regression oracle will choose $f_t=f_t^{\text{emp}}$.

Although $f_t^{\text{emp}}$ is not the same as $f\dec$, its optimal action is $a\dec$ when $E_1$ and $E_2$ happen. This is because
\begin{align*}
f_t^{\text{emp}}(a\dec)&\ge f^*(a\dec)-\frac{\eps}{4\sqrt{K}}\\&=f\dec(a\dec)-\frac{\eps}{4\sqrt{K}}\\&=(f\dec(a\dec)-\frac{\eps}{2\sqrt{K}})+\frac{\eps}{4\sqrt{K}}\\&>f_t^{\text{emp}}(a)
\end{align*}
for all $a\ne a\dec$, where the first inequality follows from the definition of $E_2$, the first equality follows from the definition of a decoy, and the last inequality follows from the definitions of $E_1$ and $f_t^{\text{emp}}$.
\end{proof}

\Cref{thm:inf}(b) directly follows from the above lemmas.

\end{document}